\DeclareMathOperator{\E}{\mathbb{E}} 
\DeclareMathOperator{\vol}{{Vol}} 
\newcommand{\norm}[1]{\left\lVert#1\right\rVert}
    \newcommand*{\tran}{^{\mkern-1.5mu\mathsf{T}}}
\newcommand{\circled}[1]{\raisebox{.5pt}{\textcircled{\raisebox{-.9pt} {#1}}}}
\newtheorem{theorem}{Theorem}
\newtheorem{lemma}{Lemma}
\newtheorem{definition}{Definition}
\definecolor{myblue}{rgb}{0 ,0, 1}
\begin{document}
\title{Safe MPC Alignment with Human Directional Feedback}

\author{Zhixian Xie,\quad Wenlong Zhang,\quad Yi Ren,\quad Zhaoran Wang,\quad George J. Pappas,\quad Wanxin Jin

\thanks{Zhixian Xie and Wanxin Jin are with the Intelligent Robotics and Interactive Systems (IRIS) Lab in the School for Engineering of Matter, Transport and Energy at Arizona State University. Wanxin Jin is the corresponding author. Emails: zxxie@asu.edu and wanxinjin@gmail.com. 

Wenlong Zhang is with  the Polytechnic School at Arizona State University. Email: wenlong.zhang@asu.edu.

Yi Ren is with the School for Engineering of Matter, Transport and Energy at Arizona State University. Email: yiren@asu.edu.

Zhaoran Wang is with the Departments of Industrial Engineering \& Management Sciences, Northwestern University. Email: zhaoranwang@gmail.com.

George J. Pappas is with the Department of Electrical and Systems Engineering at the University of Pennsylvania. Email: pappsg@seas.upenn.edu.

This work involved human subjects or animals in its research. Approval of all ethical and experimental procedures and protocols was granted by ASU’s Institutional Review Board under Application No. STUDY00020060, and performed in the line with Title 45 of the Code of Federal Regulations, Part 46 (45 CFR 46).

We acknowledge the funding support from ABRC grant RFGA2024-022-013.
}
}


\maketitle
\markboth{This is a preprint. The published version can be accessed at IEEE Transactions on Robotics.}{My Header}

\begin{abstract}
In safety-critical robot planning or control, manually specifying safety constraints or learning them from demonstrations can be challenging. In this article, we propose a certifiable alignment method for a robot to learn a safety constraint in its model predictive control (MPC) policy from human online directional feedback. To our knowledge, it is the first method to learn safety constraints from human feedback.  The proposed method is based on an empirical observation: human directional feedback, when available, tends to guide the robot toward safer regions.
 The method only requires the direction of human feedback to update the learning hypothesis space. It is certifiable, providing an upper bound on the total number of human feedback in the case of successful learning, or declaring the hypothesis misspecification, i.e., the true  safety constraint cannot be found within the specified hypothesis space.
We evaluated the proposed method in numerical examples and user studies with two simulation games. Additionally, we  tested the proposed method on a real-world Franka robot arm performing mobile water-pouring tasks. The results demonstrate the efficacy and efficiency of our method, showing that it enables a robot to successfully learn safety constraints with a small handful  (tens) of human directional corrections.
\end{abstract}

\begin{IEEEkeywords}
Learning from Human Feedback, Constraint Inference, Model Predictive Control, Human-Robot Interaction.
\end{IEEEkeywords}

\section{Introduction}
In robot policy design, safety is always the top priority,  especially in safety-critical applications, such as robots operating near humans, where safety violations can lead to irreversible losses.
While extensive work in safe control  \cite{wabersich2023data} and reinforcement learning \cite{gu2022review}  focuses on deducing robot    policy given safety constraints, less attention has been paid to how to specify the safety constraints in the first place. In many applications, specifying proper safety constraints can be challenging. First, \textit{explicit  boundary for safe robot behavior can be hard to model}. For example, when delivering food in a crowded restaurant, a service robot must balance a tray to avoid spills; this requires constraints on robot speed, food weight distributions, and interaction with its surroundings, all of which   can be complex to model.
Second,  \textit{user's perception of safety may vary}. In the same example, some diner may feel secure with the robot staying within a foot when delivering,  while others may feel uncomfortable with such proximity.

\begin{figure}
  \centering
  \includegraphics[width=0.7\linewidth]{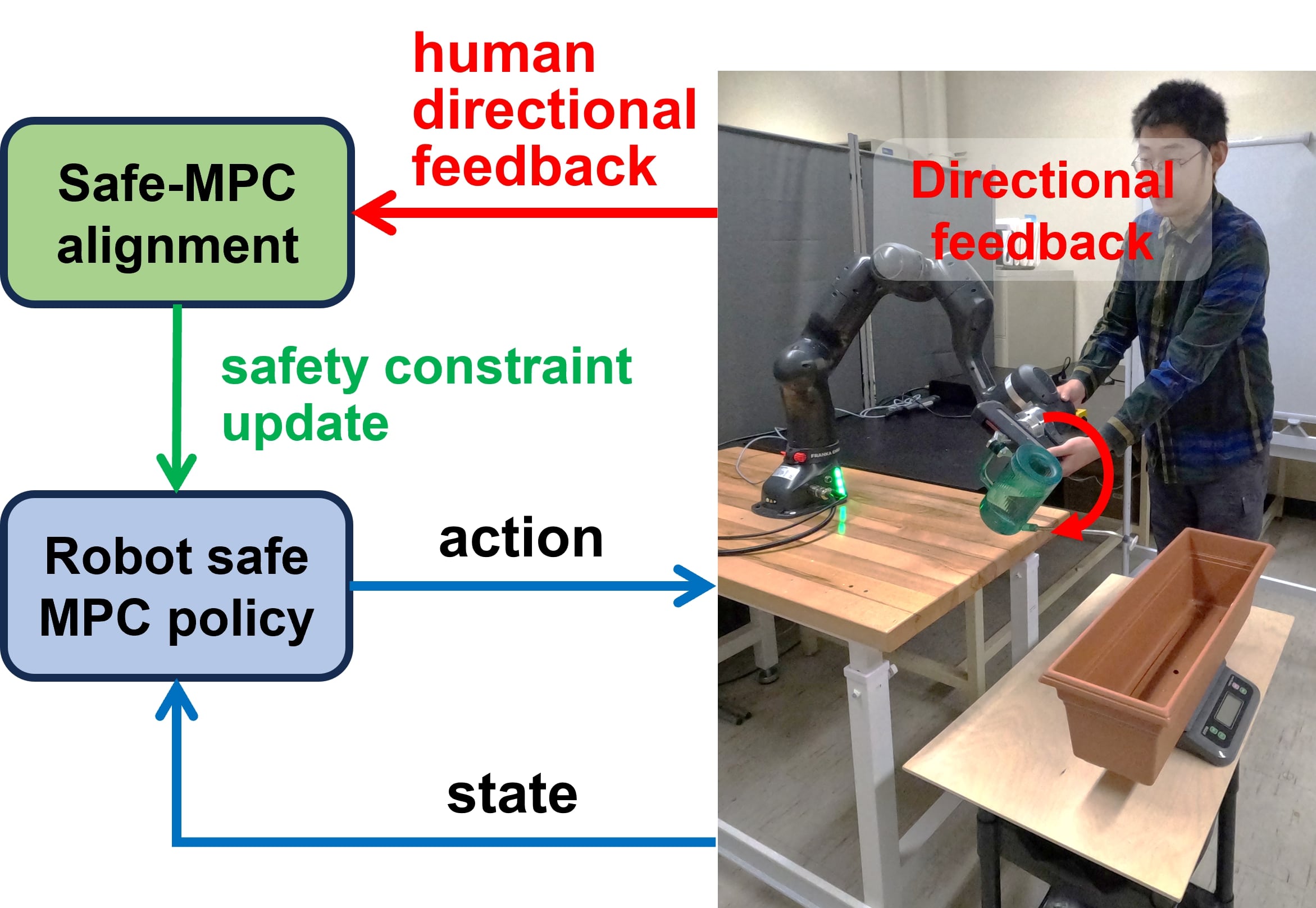}
  \caption{Illustration of the proposed safe MPC alignment. Our method online updates the safety constraint using human directional feedback, such that the robot eventually aligns the safe MPC policy with human intent for the safety-critical task.} \label{fig.holistic}
  \vspace{-15pt}
\end{figure}

The challenges in safety specification lead to two questions. Q1: can a robot  learn  safety constraints to align with human intent in an \emph{interactive} manner? Q2: can such a learning process be \emph{user-data efficient}?
Recent work \cite{chou2020learning_1, lindner2023learning, scobee2019maximum}  explored learning safety constraints from human demonstrations. However, acquiring safety-informative demonstrations could be difficult. Learning from human preference \cite{christiano2017deep, Sadigh2017ActivePL, ibarz2018reward, pmlr-v139-lee21i, hejna2023few, 10160439, pmlr-v164-myers22a} offers a new paradigm, where a user's preference of one trajectory over the other is used for robot learning. The preference, however,  can be difficult for a human to make, especially in early learning stages.  \cite{bajcsy2017learning, losey2018including, bajcsy2018learning, zhang2019learning, li2021learning, losey2022physical, mehta2023unified} proposed learning from human physical corrections. While promising,  \cite{jin2022learning} suggests those methods are potentially sensitive to correction magnitude:  corrections can lead to overshooting and inefficiency of learning. All those feedback-driven learning methods primarily focus on learning rewards instead of  safety constraints.

This paper aims to answer the previous two questions by proposing a new paradigm for a robot to efficiently learn safety constraints in its model predictive control (MPC) policy from human feedback. We name the method \textit{safe MPC alignment}. As  in Fig. \ref{fig.holistic}, a   user, observing the robot's potentially unsafe behavior, provides online corrections to the robot. Upon receiving human corrections, the robot immediately updates the safety constraints in its  MPC policy and continues with the updated policy. Our method only requires/utilizes \emph{direction} of a human correction \cite{jin2022learning}. 
For example, when a human user corrects an autonomous car, he/she only needs to give the command of `turn left' or `turn right', without dictating how much the car needs to turn. Directional corrections permit a consistently large and flexible correction space. Importantly, the proposed method guarantees the exponential convergence with finite rounds of human corrections, or outputs failure with certified hypothesis misspecification.  Specifically, we highlight our main contributions below.

(I) We propose the method of \textit{safe MPC alignment} from human directional correction. It enables a robot to learn a safe  MPC policy  from online human directional corrections. To our knowledge, it is the first work for interactive learning of safety constraints from human feedback. 

(II) We show that \textit{safe MPC alignment} is  certifiable: it guarantees exponential convergence with a finite number of human corrections and enables verification when the true safety constraint falls outside the specified hypothesis space.

(III) Extensive user studies have been conducted to validate the proposed method, including a real-world experiment on the Franka robot arm. The results show the effectiveness and efficiency of our method.

The remaining paper is organized as follows. Section II  reviews the related work, and Section III formulates the safe MPC alignment problem. The main method is presented in Section IV. Section V gives the implementation details and  the main results. Sections VI-VIII provide the experiments and user studies. Discussions and Conclusions are  in Section IX. 

\section{Related Work}
\subsection{Learning from  Demonstrations}
Under the general theme of learning from human inputs, the proposed method falls into the category of learning from demonstrations, where costs/rewards are learned from expert data using inverse reinforcement learning (IRL)   \cite{ng2000algorithms}. IRL methods typically run offline:  human demonstrations are first collected, and then reward/cost is learned \cite{ng2000algorithms,abbeel2004apprenticeship,ratliff2006maximum,ziebart2008maximum,mombaur2010human,englert2017inverse}. As pointed by \cite{ravichandar2020recent},  acquiring  demonstration data can be difficult in practice, due to either  the high degrees of freedom of a robot, or the lack of knowledge about the desired robot behavior  \cite{jain2015learning}.

In this work, we focus on interactive learning of safety constraints. Different from learning from demonstrations, the proposed method only requires a user to incrementally improve the current robot motion by providing directional feedback.

\vspace{-5pt}
\subsection{Learning from Human Feedback}
This line typically focuses on learning rewards/costs from human feedback  in forms of preferences and corrections. Human preference is typically given as user's ranking of two or multiple robot trajectories \cite{christiano2017deep, Sadigh2017ActivePL, ibarz2018reward, pmlr-v139-lee21i, hejna2023few, 10160439,pmlr-v164-myers22a}, which is modeled using the Bradley-Terry formulation \cite{bradley1952rank}.
Human corrections require a user to intervene and correct the robot at runtime, from which a intended  reward/cost can be inferred.  Human corrections can  include language command \cite{karamcheti2022shared, cui2023no, srivastava2023generating, zha23droc, shi2024yell} and physical corrections \cite{bajcsy2017learning,bajcsy2018learning,losey2022physical}. Since human corrections can be sparse in time, many methods \cite{bajcsy2017learning,losey2018including,bajcsy2018learning,zhang2019learning,li2021learning,losey2022physical,mehta2023unified} use trajectory deformation methods to interpolate sparse corrections as a continuous trajectory. Some strategies are further proposed to deal with  learning robustness and uncertainty, such as updating one feature at a time \cite{bajcsy2018learning} or formulating the problem in the  Kalman Filter  framework \cite{losey2018including}.

Our work belongs to the category of learning from human corrections. But instead of learning rewards/costs, we focus on learning safety constraints from human corrections, which has not been addressed before. Methodologically,  we propose a certifiable method, which guarantees the  learning convergence with finite human corrections, and certifies the hypothesis misspecification. Those  results are missing in existing methods.

\vspace{-5pt}
\subsection{Learning  Constraints from Demonstrations}
An increasing number of work focuses on learning safety constraints from demonstrations. One straightforward idea is to cast the problem as a binary classification. \cite{chou2020learning_1, chou2020learning_2} consider the trajectories of lower cost as the unsafe samples. \cite{gaurav2023learning} proposes adjusting the constraints toward decreasing the safety margin of feasible policies.
Safety constraints can also be learned via mixed integer programming \cite{chou2020learning_1,chou2020learning_2}, pessimistic adjustment \cite{gaurav2023learning}, and no-regret algorithm \cite{kim2024learning}. Those methods typically require collecting negative (unsafe) samples, which can be challenging in real robot systems.
Another line of work attempts to learn safety constraints only from safe demonstration. For example, \cite{chou2020learning_3,chou2021uncertainty,chou2022gaussian} infer the safety constraints by exploiting  the optimality conditions of demonstrations. Different probabilistic formulations are also used, such as maximum likelihood \cite{scobee2019maximum,stocking2022maximum,malik2021inverse},  causal entropy \cite{mcpherson2021maximum,baert2023maximum}, and Bayesian inference \cite{park2020inferring,papadimitriou2022bayesian}. Among them, safety constraints can be represented as a set of safe states \cite{chou2020learning_3}, the belief of  constraints \cite{chou2021uncertainty},  Gaussian process \cite{chou2022gaussian}, or a convex hull in policy space \cite{lindner2023learning}.

Using demonstrations, the vast majority of the above methods formulate the constraint learning  in  offline settings, and the learning process usually involves repetitive policy optimization. Those methods can be difficult to be deployed online, where a robot is expected to quickly update its safety estimate and respond to human feedback. To fill this gap, we will develop the first method, to our best knowledge, to enable online learning of safe constraints from human feedback.

\section{Preliminary and Problem Formulation}

\subsection{Safe Model Predictive Control}
Consider a robot equipped with a safe model predictive control (Safe MPC) policy defined below, where the safety constraint is learnable and parameterized by $\boldsymbol{\theta}\in\mathbb{R}^r$. 
\begin{equation}\label{equ.robot_mpc}
 \begin{aligned}
          \min_{\boldsymbol{u}_{0:T-1}}  & \quad J( \boldsymbol{u}_{0:T-1}; \boldsymbol{x}_0)=\sum\nolimits_{t=0}^{T-1} c(\boldsymbol{x}_t, \boldsymbol{u}_t)+h(\boldsymbol{x}_{T})\\
   \text{subject to} &\quad \boldsymbol{x}_{t+1}=\boldsymbol{f}(\boldsymbol{x}_t, \boldsymbol{u}_t) \quad \text{given}\quad \boldsymbol{x}_0\\
    & \quad {g}_{\boldsymbol{\theta}}(\boldsymbol{x}_{0:T}, \boldsymbol{u}_{0:T-1}) \leq 0
 \end{aligned}.
\end{equation}
Here, $\boldsymbol{x}_t\in\mathbb{R}^n$ and $\boldsymbol{u}_t\in\mathbb{R}^m$ are robot state and action, respectively;  $t=0,1,2,...,T$ is the  time step over a short prediction horizon $T$, starting from a given $\boldsymbol{x}_0$. Define $\boldsymbol{x}_{0:T}=(\boldsymbol{x}_0, \boldsymbol{x}_1,..., \boldsymbol{x}_T)$ and $\boldsymbol{u}_{0:T-1}=(\boldsymbol{u}_0, \boldsymbol{u}_1,..., \boldsymbol{u}_{T-1})$; $J(\cdot)$ is the given  cost function including the running cost  ${c}(\cdot)$ and final cost  $h(\cdot)$; $\boldsymbol{f}(\cdot, \cdot)$ is the system dynamics; 
${g}_{\boldsymbol{\theta}}(\cdot)$ is the safety constraint parameterized
by $\boldsymbol{\theta}\in\mathbb{R}^r$.
Given starting state $\boldsymbol{x}_0$ and   parameter $\boldsymbol{\theta}$, the optimal plan  (solution) to (\ref{equ.robot_mpc}), denoted as $\boldsymbol{\xi}_{\boldsymbol{\theta}}=\{\boldsymbol{u}^{\boldsymbol{\theta}}_{0:T-1}, \boldsymbol{x}^{\boldsymbol{\theta}}_{0:T}\}$, is  an implicit function of $\boldsymbol{\theta}$.

In a receding horizon implementation,  $\boldsymbol{x}_0$ is set as the real robot's current state (estimation)  $\boldsymbol{{x}}_k^{\text{real}}$, where $k=0,1,2,...,$ is  the global rollout time step of the Safe MPC policy. Only the first action $\boldsymbol{u}^{\boldsymbol{\theta}}_0$  in  each $\boldsymbol{\xi}_{\boldsymbol{\theta}}$  is applied to the robot to move the robot to the next state $\boldsymbol{x}^{\text{real}}_{k+1}$. This process repeats at each rollout time step $k=0,1,...$. Such a receding horizon implementation leads to a closed-loop  policy, denoted as $\boldsymbol{\pi}_{\boldsymbol{\theta}}$, mapping from robot current state $\boldsymbol{x}_k^{\text{real}}$ to its action $\boldsymbol{u}^{\boldsymbol{\theta}}_0$.

\subsection{Safe Asymptotic Approximation of Safe MPC}
We present a penalty-based approximation to the  Safe MPC in (\ref{equ.robot_mpc}). Define trajectory variable $\boldsymbol \xi =\{ \boldsymbol u_{0:T-1}, \boldsymbol x_{0:T} \}$ and
 the following penalty objective:
\begin{equation}
        B(\boldsymbol \xi, \boldsymbol \theta)=J(\boldsymbol{\xi}) - \gamma \ln (-g_{\boldsymbol{\theta}}(\boldsymbol \xi)).
    \label{equ.barrier}
\end{equation}
Here, $B(\boldsymbol \xi, \boldsymbol \theta)$ includes the cost $J(\boldsymbol{\xi})$  and safety constraint $g_{\boldsymbol{\theta}}(\boldsymbol \xi)$ in a logarithmic barrier transformation. $\gamma>0$ is  the barrier parameter, controlling the weight of safety penalty $-\ln (-g_{\boldsymbol{\theta}}(\boldsymbol \xi))$ relative the  cost. With $ B(\boldsymbol \xi, \boldsymbol \theta)$, one can establish the following penalty-based  MPC
\begin{equation}\label{equ.robot_mpc_approx}
          \min_{\boldsymbol{u}_{0:T-1}}   \quad B(\boldsymbol \xi, \boldsymbol \theta)\qquad
   \text{subject to} \quad \boldsymbol{x}_{t+1}=\boldsymbol{f}(\boldsymbol{x}_t, \boldsymbol{u}_t).
\end{equation}
We denote the optimal plan  to   (\ref{equ.robot_mpc_approx}) as $\boldsymbol \xi_{\boldsymbol \theta}^\gamma=\{\boldsymbol{u}_{0:T-1}^{\boldsymbol{\theta}, \gamma}, \boldsymbol{x}_{0:T-1}^{\boldsymbol{\theta}, \gamma}\}$. The following result  from  \cite{jin2021safe} states  the penalty-based MPC (\ref{equ.robot_mpc_approx})  is an asymptotic, safe approximation to  Safe MPC (\ref{equ.robot_mpc}).

\begin{lemma}(Theorem 2 in \cite{jin2021safe}) If the  Safe MPC (\ref{equ.robot_mpc}) satisfies certain mild conditions (i.e., local second-order condition and strict complementarity), then for any small $\gamma>0$, the penalty-based MPC (\ref{equ.robot_mpc_approx}) has a (local) solution $\boldsymbol \xi_{\boldsymbol \theta}^\gamma$, and $\boldsymbol \xi_{\boldsymbol \theta}^\gamma$ is safe with respect to the original constraint, i.e.,  $g_{\boldsymbol \theta}(\boldsymbol \xi_{\boldsymbol \theta}^\gamma)<0$, and $\boldsymbol \xi_{\boldsymbol \theta}^\gamma$ satisifies the first-order condition of (\ref{equ.robot_mpc_approx}). Furthermore, $\boldsymbol \xi_{\boldsymbol \theta}^\gamma$ is asymptotically approximate $\xi_{\boldsymbol \theta}$:   $\boldsymbol \xi_{\boldsymbol \theta}^\gamma\rightarrow\boldsymbol \xi_{\boldsymbol \theta}$ as $\gamma\rightarrow0$.
\label{lemma sol}
\end{lemma}

The above lemma asserts that solving the Safe MPC (\ref{equ.robot_mpc}) can be  replaced by solving penalty-based MPC (\ref{equ.robot_mpc_approx}). The solution approximation is controlled  by $\gamma$ for arbitrary accuracy:  $\boldsymbol \xi_{\boldsymbol \theta}^\gamma\rightarrow\boldsymbol \xi_{\boldsymbol \theta}$ as $\gamma\rightarrow0$. A good approximation is achieved by a smaller $\gamma$. On the contrary, if $\gamma$ is larger, penalty-based MPC  will lead to more safety-conservative solution $\boldsymbol \xi_{\boldsymbol \theta}^\gamma$. In the development below, we fix the parameter $\gamma$ in (\ref{equ.robot_mpc_approx}). We denote the penalty-based MPC policy in (\ref{equ.robot_mpc_approx}) as $\boldsymbol{\pi}^\gamma_{\theta}$.

\subsection{Problem Formulation \label{sec.prob_procedure}}

\subsubsection{Representation of Human Intended Safety}
Consider a human is supervising the robot's behavior.  In Safe  MPC (\ref{equ.robot_mpc}), we hypothesize that user-intended safety can be represented by a subset of the trajectory space:
\begin{equation}\label{equ.user_safety}
\begin{split}
\mathcal{S}(\boldsymbol{\theta}_H)=\{
(\boldsymbol{x}_{0:T}, \boldsymbol{u}_{0:T-1})
\,\,|\,\,
\ {g}_{\boldsymbol{\theta}_H}(\boldsymbol{x}_{0:T}, \boldsymbol{u}_{0:T-1}) \leq 0\}.
\end{split}
\end{equation}
Here,  $\boldsymbol{\theta}_H$  is from a set $\boldsymbol{\theta}_H\in\Theta_H$, encoding all possible user intended safety. $\Theta_H$ is implicit and unknown to the robot. For the robot policy $\boldsymbol{\pi}_{\boldsymbol{\theta}}^\gamma$, if   $\boldsymbol{\theta}\in\Theta_H$, the user considers the planned robot motion to be safe; otherwise if $\boldsymbol{\theta}\notin\Theta_H$,  the robot motion by $\boldsymbol{\pi}_{\boldsymbol{\theta}}^\gamma$  might not be safe at certain time.
We use an implicit set $\Theta_H \in\mathbb{R}^r$, instead of a single point $\boldsymbol{\theta}_H$, to capture the potential variance of user intents. We suppose that a volume measure of $\Theta_H$, $\vol(\Theta_H)$, is non-zero,  and a user intent $\boldsymbol{\theta}_H\in\Theta_H$ can vary at different steps and contexts, but always within $\Theta_H$.

\subsubsection{Human Directional Correction}
As shown in Fig. \ref{fig-update}, we consider human correcting a robot online while the robot is executing  $\boldsymbol{\pi}_{\boldsymbol{\theta}}^\gamma$. Specifically, let $i$ be the index of learning update, and suppose  robot current policy is $\boldsymbol{\pi}_{\boldsymbol{\theta}_{i}}^\gamma$. During rollout of $\boldsymbol{\pi}_{\boldsymbol{\theta}_{i}}^\gamma$,
the robot receives a human correction $\boldsymbol{a}_{k_i}\in \mathbb{R}^{m}$ at the  rollout time step $k_{i}$.
Here, we do not pose any assumption on correction time $k_{i}$ and its duration due to the sparsity or intermittency of human feedback. In other words, the human can apply one or multiple corrections  at any time  as  necessary. But we assume  human correction $\boldsymbol{a}_{k_i}$ is in the robot action space. For example, in autonomous driving where the robot action space is steering angle and acceleration, human can directly apply feedback with the steering wheel and pedals. In some other applications, the action space correction can be achieved via a specific user interface. For example, to correct a torque-controlled robot arm, human's correction on the end-effector can be mapped into joint torque via the Jacobian transpose. The reason why we only consider the action space correction is because first, corrections in input spaces may be easier to implement, as shown in our later experiments, and second, corrections in state spaces can be infeasible for the under-actuated robot system.

After receiving a human correction $\boldsymbol{a}_{k_i}$, the robot updates the safety parameter $\boldsymbol{\theta}_i\rightarrow\boldsymbol{\theta}_{i+1}$ in (\ref{equ.robot_mpc}) for its Safe MPC. Then, the robot continues with  $\boldsymbol{\pi}_{\boldsymbol{\theta}_{i+1}}^\gamma$ until the next possible user correction. This process repeats until the convergence of the policy update or the user is satisfied with the robot's motion. 

\begin{figure}
  \centering
  \includegraphics[totalheight=1.7in]{./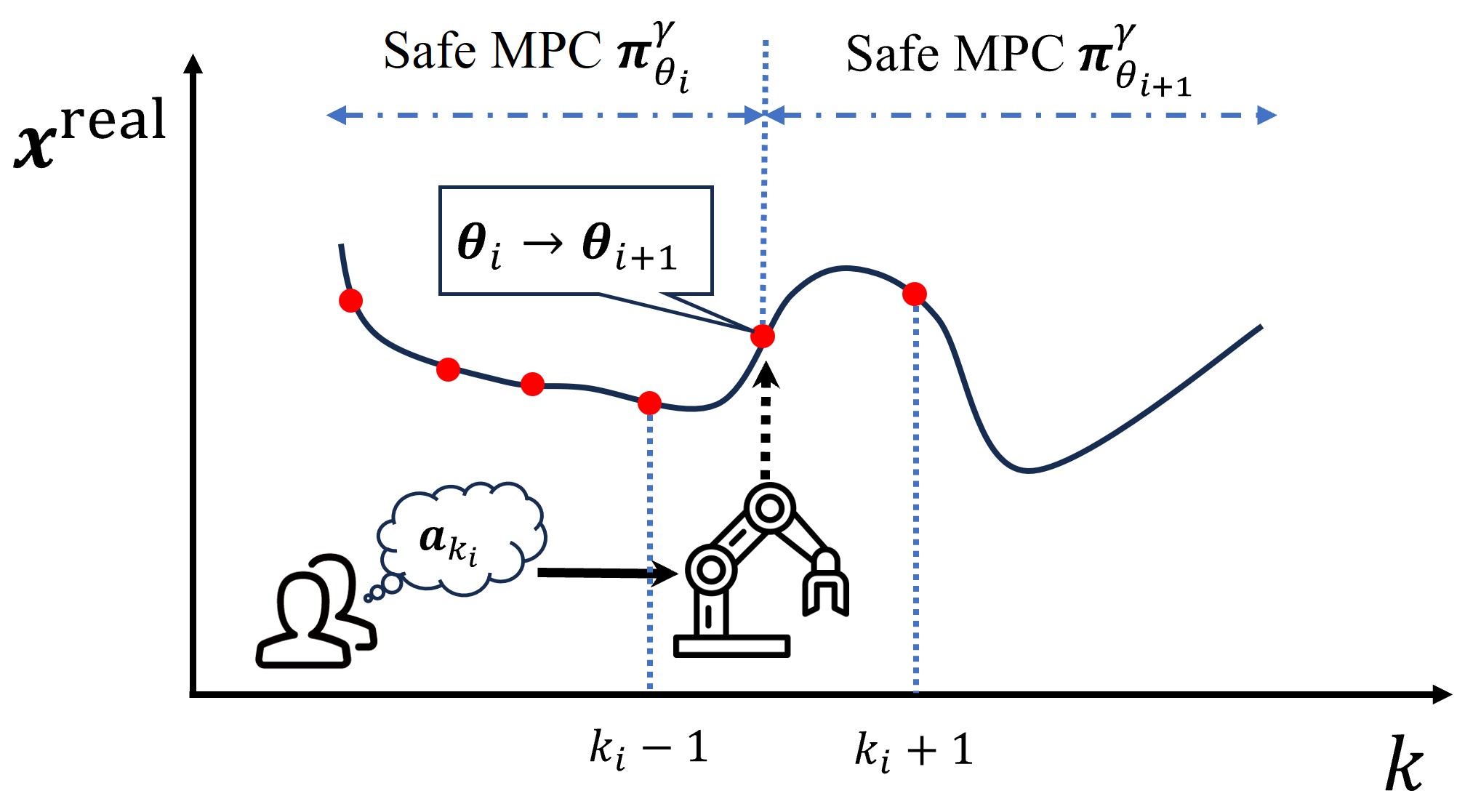}
  \caption{Update of Safe MPC policy with human directional corrections.} \label{fig-update}
\end{figure}


While the correction $\boldsymbol{a}$ given by human is real-valued, only its direction is used in the learning process. This means that we will use the sign (for scalar signal) or its normalized unit vector (for vector-valued correction signal) for learning update. The magnitude of the correction is NOT taken into account in the learning process. 
As we will show later, directional feedback provides a consistently large and flexible correction space, effectively overcoming the over-shooting correction, compared to magnitude feedback.


\subsubsection{Robot  Operation Safety and Emergency Stop}
\label{sec.human_supervise}
In the aforementioned learning procedure, the robot will respond to human correction by executing the updated safe MPC policy $\pi_{\boldsymbol{\theta}}^{\gamma}$. However, it is possible that the robot may still exit the user-intended safety. In such cases, the human may trigger an emergency stop to stop the robot and potentially restart it from a desired safety region. In such case, the executed motion before the emergency stop is assumed safe. In other words, until an emergency stop is triggered, human correction typically occur when the robot is still in the intended safety set; otherwise, the correction would be useless. For instance, correcting a vehicle after it is crashed is futile.

\subsubsection{Parametric Safety Constraints}\label{sec.safety_param}
In  (\ref{equ.user_safety}), the safety set is defined as a zero sublevel set of  ${g}_{\boldsymbol{\theta}}$. If ${g}_{\boldsymbol{\theta}}$ has sufficient representation power,  the safety set can represent a complex safety region. In the following, we focus on a weighted-features parameterization of ${g}_{\boldsymbol{\theta}}$, which has been commonly used by the prior work on learning cost/reward \cite{lindner2023learning, jin2022learning, zhang2023dual, biyik2022aprel, bobu2020quantifying}. Specifically, 
\begin{multline}
    \label{equ.safey_param}
    {g}_{\boldsymbol{\theta}}(\boldsymbol{\xi})=\phi_0(\boldsymbol{\xi})+\theta_1\phi_1(\boldsymbol{\xi})
    +\theta_2\phi_2(\boldsymbol{\xi})+\cdots+\theta_r\phi_r(\boldsymbol{\xi}),
\end{multline}
with $\boldsymbol{\theta}=[\theta_1, \theta_2, ...\theta_r]\tran\in \mathbb{R}^r$ and $\boldsymbol{\phi}=[\phi_1,\phi_2, \dots, \phi_r]\tran$  the learnable weight and feature vectors, respectively.  We fix the weight of $\phi_0$ as unit, because otherwise,  $\boldsymbol{\theta}$ has scaling ambiguity, i.e., any scaled  $c\cdot\boldsymbol{\theta}$ with $c>0$ lead to the same constraint. In this work, we assume that the  features in $\boldsymbol{\phi}$ are given as a priori. These features, defined in system states (e.g., position, velocity) and actions, can be either manually defined or obtained from pre-trained neural representations, leveraging prior knowledge of the environment and tasks, as demonstrated in our experiments. The feature selection itself is beyond the scope of this paper and is left for future work. A more detailed discussion of this assumption  and its limitations is  in Section~\ref{sec.feature_selection_limitation}.

\subsubsection{Problem of Safe MPC Alignment}
Following the procedure of human correction and robot policy update in Section \ref{sec.prob_procedure}, we aim to develop a policy update rule: $\boldsymbol{\theta}_i\rightarrow\boldsymbol{\theta}_{i+1}$ for each received human  correction, such that $\boldsymbol{\theta}_i\rightarrow\boldsymbol{\theta}_H \in \Theta_H$, as $i=1,2,3,\dots,$. i.e., safety constraints in the robot's safe MPC policy will be aligned with human intent.

\section{Overview of Safe MPC Alignment}
\subsection{Hypothesis on Human Directional Corrections \label{sec.idea_hypothesis}}
Informally, we have the following intuitions on human feedback. First, human corrections, on average, improve the robot's safety. Second, human correction  is typically proactive and happens before the robot leaves the human-intended safety set. In other words, human correction is made when the robot is still in  $\mathcal{S}(\boldsymbol{\theta}_H)$ in (\ref{equ.user_safety}). For example, it is useless to correct a vehicle after its crash. Third, human feedback is typically safety-margin aware: as a robot approaches the safety boundary, human users tend to prioritize safety over the task cost. For instance, when a vehicle moves closer to obstacles, humans may instinctively shift their focus to avoiding collisions rather than pursuing the original target.
In the following, we will formalize those intuitions.

Without loss of generality, suppose the motion plan  $\boldsymbol{\xi}_{\boldsymbol{\theta}}^\gamma$  is a local solution to the penalty MPC (\ref{equ.robot_mpc_approx}), and when the robot execute the first action $\boldsymbol{u}_0^{\boldsymbol{\theta},\gamma}$ (receding horizon implementation),  a human  applies a directional  correction $\boldsymbol{a}$ in robot action space. In this subsection, the subscription $k_i$ is omitted for notation simplicity. Define
\begin{equation}\label{equ.a_extension}
    \mathbf{a}=(\boldsymbol a\tran ,0,0,...) \in \mathbb{R}^{mT}.
\end{equation}
We assume there is a nonzero volume set $\bar\Theta_H$ (containing a ball) in user implicit set $\Theta_H$, i.e.,  $\bar\Theta_H\subseteq\Theta_H$, such that for any $\boldsymbol{\theta}_H \in \bar\Theta_H$, the human correction $\mathbf{a}$, on average, indicates a direction of lowering the penalty value  $B(\boldsymbol{\xi}_{\boldsymbol{\theta}}^\gamma, \boldsymbol \theta_H)$.  Concretely, it means $\mathbf{a}$ on averages points in the direction of the negative gradient  of $B(\boldsymbol \xi, \boldsymbol \theta_H)$  at $\boldsymbol{\xi}_{\boldsymbol{\theta}}^\gamma$, i.e.,
\begin{multline}
    \forall \boldsymbol{\theta}_H \in \bar\Theta_H,\ 
    \E_{\mathbf{a}}\left\langle \mathbf{a},-\nabla B(\boldsymbol\xi_{\boldsymbol \theta}^\gamma, \boldsymbol \theta_H) \right\rangle=\\
    \left\langle \mathbf{\bar{a}},-\nabla B(\boldsymbol\xi_{\boldsymbol \theta}^\gamma, \boldsymbol \theta_H) \right\rangle \geq 0.
    \label{equ.core-neq}
\end{multline}
Here, $\left\langle \cdot,\cdot \right\rangle$ denotes the  inner product operation, and ${\E}_{\mathbf{a}}(\mathbf{a})=\mathbf{\bar{a}}$ denotes the expectation value of $\mathbf{a}$. $\nabla B(\boldsymbol \xi, \boldsymbol \theta)$ denotes the full gradient of $B$ with respect to $\boldsymbol u_{0:T-1}$ evaluated at trajectory $\boldsymbol \xi=\{\boldsymbol u_{0:T-1}, \boldsymbol x_{0:T}\}$. Similar notations apply to $J$ and $g$ below. Since $\nabla B(\boldsymbol \xi, \boldsymbol \theta)$ is not dependent on $\mathbf{a}$, the expectation can be moved inside the inner product. Because state sequence $\boldsymbol x_{0:T}$ is the rollout of $\boldsymbol u_{0:T-1}$ on dynamics, the differentiation goes through the dynamics equation. 

Equation (\ref{equ.core-neq}) shows that although the human correction $\boldsymbol{a}$ is provided as a real-valued vector, only its direction—not its magnitude—affects the inequality.
This brings one advantage of directional correction compared to the magnitude-based correction:  the allowable human correction satisfying (\ref{equ.core-neq}) always account for half of the action space, making directional corrections more accessible for human users, as shown in Fig. \ref{fig:comparison}. Recall that $B(\boldsymbol \xi, \boldsymbol \theta)=J(\boldsymbol{\xi}) - \gamma \ln (-g_{\boldsymbol{\theta}}(\boldsymbol \xi))$, by extension of the gradient, (\ref{equ.core-neq}) becomes
\begin{multline}
\label{equ.expanded_constraint}
\forall \boldsymbol{\theta}_H \in \bar\Theta_H, \left< \mathbf{\bar{a}},-\nabla J(\boldsymbol\xi_{\boldsymbol \theta}^\gamma) \right> +  \\
\gamma\frac{1}{- g_{\boldsymbol{\theta}_H}(\boldsymbol\xi_{\boldsymbol \theta}^\gamma)}\left< \mathbf{\bar{a}},-\nabla g_{\boldsymbol \theta_H}(\boldsymbol\xi_{\boldsymbol \theta}^\gamma) \right> \geq 0.
\end{multline}

\begin{figure}
\centering
\includegraphics[width=0.20\textwidth]{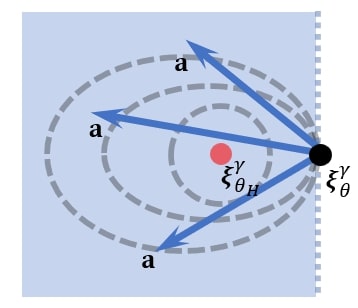}
\caption{Illustration of directional correction in the space of robot plan $\boldsymbol \xi$s. The grey circles represent the level contours of $B(\boldsymbol \xi, \boldsymbol \theta_H)$. The black dot stands for the current motion plan $\boldsymbol\xi_{\boldsymbol \theta}^\gamma$ and the red point stands for optimal plan  $\boldsymbol\xi_{\boldsymbol{\theta}_H}^\gamma$ with $\theta_H$. The blue region stands for the half-space of all possible human correction under our assumption (\ref{equ.core-neq}). Some samples of human corrections are shown by blue arrows.}

\label{fig:comparison}
\end{figure}

\noindent
For the above equation, we have the following interpretations.

(I) The first and second terms of (\ref{equ.expanded_constraint}) correspond to human correction toward lowering the cost and improving the safety of $\boldsymbol{\xi}_{\boldsymbol{\theta}}^{\gamma}$, respectively. 
$\gamma$ weighs between two improvements.

(II) If we interpret the value of $-{g_{\boldsymbol{\theta}_H}(\boldsymbol\xi_{\boldsymbol \theta}^\gamma)}$ as the safety margin of the plan  $\boldsymbol\xi_{\boldsymbol \theta}^\gamma$, (\ref{equ.expanded_constraint}) suggests that the human correction is safety-margin aware: the closer the robot motion get to the unsafe region, the more will human improve the robot safety than minimizing the cost. This property is consistent with our previous intuition.

(III)  (\ref{equ.core-neq}) implies 
$B(\boldsymbol\xi_{\boldsymbol \theta}^\gamma , \boldsymbol \theta_H)$ is well-defined in the first place:
\begin{equation}\label{equ.implication_constraint}
    \forall \boldsymbol{\theta}_H \in \bar\Theta_H,\ 
    \quad g_{\boldsymbol{\theta}_H}(\boldsymbol\xi_{\boldsymbol \theta}^\gamma)  < 0.
\end{equation}
This corresponds to the third intuition: a user typically gives correction only when the robot is in user intended safety region. 


In sum, the two constraints(\ref{equ.core-neq}) and (\ref{equ.implication_constraint})  formalize the intuitions we have observed at the beginning of this subsection. Those constraints  will be used in the robot learning next.

\subsection{Main Algorithm: Learning by Hypothesis Space Cutting}
\label{sec.alg_pipeline}
To find $\boldsymbol \theta_H\in\bar\Theta_H\subseteq\Theta$ for the Safe MPC, our idea is that for each human  correction $i=0,1,2,....$, the robot performs one learning iteration, by updating a hypothesis space $\Theta_i\in\mathbb{R}^r$, a set containing all possible $\bar\Theta_H$.  Initially, the robot may choose a  large box hypothesis space $\Theta_0$:
\begin{equation}\label{equ.hypothesis}
    \Theta_0=\{\boldsymbol \theta \ |\ \boldsymbol{c}_l \leq \boldsymbol \theta \leq  \boldsymbol{c}_u\} \subset \mathbb{R}^r,
\end{equation}
in which $\boldsymbol{c}_l, \boldsymbol{c}_u$ are the vectors of lower and upper bounds of the box space, respectively, and the inequalities are applied element-wise. By saying a $\Theta_0$ large enough, we assume
\begin{equation}\label{equ.initial_hypothesis}
\Bar \Theta_H \subset \Theta_0.
\end{equation}
Later, we will show our method can self-access the misspecification, where (\ref{equ.initial_hypothesis}) is not met.

Recall the procedure of human  correction in Section \ref{sec.prob_procedure} and Fig. \ref{fig-update}. 
At $i$-th learning update, $i=1,2,3...$, the robot executes $\boldsymbol{\pi}_{\boldsymbol{\theta}_i}^\gamma$ with $\boldsymbol{\theta}_i\in \Theta_{i-1}$. The user gives a  correction $\boldsymbol{a}_{k_i}$ at  rollout step ${k_i}$. By letting $\mathbf{\bar{a}}_i =(\boldsymbol{a}_{k_i}^T, 0, 0,...)$ as in (\ref{equ.a_extension}), we can  establish the following two constraints  from the previous hypotheses (\ref{equ.core-neq}) and (\ref{equ.implication_constraint}):
\begin{equation}
    \mathcal{C}_i=\left\{\boldsymbol \theta | \left< \mathbf{\bar{a}}_i,\nabla B(\boldsymbol\xi_{\boldsymbol \theta_i}^\gamma ,\boldsymbol \theta) \right> \leq 0,\,  \,\,    g_{\boldsymbol \theta}(\boldsymbol\xi_{\boldsymbol \theta_i}^\gamma ) <0\right\},
    \label{equ.C_steps}
\end{equation}
which we refer to as the \textit{cutting set}. 
Using $\mathcal{C}_i$, one can update the hypothesis space 
\begin{equation}\label{equ.intersection}
    \Theta_{i}=\Theta_{i-1} \cap \mathcal{C}_i.
\end{equation}
The above update can be viewed as using  (\ref{equ.C_steps}) to cut the  hypothesis space $\Theta_{i-1}$ to remove  portions where $\bar\Theta_H$ is impossible. The new hypothesis space $\Theta_{i}$ will be  used for the next iteration.  With the above update, we  have the following result.
\begin{lemma}\label{lemma.bar_theta}
Given $\Bar \Theta_H \subset\Theta_0$, if one follows the update (\ref{equ.intersection}),  then $\Bar \Theta_H \subset\Theta_i$ for all $i=1,2,3,...$.
\end{lemma}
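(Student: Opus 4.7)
The plan is a straightforward induction on the iteration index $i$, with the entire content of the argument coming from matching the definition of the cutting set $\mathcal{C}_i$ against the two hypotheses (\ref{equ.core-neq}) and (\ref{equ.implication_constraint}) already imposed on the human directional feedback.

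First I would set up the induction. The base case $i=0$ is the assumption $\bar\Theta_H\subset\Theta_0$ stated in (\ref{equ.initial_hypothesis}). For the inductive step, I would assume $\bar\Theta_H\subset\Theta_{i-1}$ and aim to show $\bar\Theta_H\subset\Theta_i$. Since $\Theta_i=\Theta_{i-1}\cap\mathcal{C}_i$ by (\ref{equ.intersection}), and since the inductive hypothesis already gives $\bar\Theta_H\subset\Theta_{i-1}$, the problem reduces to verifying the single inclusion $\bar\Theta_H\subset\mathcal{C}_i$.

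Next I would unpack $\mathcal{C}_i$ from (\ref{equ.C_steps}): a parameter $\boldsymbol\theta$ lies in $\mathcal{C}_i$ iff (a) $\langle \mathbf{\bar a}_i,\nabla B(\boldsymbol\xi_{\boldsymbol\theta_i}^\gamma,\boldsymbol\theta)\rangle\le 0$ and (b) $g_{\boldsymbol\theta}(\boldsymbol\xi_{\boldsymbol\theta_i}^\gamma)<0$. I would then take an arbitrary $\boldsymbol\theta_H\in\bar\Theta_H$ and verify these two conditions directly. Condition (b) is exactly the content of (\ref{equ.implication_constraint}) applied to the current plan $\boldsymbol\xi_{\boldsymbol\theta_i}^\gamma$. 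Condition (a) is the directional-feedback hypothesis (\ref{equ.core-neq}) after moving the minus sign across the inner product and recalling that $\mathbf{\bar a}_i$ is the expectation of the stochastic correction $\mathbf{a}_i$ at the $i$-th step; equivalently, one can read it directly off the expanded form (\ref{equ.expanded_constraint}). Thus $\boldsymbol\theta_H\in\mathcal{C}_i$, so $\bar\Theta_H\subset\mathcal{C}_i$ and consequently $\bar\Theta_H\subset\Theta_i$, closing the induction.

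There is essentially no technical obstacle here; the lemma is a bookkeeping statement whose only role is to certify that the cutting procedure never excises a parameter consistent with the feedback hypotheses. The one subtlety worth flagging explicitly in the write-up is that (\ref{equ.core-neq}) is an inequality on the expectation $\mathbf{\bar a}_i$ rather than on a single realization $\boldsymbol a_{k_i}$, so I would be careful to state that the cutting plane in $\mathcal{C}_i$ is correct in expectation and that Lemma \ref{lemma.bar_theta} is the corresponding in-expectation monotonicity statement; this framing also sets up the convergence and misspecification results promised later in the paper.
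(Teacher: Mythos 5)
Your proposal is correct and follows essentially the same route as the paper: an induction on $i$ whose inductive step reduces to the inclusion $\bar\Theta_H\subset\mathcal{C}_i$, which the paper likewise justifies by appealing to the hypotheses (\ref{equ.core-neq}) and (\ref{equ.implication_constraint}) when stating (\ref{equ.C_steps}). Your extra unpacking of the two membership conditions, and the remark about (\ref{equ.core-neq}) holding for the expectation $\mathbf{\bar a}_i$ rather than a single realization, is consistent with the paper and adds useful detail but no new ideas.
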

\begin{proof}
    See the proof in Appendix \ref{proof.lemma bar_theta}.
\end{proof}

As  in Fig. \ref{fig.pipeline_cutting},
the procedure of robot learning is to cut and move the hypothesis space based on human directional correction. Therefore, we name our learning method  \textit{Safe MPC Alignment by Hypothesis Space Cutting}. As a summary, we present the three main steps of our algorithm below.
\begin{figure}[h]
\centering
\includegraphics[width=0.33\textwidth]{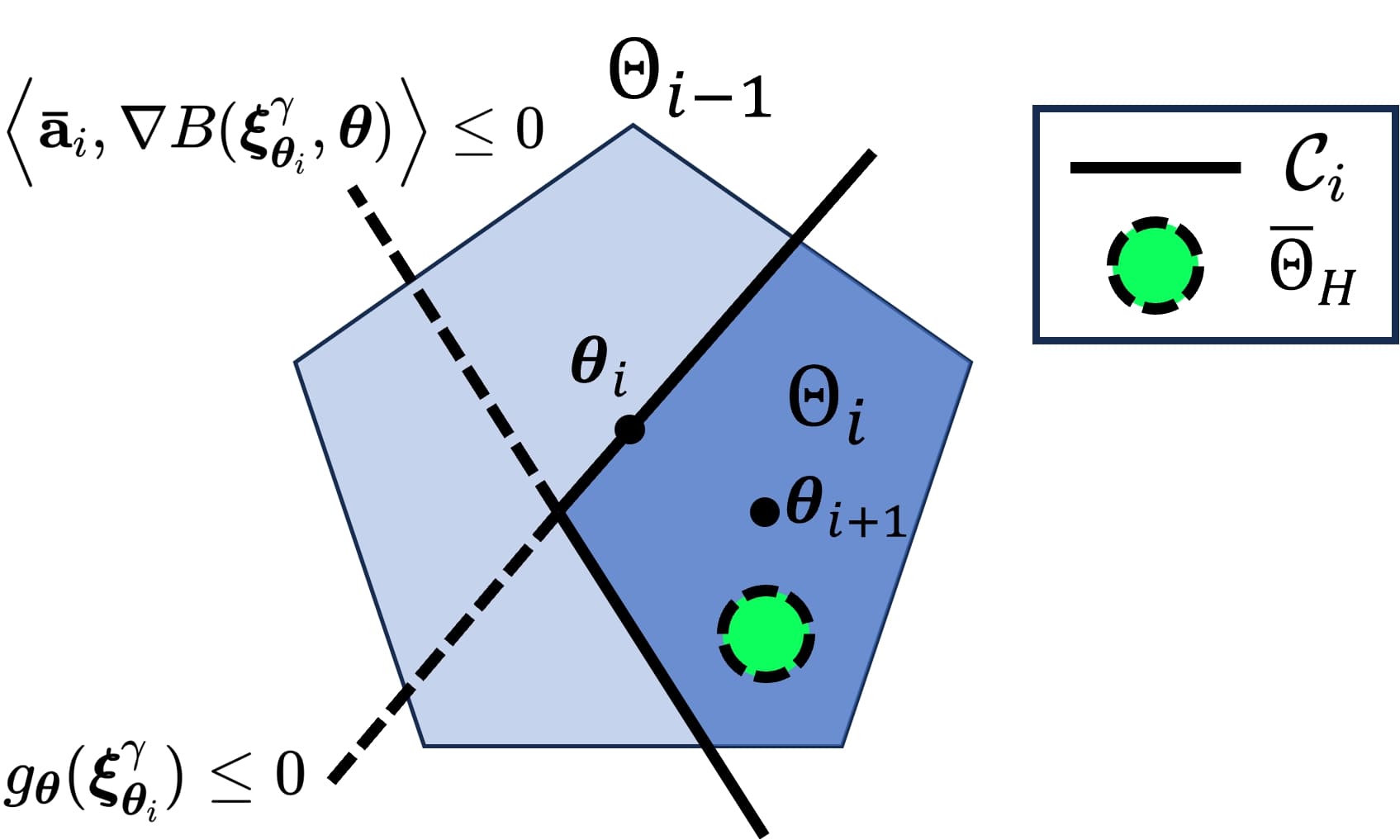}
\caption{The update of hypothesis space (2D example). Two cutting lines are used to cut $\Theta_{i-1}$ to $\Theta_i$. The user intent set $ \bar\Theta_H$ is contained in $\Theta_i$. New $\boldsymbol \theta_{i+1}$ is chosen from $\Theta_i$ for the updated Safe MPC policy.}

\label{fig.pipeline_cutting}
\end{figure}

\begin{tcolorbox}[title=\textbf{\small Safe MPC Alignment by Hypothesis Space Cutting},left=0.5mm, right=0.7mm, bottom=-1.5mm]
\begin{itemize}[leftmargin=4.2em]
    \item[\textbf{Initialize}] $\Theta_0$ that contains  $\bar\Theta_H$.
    Repeat the following three steps for $i=1,2,...,K$ until $\boldsymbol{\theta}_i\in\bar\Theta_H$.
    \item[\textbf{Step 1:}] Robot chooses parameter $\boldsymbol{\theta}_{i}\in\Theta_{i-1}$ for its current Safe MPC  $\boldsymbol{\pi}_{\boldsymbol{\theta}_{i}}^\gamma$.
    \item[\textbf{Step 2:}] Human user applies a directional correction $\boldsymbol{a}_{k_i}$ while robot executes $\boldsymbol{\pi}_{\boldsymbol{\theta}_{i}}^\gamma$ 
    \item [\textbf{Step 3:}] Perform hypothesis space cut using (\ref{equ.intersection}) to get new hypothesis space $\Theta_{i}$.\\
\end{itemize}
\end{tcolorbox}

At $i$th learning update, the robot chooses 
 $\boldsymbol{\theta}_{i}\in\Theta_{i-1}$ for 
 its current Safe MPC $\boldsymbol{\pi}_{\boldsymbol{\theta}_{i}}^\gamma$. While executing $\boldsymbol{\pi}_{\boldsymbol{\theta}_{i}}^\gamma$, the robot  receives human  correction $\boldsymbol{a}_{k_i}$ at the rollout time step $k_i$. This correction leads to a cutting set $\mathcal{C}_i$ (\ref{equ.C_steps}), which will update $\Theta_{i-1}$ to $\Theta_{i}$ in (\ref{equ.intersection}). The above iteration repeats $i=1,2,3,...$.
From (\ref{equ.intersection}), one can easily see that $\Theta_0\supseteq\Theta_1\supseteq...\supseteq\bar\Theta_H$. Given a volume measure, it follows $\vol\Theta_0\geq \vol\Theta_1\geq..., \geq \vol{\bar\Theta_H}$. Thus, the volume of the hypothesis space is at least not increasing after each  correction. It is hoped that the volume of the hypothesis space decreases quickly, such that new $\boldsymbol{\theta}_{i+1}$ increases its chance of choosing from $\bar \Theta_H$ --- meaning a successful safety alignment between the robot and human.

For the  \textit{Safe MPC Alignment}, we will later (Section \ref{sec.results}) show  within finite number $K$ of correction, the method will either
\begin{itemize}
    \item[(I)] surely have   $\boldsymbol{\theta}_i\in \bar\Theta_H \subset \Theta_H$, $i < K$, or 
    \item[(II)] declare the failure due to misspecification ($\bar\Theta_H \cap \Theta_0=\emptyset$)
\end{itemize}
Also in Section \ref{sec.results}, we will provide implementation details for the above hypothesis space cutting procedure.

\subsection{Properties and Geometric Interpretations}

We next show properties and geometric interpretations for the proposed safe MPC alignment process. Those properties will be used to establish the convergence of the algorithm.

\subsubsection{Piecewise Linear  Cut}
By recalling the weight-feature parameterization of the safety constraint  (\ref{equ.safey_param}), one can examine that the inequities in (\ref{equ.C_steps}) are both linear in $\theta$. This means that the cutting set $\mathcal{C}_i$  is a piecewise linear cut, as stated below.



\begin{lemma}\label{lemma.linear_cut}
 $\forall i$, $\mathcal{C}_i$ in (\ref{equ.C_steps}) can be expressed as an intersection of two half-spaces in $\mathbb{R}^r$:
\begin{equation}\label{equ.pwl_cut}
    \mathcal{C}_i=\{\boldsymbol \theta\ |\ \boldsymbol{\theta}\tran \boldsymbol h_i \leq b_i\} \cap \{\boldsymbol \theta\ |\ {\boldsymbol{\theta}}\tran\boldsymbol{\phi}(\boldsymbol\xi_{\boldsymbol \theta_i}^\gamma ) < -\phi_0(\boldsymbol\xi_{\boldsymbol \theta_i}^\gamma )\},
\end{equation}
with
\begin{gather}
\label{equ.h_and_b}
\small
     \boldsymbol{h}_i=-\left< \mathbf{\bar{a}}_i,\nabla J(\boldsymbol\xi_{\boldsymbol \theta_i}^\gamma ) \right> \boldsymbol{\phi}(\boldsymbol\xi_{\boldsymbol \theta_i}^\gamma )+ \gamma  \frac{\partial \boldsymbol \phi}{\partial \boldsymbol u}(\boldsymbol\xi_{\boldsymbol \theta_i}^\gamma ) \mathbf{\bar{a}}_i,\\
     b_i=\left< \mathbf{\bar{a}}_i,\nabla J(\boldsymbol\xi_{\boldsymbol \theta_i}^\gamma ) \right> \phi_0(\boldsymbol\xi_{\boldsymbol \theta_i}^\gamma)-\gamma \left< \mathbf{\bar{a}}_i,\nabla \phi_0(\boldsymbol\xi_{\boldsymbol \theta_i}^\gamma) \right>. 
\end{gather}
\end{lemma}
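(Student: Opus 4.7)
The plan is to process the two constraints defining $\mathcal{C}_i$ in (\ref{equ.C_steps}) separately and show that together they carve out the stated intersection of half-spaces. The trajectory $\boldsymbol\xi_{\boldsymbol \theta_i}^\gamma$ is the solution of (\ref{equ.robot_mpc_approx}) at the fixed parameter $\boldsymbol \theta_i$, so every quantity evaluated at this trajectory (the features, their gradients, $\nabla J$, etc.) is a constant with respect to the free variable $\boldsymbol \theta$ that indexes the cutting set. This is the single observation that makes both constraints affine in $\boldsymbol \theta$ after the right manipulation.

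First I would handle the inequality $g_{\boldsymbol \theta}(\boldsymbol\xi_{\boldsymbol \theta_i}^\gamma) < 0$. Substituting the weight-feature parameterization (\ref{equ.safey_param}) immediately gives $\boldsymbol \theta\tran \boldsymbol \phi(\boldsymbol\xi_{\boldsymbol \theta_i}^\gamma) < -\phi_0(\boldsymbol\xi_{\boldsymbol \theta_i}^\gamma)$, which is already the second half-space in (\ref{equ.pwl_cut}). Next I would take the gradient inequality $\langle \mathbf{\bar a}_i, \nabla B(\boldsymbol\xi_{\boldsymbol \theta_i}^\gamma, \boldsymbol \theta)\rangle \le 0$ and expand it using the definition of the penalty $B$ in (\ref{equ.barrier}); chain-rule through $-\gamma\ln(-g_{\boldsymbol \theta})$ yields precisely the linear-fractional form in (\ref{equ.lin_ineq}). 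The crucial point is that on any $\boldsymbol \theta$ satisfying the second half-space, the denominator $-g_{\boldsymbol \theta}(\boldsymbol\xi_{\boldsymbol \theta_i}^\gamma)$ is strictly positive, so multiplying through clears the fraction and preserves the inequality direction, producing an honest affine inequality in $\boldsymbol \theta$. This coupling is the reason the lemma phrases the result as an intersection rather than a pair of independent half-spaces.

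The remaining work is algebraic bookkeeping: with $g_{\boldsymbol \theta}(\boldsymbol\xi_{\boldsymbol \theta_i}^\gamma) = \phi_0 + \boldsymbol \theta\tran \boldsymbol \phi$ and the $\boldsymbol u$-gradient $\nabla g_{\boldsymbol \theta}(\boldsymbol\xi_{\boldsymbol \theta_i}^\gamma) = \nabla \phi_0 + \bigl(\tfrac{\partial \boldsymbol \phi}{\partial \boldsymbol u}\bigr)\tran \boldsymbol \theta$, where $\tfrac{\partial \boldsymbol \phi}{\partial \boldsymbol u}$ denotes the $r \times mT$ Jacobian of the feature vector with respect to the stacked controls, substitution into the cleared inequality and collection of $\boldsymbol \theta$-dependent terms on the left produces
\begin{equation*}
\boldsymbol \theta\tran \Bigl(-\langle \mathbf{\bar a}_i, \nabla J\rangle\,\boldsymbol \phi + \gamma\,\tfrac{\partial \boldsymbol \phi}{\partial \boldsymbol u}\,\mathbf{\bar a}_i\Bigr) \le \langle \mathbf{\bar a}_i, \nabla J\rangle\,\phi_0 - \gamma\,\langle \mathbf{\bar a}_i, \nabla \phi_0\rangle,
\end{equation*}
which is exactly $\boldsymbol \theta\tran \boldsymbol h_i \le b_i$ with the $\boldsymbol h_i$ and $b_i$ of (\ref{equ.h_and_b}). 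Conjunction with the already-established second half-space recovers $\mathcal{C}_i$.

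I expect the main obstacle (really the only subtlety) to be the fractional-to-linear step: it is valid only because the denominator is strictly positive, and that positivity is exactly what the second constraint supplies, so the two half-spaces must be handled jointly rather than independently. A secondary care point is dimensional consistency in the Jacobian term, ensuring that $\tfrac{\partial \boldsymbol \phi}{\partial \boldsymbol u}\mathbf{\bar a}_i$ lies in $\mathbb{R}^r$ so that it can be added to $\boldsymbol \phi \in \mathbb{R}^r$ inside $\boldsymbol h_i$; everything else is straightforward substitution into (\ref{equ.lin_ineq}).
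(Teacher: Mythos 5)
Your proposal is correct and follows essentially the same route as the paper, whose proof is simply the one-line remark that (\ref{equ.pwl_cut}) follows directly from (\ref{equ.C_steps}), (\ref{equ.barrier}), and (\ref{equ.safey_param}); you have just carried out that substitution explicitly, and your identification of the denominator-positivity step (supplied by the second half-space) as the only subtlety is exactly right. The algebra, including the collection of $\boldsymbol\theta$-terms into $\boldsymbol h_i$ and $b_i$, checks out against (\ref{equ.h_and_b}).
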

\begin{proof}
See Appendix \ref{proof.lemma_linear_cut}.
\end{proof}

According to the above lemma,  $\Theta_{i-1} \cap \mathcal{C}_i$ can be viewed as cutting $\Theta_{i-1}$ with   two  hyperplanes (Fig. \ref{fig.pipeline_cutting}):
\begin{equation}
\begin{split}
    \mathbf{bd}\ \mathcal{C}_i=\{\boldsymbol \theta\ |\ \boldsymbol{\theta}\tran \boldsymbol h_i = b_i, {\boldsymbol{\theta}}\tran\boldsymbol{\phi}(\boldsymbol\xi_{\boldsymbol \theta_i}^\gamma) \leq -\phi_0(\boldsymbol\xi_{\boldsymbol \theta_i}^\gamma)\}\ \cup \\
    \{\boldsymbol \theta\ |\ \boldsymbol{\theta}\tran \boldsymbol h_i \leq b_i, {\boldsymbol{\theta}}\tran\boldsymbol{\phi}(\boldsymbol\xi_{\boldsymbol \theta_i}^\gamma) = -\phi_0(\boldsymbol\xi_{\boldsymbol \theta_i}^\gamma)\},
\end{split}
\label{equ.C_boundary}
\end{equation}

As  the initial hypothesis space $\Theta_0\subset\mathbb{R}^r$ in (\ref{equ.hypothesis}) is a convex polytope, iteratively cutting with  hyperplanes yields a series of convex polytopes  $\Theta_i$s, $i=0,1,2,...$.

\subsubsection{Relationship between  \texorpdfstring{$\boldsymbol  \theta_{i}$}{TEXT} and \texorpdfstring{$\mathbf{bd}\ \mathcal{C}_i$}{TEXT}}

The following assertion states a  relationship between $\boldsymbol  \theta_{i}$ and $\mathbf{bd}\ \mathcal{C}_i$.

\begin{lemma}\label{lemma gh}
    $\forall i,\ \boldsymbol{\theta}_i$ is on $\mathbf{bd}\ \mathcal{C}_i$ in (\ref{equ.C_boundary}). Specifically, $\boldsymbol{\theta}_i$ satisfies
\begin{equation}\label{equ.gh}
    \boldsymbol{\theta}_i\tran \boldsymbol h_i =b_i \quad
    \text{and}
    \quad
    \boldsymbol{\theta}_i\tran\boldsymbol{\phi}(\boldsymbol\xi_{\boldsymbol \theta_i}^\gamma) < -\phi_0(\boldsymbol\xi_{\boldsymbol \theta_i}^\gamma)
\end{equation}
for all $i=1,2,...$.
\end{lemma}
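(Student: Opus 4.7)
The plan is to verify both claims by direct substitution and identification of a first-order optimality condition. Specifically, I will show that the quantity $\boldsymbol{\theta}_i\tran\boldsymbol h_i - b_i$ is (up to a nonvanishing scalar) exactly the directional derivative $\langle \mathbf{\bar a}_i, \nabla B(\boldsymbol\xi_{\boldsymbol\theta_i}^\gamma,\boldsymbol\theta_i)\rangle$, which must be zero because $\boldsymbol\xi_{\boldsymbol\theta_i}^\gamma$ solves (\ref{equ.robot_mpc_approx}) with parameter $\boldsymbol\theta_i$. The second condition then falls out immediately from Lemma~\ref{lemma sol}.

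First I would substitute the definitions (\ref{equ.h_and_b}) into $\boldsymbol{\theta}_i\tran\boldsymbol h_i - b_i$ and group terms. Using the parameterization (\ref{equ.safey_param}), two identities will do the bookkeeping: the pointwise identity $\boldsymbol{\theta}_i\tran\boldsymbol\phi(\boldsymbol\xi_{\boldsymbol\theta_i}^\gamma) + \phi_0(\boldsymbol\xi_{\boldsymbol\theta_i}^\gamma) = g_{\boldsymbol\theta_i}(\boldsymbol\xi_{\boldsymbol\theta_i}^\gamma)$, and the gradient identity $\nabla g_{\boldsymbol\theta_i}(\boldsymbol\xi_{\boldsymbol\theta_i}^\gamma) = \nabla\phi_0(\boldsymbol\xi_{\boldsymbol\theta_i}^\gamma) + \bigl(\partial\boldsymbol\phi/\partial\boldsymbol u\bigr)\tran\boldsymbol\theta_i$. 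A short algebraic manipulation should yield
\begin{equation*}
\boldsymbol{\theta}_i\tran\boldsymbol h_i - b_i \;=\; -\,g_{\boldsymbol\theta_i}(\boldsymbol\xi_{\boldsymbol\theta_i}^\gamma)\,\bigl\langle \mathbf{\bar a}_i,\,\nabla B(\boldsymbol\xi_{\boldsymbol\theta_i}^\gamma,\boldsymbol\theta_i)\bigr\rangle,
\end{equation*}
where I have used the definition (\ref{equ.barrier}) of $B$ to recombine the $J$- and $g$-gradient pieces into a single $B$-gradient (this is exactly the reverse of the rearrangement already performed in (\ref{equ.lin_ineq})).

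Next I would invoke the first-order optimality condition. Since $\boldsymbol\xi_{\boldsymbol\theta_i}^\gamma$ is a (local) solution to the penalty problem (\ref{equ.robot_mpc_approx}) and $\nabla B$ denotes the full gradient with respect to $\boldsymbol u_{0:T-1}$ after the dynamics rollout has been substituted in, we have $\nabla B(\boldsymbol\xi_{\boldsymbol\theta_i}^\gamma,\boldsymbol\theta_i)=\boldsymbol 0$. The inner product with $\mathbf{\bar a}_i$ therefore vanishes, and hence $\boldsymbol{\theta}_i\tran\boldsymbol h_i = b_i$. This establishes the first half of (\ref{equ.gh}). For the second half, Lemma~\ref{lemma sol} guarantees $g_{\boldsymbol\theta_i}(\boldsymbol\xi_{\boldsymbol\theta_i}^\gamma)<0$; rewriting via (\ref{equ.safey_param}) gives directly $\boldsymbol{\theta}_i\tran\boldsymbol\phi(\boldsymbol\xi_{\boldsymbol\theta_i}^\gamma) < -\phi_0(\boldsymbol\xi_{\boldsymbol\theta_i}^\gamma)$, matching (\ref{equ.C_boundary}) on the first piece.

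The only subtle point—and what I expect to be the main thing to state carefully rather than a real obstacle—is the bookkeeping of the two gradient conventions (Jacobian of $\boldsymbol\phi$ versus gradient of $g_{\boldsymbol\theta}$) so that the cross term $\boldsymbol{\theta}_i\tran\,(\partial\boldsymbol\phi/\partial\boldsymbol u)\,\mathbf{\bar a}_i$ in $\boldsymbol h_i$ lines up correctly with $\langle\mathbf{\bar a}_i,\nabla g_{\boldsymbol\theta_i}\rangle$. Once this is handled, the proof is a one-line identification followed by an appeal to first-order optimality and to Lemma~\ref{lemma sol}; no genuinely hard step remains.
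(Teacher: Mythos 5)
Your proposal is correct and follows essentially the same route as the paper: both rest on the first-order optimality condition $\nabla B(\boldsymbol\xi_{\boldsymbol\theta_i}^\gamma,\boldsymbol\theta_i)=\boldsymbol 0$ for the penalty problem (\ref{equ.robot_mpc_approx}), multiplication by the positive factor $-g_{\boldsymbol\theta_i}(\boldsymbol\xi_{\boldsymbol\theta_i}^\gamma)$ to recover the linear equation $\boldsymbol{\theta}_i\tran\boldsymbol h_i=b_i$, and Lemma~\ref{lemma sol} for the strict inequality. The only cosmetic difference is direction: you verify the identity $\boldsymbol{\theta}_i\tran\boldsymbol h_i-b_i=-g_{\boldsymbol\theta_i}(\boldsymbol\xi_{\boldsymbol\theta_i}^\gamma)\langle\mathbf{\bar a}_i,\nabla B\rangle$ by substitution, whereas the paper expands $\langle\mathbf{\bar a}_i,\nabla B\rangle=0$ forward into the same equation.
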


\begin{proof}
    See Appendix \ref{proof.lemma gh}.
\end{proof}

Lemma \ref{lemma gh}  states $\boldsymbol  \theta_{i} \in \mathbf{bd}\ \mathcal{C}_i$. Its geometric interpretation  is  in Fig. \ref{fig.theta_choice.subfig_centerpass}: the cut $\mathbf{bd}\ \mathcal{C}_{i}$ always passes $\boldsymbol \theta_{i}$. This suggests that the choice of $\boldsymbol \theta_{i}\in\Theta_{i-1}$ is critical in the learning process: it directly affects the position of the cut  $\mathbf{bd}\ \mathcal{C}_{i}$ and consequently the removal volume. It is important to choose a good $\boldsymbol \theta_{i}\in\Theta_{i-1}$ such that a large hypothesis space is removed after each cut.

We briefly discuss the implication of different choices of $\boldsymbol  \theta_{i} \in \Theta_{i-1}$. When choosing $\boldsymbol \theta_{i}$ near the boundary of $\Theta_{i-1}$, as  in Fig. \ref{fig.theta_choice.subfig_edgepass} and Fig. \ref{fig.theta_choice.subfig_bigcut}, 
the removal of the hypothesis space depends on the specific direction of cut $\mathbf{bd} \,\mathcal{C}_i$. 
Since the direction of $\mathcal{C}_i$ will be eventually determined by the direction of user correction according to (\ref{equ.C_boundary}), the removal of the hypothesis space can be relatively small, as in Fig. \ref{fig.theta_choice.subfig_edgepass}, or large, as in Fig. \ref{fig.theta_choice.subfig_bigcut}, depending on the specific direction of the correction. 

As the specific direction of user correction is hard to assume, a good choice of $\boldsymbol \theta_{i}\in\Theta_{i-1}$ will be near the center of $\Theta_{i-1}$. As such,  it can lead to an averaged large cut of the hypothesis space regardless of the direction of $\mathcal{C}_i$, as in Fig. \ref{fig.theta_choice.subfig_centerpass}. We will rigorously discuss the centering choice of $\boldsymbol{\theta}_i$ and volume reduction rate of the hypothesis space in the next section.

\begin{figure}
    \centering
    \subfigure[]
    {
        \includegraphics[width=0.25\linewidth]{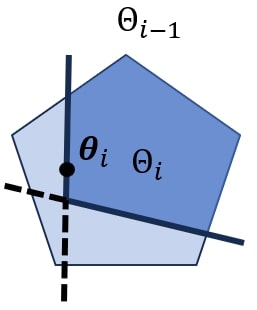}
        \label{fig.theta_choice.subfig_bigcut}
    }
    \subfigure[]
    {
        \includegraphics[width=0.25\linewidth]{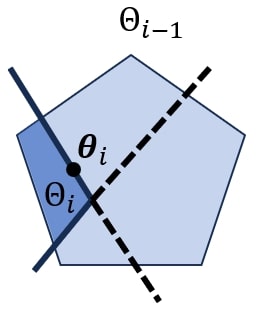}
        \label{fig.theta_choice.subfig_edgepass}
    }
        \subfigure[]
    {
        \includegraphics[width=0.25\linewidth]{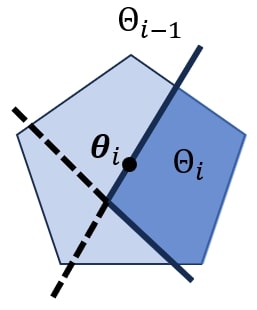}
        \label{fig.theta_choice.subfig_centerpass}
    }
    \caption{Illustration of the different choices  of  $\boldsymbol \theta_{i}$. (a) and (b): When  $\boldsymbol \theta_{i}$ is chosen near the set boundary,  the removed volume  depends on the specific direction of $\mathbf{bd}\ \mathcal{C}_{i}$. (c): When  $\boldsymbol \theta_{i}$ is near the center, the cut volume is large on average regardless of the specific user directional correction. }
    \label{fig.theta_choice}
\end{figure}

\section{Implementation Details and Main Results}\label{sec.results}

\subsection{The Choice of \texorpdfstring{$\boldsymbol \theta_{i}$}{TEXT} }
At each learning iteration $i$, based on the previous geometric intuition, our choice of $\boldsymbol{\theta}_{i}$ is to achieve a large  reduction of the hypothesis space for all possible human directional corrections.  Thus, $\boldsymbol{\theta}_{i}$ needs to be picked from the "center" of $\Theta_{i-1}$. One option is to choose the mass center of $\Theta_{i-1}$. 
 However, computing the centroid of a polytope can be \#P-Hard \cite{rademacher2007approximating}. 
For computational efficiency, we choose $\boldsymbol{\theta}_{i}$ as the center of the Maximum Volume Ellipsoid (MVE) inscribed in $\Theta_{i-1}$, and will later compare its performance with other choices of $\theta_i$ in our experiments.


\begin{definition}[Maximum Volume inscribed Ellipsoid (MVE)\cite{boyd2004convex}]
    The MVE of a given convex set $\Theta \subset \mathbb{R}^r$ is defined as the image of the unit ball under an affine transformation
    \begin{equation}
        \mathcal{E}=\{\bar{H}\boldsymbol \theta + \bar{\boldsymbol d}\ |\ \Vert \boldsymbol \theta \Vert_2 \leq 1 \},
    \end{equation}
    where the affine transformation parameter $\bar{H} \in \mathbb{S}^r_{++}$ (a $r\times r$ symmetric positive definite matrix), and $\bar{\boldsymbol d} \in \mathbb{R}^r$ is called the center of $\mathcal{E}$. $\bar{H}, \bar{\boldsymbol d}$ are determined via the following optimization problem to maximize the volume of the ellipsoid $\det H$:
    \begin{equation}
    \begin{split}
    \bar{H}, \bar{\boldsymbol d}=\underset{H \in \mathbb{S}^r_{++},\boldsymbol d}{\arg\max}\quad & \log \det H\\
    \rm{s.t.}\quad 
    & \sup \nolimits_{\Vert \boldsymbol \theta \Vert_2\leq 1} \mathbbm{1}_{\Theta}(H\boldsymbol \theta + \boldsymbol d) \leq 0,
    \end{split}
    \label{equ.MVE_opt}
    \end{equation}
    where the indicator  $\mathbbm{1}_{\Theta}(\boldsymbol \theta)=0$ for $\boldsymbol \theta \in \Theta$ and otherwise $\mathbbm{1}_{\Theta}(\boldsymbol \theta)=+\infty$. Specifically, when $\Theta$ is a polytope $\{ \boldsymbol{\theta}\tran \boldsymbol n_i \leq c_i,\ i=1,\dots,m \}$, the problem can be converted to a convex optimization with $m$ constraints, see chapter 8.4.2 in \cite{boyd2004convex}:
    \begin{equation}
    \begin{split}
    \bar{H}, \bar{\boldsymbol d}=\underset{H \in \mathbb{S}^r_{++},\boldsymbol d}{\arg\min}\quad & \log \det H^{-1}\\
    \rm{s.t. }\quad 
    & \norm{H \boldsymbol n_i}_2 + \boldsymbol d\tran \boldsymbol n_i \leq c_i,\ i=1, ...,m.
    \end{split}
    \label{equ.MVE_opt_cvx}
    \end{equation}    
\end{definition}

Recall that in  Safe MPC Alignment the learning update  \eqref{equ.intersection} is to iteratively cut the hypothesis space by hyperplanes \eqref{equ.C_boundary}, and that the initial hypothesis space (\ref{equ.initial_hypothesis}) is a box. By induction, it leads to that any updated hypothesis space $\Theta_i$ is a convex polytope, $i = 0, 1, 2, ...$. Thus, for any $\Theta_{i-1}$, by the above definition, let $\mathcal{E}_{i-1}$ denote the MVE of $\Theta_{i-1}$ and $\Bar{\boldsymbol d}_{i-1}$ denote the center of $\mathcal{E}_{i-1}$. Then, the choice of $\boldsymbol \theta_{i}$ is
\begin{equation}
    \boldsymbol \theta_{i}=\bar{\boldsymbol d}_{i-1}.
    \label{equ.theta_choice}
\end{equation}
The convex optimization (\ref{equ.MVE_opt}) can be solved efficiently \cite{cvx}. 
\subsection{Result 1: Finite Iteration Convergence}
 When choosing $\boldsymbol \theta_{i}$ as the center of MVE of $\Theta_{i-1}$, the following lemma states that the volume ratio ${\mathbf{Vol} (\Theta_{i})}/{\mathbf{Vol} (\Theta_{i-1})}$ after each hypothesis space cut can be upper bounded.

\begin{lemma}
    In the \textit{Safe MPC Alignment}, when choosing $\boldsymbol \theta_{i} \in \mathbb{R}^r$  as the center  of MVE of $\Theta_{i-1}$, one has
    \begin{equation}
       \frac{\mathbf{Vol} (\Theta_{i})}{\mathbf{Vol} (\Theta_{i-1})} \leq 1-\frac{1}{r},
    \end{equation}
    for all $i=1,2,3,...$ and any enclosure volume measure $\bf{Vol}$.
    \label{lemma v}
\end{lemma}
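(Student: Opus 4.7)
The plan is to combine Lemma~\ref{lemma gh} with a classical volume-reduction inequality for maximum-volume-inscribed-ellipsoid (MVE) cutting planes. From Lemma~\ref{lemma gh}, the active cutting hyperplane $\{\boldsymbol{\theta}: \boldsymbol{\theta}\tran \boldsymbol{h}_i = b_i\}$ passes through $\boldsymbol{\theta}_i$; since $\boldsymbol{\theta}_i = \bar{\boldsymbol{d}}_{i-1}$ by the choice (\ref{equ.theta_choice}), this hyperplane in fact passes through the center of the MVE $\mathcal{E}_{i-1}$ of $\Theta_{i-1}$. The second defining inequality of $\mathcal{C}_i$ is strict at $\boldsymbol{\theta}_i$, so in a neighborhood of $\boldsymbol{\theta}_i$ the effective cut is a single hyperplane through the MVE center, and $\mathcal{E}_{i-1}$ is therefore split into two halves of equal volume.

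I would then reduce to a canonical setting via the affine map $\boldsymbol{\theta} \mapsto \bar{H}_{i-1}^{-1}(\boldsymbol{\theta} - \bar{\boldsymbol{d}}_{i-1})$. This sends $\mathcal{E}_{i-1}$ to the unit ball $B \subset \mathbb{R}^r$ centered at the origin, maps $\boldsymbol{\theta}_i$ to $\mathbf{0}$, and preserves volume ratios. In these coordinates $\Theta_{i-1}$ becomes a convex body $K$ whose MVE is exactly $B$, and the cut half-space becomes $H^- = \{x: \boldsymbol{a}\tran x \leq 0\}$ for some unit vector $\boldsymbol{a}$. It therefore suffices to prove the canonical statement: for every convex body $K$ whose MVE is $B$ and every half-space $H^-$ whose boundary passes through the origin, $\vol(K \cap H^-) \leq (1-1/r)\,\vol(K)$. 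Since only volume ratios enter, the bound transfers back through the inverse affine map for any enclosure volume measure.

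The canonical statement would then be established by exploiting the KKT optimality conditions of (\ref{equ.MVE_opt_cvx}). These yield a John decomposition of the identity through the contact points where $B$ meets the active facets of $K$; the resulting dual certificate is strictly sharper than the plain L\"owner--John containment $B \subseteq K \subseteq r B$, and forces $K$ to be sufficiently balanced about the origin that no hyperplane through the origin can concentrate more than a fraction $1-1/r$ of the mass on one side. Combining this with the fact that the removed half $K \cap H^+$ already contains the half-ball $B \cap H^+$ of volume $\vol(B)/2$, and then undoing the affine change of coordinates, produces $\vol(\Theta_i)/\vol(\Theta_{i-1}) \leq 1 - 1/r$.

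The main obstacle will be the final step. Löwner--John containment alone gives only a bound of order $r^{-r}$ on the removed volume, so the proof cannot use the MVE merely as an inscribed ellipsoid; it must use the finer John-decomposition structure, i.e., the full dual certificate of MVE optimality, to extract the required linear-in-$r$ per-iteration rate. I expect this is where the argument in the appendix does its real work---likely by writing the KKT identity for (\ref{equ.MVE_opt_cvx}), using it to bound how far the facets of $\Theta_{i-1}$ can recede from the MVE center in any single direction, and then converting that bound on directional support functions into a bound on marginal half-space volumes.
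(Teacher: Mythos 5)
Your proposal has the same skeleton as the paper's proof: Lemma~\ref{lemma gh} places $\boldsymbol{\theta}_i=\bar{\boldsymbol d}_{i-1}$ on the hyperplane $\boldsymbol{\theta}\tran\boldsymbol h_i=b_i$, a central cut through the MVE center of a convex body removes at least a $1/r$ fraction of its volume, and the second half-space defining $\mathcal{C}_i$ can only remove more. The one substantive divergence is how the central-cut inequality is handled, and that is where your argument has a gap: the paper does not prove it, it cites Tarasov--Khachiyan--Erlikh's method of inscribed ellipsoids for precisely the statement $\mathbf{Vol}(\Theta_{i-1}\cap\{\boldsymbol{\theta}\,|\,\boldsymbol{\theta}\tran\boldsymbol h_i\le b_i\})\le(1-\tfrac{1}{r})\mathbf{Vol}(\Theta_{i-1})$, whereas you attempt to derive it from the John decomposition. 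Your diagnosis is correct --- plain L\"owner--John containment $B\subseteq K\subseteq rB$ is far too weak, and the KKT/contact-point certificate of MVE optimality is the right tool --- but the decisive step (``forces $K$ to be sufficiently balanced about the origin that no hyperplane through the origin can concentrate more than a fraction $1-1/r$ of the mass on one side'') is exactly the content of the cited theorem and is asserted rather than established; as written this is a plan for a proof, not a proof. Two smaller points. Your ``in a neighborhood of $\boldsymbol{\theta}_i$ the effective cut is a single hyperplane'' reasoning is the wrong mechanism for dealing with the two-piece cut: local activity near $\boldsymbol{\theta}_i$ says nothing about global volume. What is needed (and what the paper uses) is simply the inclusion $\mathcal{C}_i\subseteq\{\boldsymbol{\theta}\,|\,\boldsymbol{\theta}\tran\boldsymbol h_i\le b_i\}$, hence $\mathbf{Vol}(\Theta_i)=\mathbf{Vol}(\Theta_{i-1}\cap\mathcal{C}_i)\le\mathbf{Vol}(\Theta_{i-1}\cap\{\boldsymbol{\theta}\tran\boldsymbol h_i\le b_i\})$, so the single-hyperplane bound transfers. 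Finally, the observation that the removed piece contains the half-ball $B\cap H^+$ buys only a fraction of order $\mathbf{Vol}(B)/\mathbf{Vol}(K)$, which L\"owner--John bounds below only by $r^{-r}$; it contributes nothing toward $1-1/r$ and should be dropped. If you replace your sketched canonical argument with a citation (or a complete John-decomposition proof), the rest of your structure matches the paper.
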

\begin{proof}
    See Appendix \ref{proof.lemma v}.
\end{proof}
The lemma above asserts an upper bound of the volume of the hypothesis space $\mathbf{Vol} (\Theta_{i})$ relative to the previous volume. 
With this, we can now establish the finite iteration convergence of the \textit{Safe MPC Alignment}, stated below.
\begin{theorem}[Successful learning in finite iterations]
    In the \textit{Safe MPC Alignment} algorithm,  $\boldsymbol \theta_{i}$ is selected as the center  of MVE of $\Theta_{i-1}$, $i=1,2,3,...$. If $\Bar\Theta_H \subset \Theta_0$,  the algorithm will terminate successfully within 
    \begin{equation} \label{equ.K}
        K=\left\lceil \frac{\ln \frac{\tau_r \rho_H
^r}{\mathbf{Vol} (\Theta_0)}}{\ln (1-\frac{1}{r})}\right\rceil.
    \end{equation}
    iterations, where $\rho_H$ is the radius of a ball contained within $\Bar\Theta_H$ and $\tau_r$ is the volume of a unit ball in $\mathbb{R}^r$ space. That is, there exists $i < K$, such that $\boldsymbol{\theta}_i\in\Bar\Theta_H \subset \Theta_H$.

    \label{theorem_convergence}
\end{theorem}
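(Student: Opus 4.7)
The plan is to combine the two lemmas already in hand into a clean pigeonhole-style volume argument. By Lemma~\ref{lemma.bar_theta}, the assumption $\bar\Theta_H \subset \Theta_0$ guarantees $\bar\Theta_H \subset \Theta_i$ for every $i \geq 1$, so the target set is never cut away. By Lemma~\ref{lemma v}, the MVE-centering choice $\boldsymbol{\theta}_i = \bar{\boldsymbol d}_{i-1}$ guarantees a strict multiplicative shrinkage $\mathbf{Vol}(\Theta_i) \leq (1-\tfrac{1}{r})\,\mathbf{Vol}(\Theta_{i-1})$. These two ingredients together upper bound the current hypothesis volume while lower bounding it by $\mathbf{Vol}(\bar\Theta_H)$, and squeezing the two bounds forces termination.

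Concretely, I would argue by contradiction. Suppose the algorithm has not terminated by iteration $K$, meaning $\boldsymbol{\theta}_i \notin \bar\Theta_H$ for all $i \leq K$, so a cut is performed at every step. Iterating Lemma~\ref{lemma v} gives
\begin{equation}
\mathbf{Vol}(\Theta_K) \;\leq\; \left(1-\tfrac{1}{r}\right)^{K} \mathbf{Vol}(\Theta_0).
\end{equation}
On the other hand, since $\bar\Theta_H$ contains a Euclidean ball of radius $\rho_H$ and $\bar\Theta_H \subset \Theta_K$ by Lemma~\ref{lemma.bar_theta}, we have
\begin{equation}
\mathbf{Vol}(\Theta_K) \;\geq\; \mathbf{Vol}(\bar\Theta_H) \;\geq\; \tau_r \rho_H^{r}.
\end{equation}
Chaining the two inequalities and taking logarithms (noting $\ln(1-\tfrac{1}{r}) < 0$ so the inequality flips) yields $K \leq \ln(\tau_r \rho_H^{r}/\mathbf{Vol}(\Theta_0)) / \ln(1-\tfrac{1}{r})$, contradicting the definition of $K$ in (\ref{equ.K}) as the ceiling of that quantity. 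Hence some $i < K$ must satisfy $\boldsymbol{\theta}_i \in \bar\Theta_H \subset \Theta_H$.

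The argument is almost entirely bookkeeping once the two lemmas are assumed, so there is no genuine obstacle in the proof itself. The one place that deserves a careful sentence is the monotone nesting $\bar\Theta_H \subset \Theta_i$ for all $i$: this ensures the lower volume bound $\tau_r \rho_H^r$ is valid throughout, which is what makes the geometric decay bite. I would also make explicit that the enclosure volume measure is the standard Lebesgue measure on $\mathbb{R}^r$ so that the ball volume $\tau_r \rho_H^r$ genuinely lower bounds $\mathbf{Vol}(\bar\Theta_H)$, and remark that the bound is independent of the particular direction of each user correction, since Lemma~\ref{lemma v} already delivered a worst-case-over-directions shrinkage guarantee.
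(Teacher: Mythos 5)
Your proposal is correct and follows essentially the same route as the paper's proof: iterate Lemma \ref{lemma v} to get geometric volume decay, use Lemma \ref{lemma.bar_theta} to keep $\bar\Theta_H\subset\Theta_i$ so that $\tau_r\rho_H^r$ lower-bounds the volume, and derive a contradiction with the definition of $K$. The only cosmetic difference is that the paper splits the argument into two contradiction steps (first bounding the iteration count, then inferring some $\boldsymbol{\theta}_k\in\bar\Theta_H$), whereas you merge them into a single pigeonhole argument; the substance is identical.
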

\begin{proof}
    See Appendix \ref{proof.theorem_convergence}.
\end{proof}
The above result asserts the maximum learning iteration to find  $\boldsymbol{\theta}_i\in\Theta_H$. In other words, within $K$ rounds of human correction, the robot can successfully learn a safety constraint. The above result requires a pre-set of $\rho_H
^r$, which is the radius of a ball contained in $\Bar\Theta_H$. Since the size of $\Bar\Theta_H$ is unknown in reality, $\rho_H>0$  can be interpreted as an estimated size of the unknown user intent set $\bar\Theta_H$. It also serves as a termination threshold for learning convergence: a smaller $\rho_H$ can lead to a more concentrated $\boldsymbol{\theta}_H$ at convergence.

The finite iteration convergence in  Theorem \ref{theorem_convergence} comes with a prerequisite: $\bar\Theta_H\in\Theta_0$ has to be satisfied in the first place for the {Safe MPC Alignment}. Although such a condition can be achieved by empirically choosing $\Theta_0$ large enough to contain all possible $\bar\Theta_H$ (\ref{equ.hypothesis}), it is also desired to have a verification for the hypothesis misspecification,  where such condition is not met. We will provide such result in the next subsection.

\subsection{Result 2: Hypothesis Misspecification Certification}
Consider hypothesis space misspecification, where $\Bar \Theta_H\cap\Theta_0=\emptyset$. In this case, no $\boldsymbol{\theta} \in \Theta_0$ can account for the user intended safety   (\ref{equ.user_safety}). The hypothesis misspecification can lead to the undesired behavior of the robot \cite{bobu2020quantifying,ghosal2023effect}. The following result states the  {Safe MPC Alignment}   can automatically certify the misspecification of $\Theta_0$.
\begin{theorem}[Certifying misspecification]
\label{theorem_misspec}
In the \textit{Safe MPC Alignment}, $\boldsymbol \theta_{i}$ is selected as the center of MVE of $\Theta_{i-1}$, $i=1,2,3,...$. 
     If $\bar\Theta_H \cap \Theta_0 = \emptyset$, the algorithm  converges to a set $\Theta^*$ as $i\rightarrow \infty$, such that $\mathbf{Vol} (\Theta^*)=0$ and $\Theta^* \cap \mathbf{int}\ \Theta_0= \emptyset$.
\end{theorem}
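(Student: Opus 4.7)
The plan is to first observe that Lemma~\ref{lemma v}'s volume bound is a purely geometric statement about cutting a convex body by any hyperplane through its MVE center, and therefore makes no use of the inclusion $\bar\Theta_H\subset\Theta_0$. Iterating it yields $\mathbf{Vol}(\Theta_i)\le(1-1/r)^{i}\,\mathbf{Vol}(\Theta_0)\to 0$. Setting $\Theta^*:=\bigcap_{i\ge 0}\Theta_i$, one of two things happens. If some $\Theta_i$ becomes empty at a finite step, then $\Theta^*=\emptyset$ and both conclusions are immediate. Otherwise $\{\Theta_i\}$ is a nested family of nonempty compact convex polytopes, so $\Theta^*$ is nonempty, compact and convex, and by downward continuity of Lebesgue measure $\mathbf{Vol}(\Theta^*)=\lim_{i}\mathbf{Vol}(\Theta_i)=0$. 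This gives the first claim.

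For the second claim, I would argue by contradiction: suppose there exists $\boldsymbol\theta^*\in\Theta^*\cap\mathbf{int}\,\Theta_0$. By construction $\Theta^*=\Theta_0\cap\bigcap_{i\ge 1}\mathcal{C}_i$, so $\boldsymbol\theta^*$ lies in every cut $\mathcal{C}_i$. On the other hand, the user-feedback hypothesis (\ref{equ.core-neq})--(\ref{equ.implication_constraint}) is a statement about $\bar\Theta_H$ that does not reference $\Theta_0$, so the inclusion $\bar\Theta_H\subset\mathcal{C}_i$ continues to hold for every $i$, exactly as used in the proof of Lemma~\ref{lemma.bar_theta}. Since each $\mathcal{C}_i$ is convex, the convex hull $\Omega:=\mathrm{conv}(\{\boldsymbol\theta^*\}\cup\bar\Theta_H)$ is contained in $\bigcap_{i\ge 1}\mathcal{C}_i$. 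Because $\bar\Theta_H$ contains a full-dimensional ball by assumption, $\Omega$ contains a solid cone with apex $\boldsymbol\theta^*$ and positive opening solid angle; and because $\boldsymbol\theta^*\in\mathbf{int}\,\Theta_0$, a small Euclidean ball around $\boldsymbol\theta^*$ is contained in $\mathbf{int}\,\Theta_0$. Intersecting these two observations produces a truncated cone near $\boldsymbol\theta^*$ of strictly positive $r$-dimensional Lebesgue measure, and this set lies inside $\Theta_0\cap\bigcap_i\mathcal{C}_i=\Theta^*$. This contradicts $\mathbf{Vol}(\Theta^*)=0$, so no such $\boldsymbol\theta^*$ exists.

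The hard part here is not the volume-shrinkage estimate (which is inherited verbatim from Lemma~\ref{lemma v}) but rather the convex-hull construction in the second step: one must see that the two disjoint sets $\{\boldsymbol\theta^*\}$ and $\bar\Theta_H$, sitting in a common convex half-space family, generate a positive-volume region near $\boldsymbol\theta^*$. Making this quantitative requires bounding the solid angle of an $r$-ball viewed from $\boldsymbol\theta^*$ away from zero and then estimating the truncated cone's volume from below --- a routine but attention-requiring calculation. A minor additional point is to pin down the mode of convergence $\Theta_i\to\Theta^*$; for a nested decreasing sequence of compact convex sets this coincides in the set-theoretic, Hausdorff, and Kuratowski senses, so no further work is needed.
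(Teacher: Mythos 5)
Your proposal is correct and follows essentially the same route as the paper: iterate Lemma \ref{lemma v} to force $\mathbf{Vol}(\Theta^*)=0$, then derive a contradiction from an interior point of $\Theta_0$ in $\Theta^*$ by using convexity of $\bigcap_i \mathcal{C}_i$ together with the ball inside $\bar\Theta_H$ to exhibit a positive-volume subset of $\Theta^*$. The only cosmetic difference is that the paper realizes this positive-volume set as an explicit ball $\mathcal{B}(\widetilde{\boldsymbol{\theta}}, \min(\mu\rho_H,d))$ centered partway along the segment from $\widehat{\boldsymbol{\theta}}$ toward the ball in $\bar\Theta_H$, whereas you use a truncated cone at the apex; these are the same geometric construction.
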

\begin{proof}
    See Appendix \ref{proof.theorem_misspec}.
\end{proof}
The above result states  when the initial hypothesis space is misspecified, $\bar\Theta_H \cap \Theta_0 = \emptyset$, the {Safe MPC Alignment} algorithm will finally converge to a set of zero volume on the boundary of the initial hypothesis space $\Theta_0$.
 The above theorem inspires us to empirically identify the misspecification by monitoring the position of the MVE center of $\Theta_i$ during the learning update. As shown in Fig. \ref{fig:misspec},
if after the maximum iteration $K-1$ in (\ref{equ.K}), the MVE center $\boldsymbol{\theta}_{K-1}$ is very close to the boundary of the initial hypothesis set $\Theta_0$, there is likely hypothesis space misspecification, and the algorithm terminates with misspecification failure.


\begin{figure}[h]
\centering
\vspace{-10pt}
\includegraphics[width=0.18\textwidth]{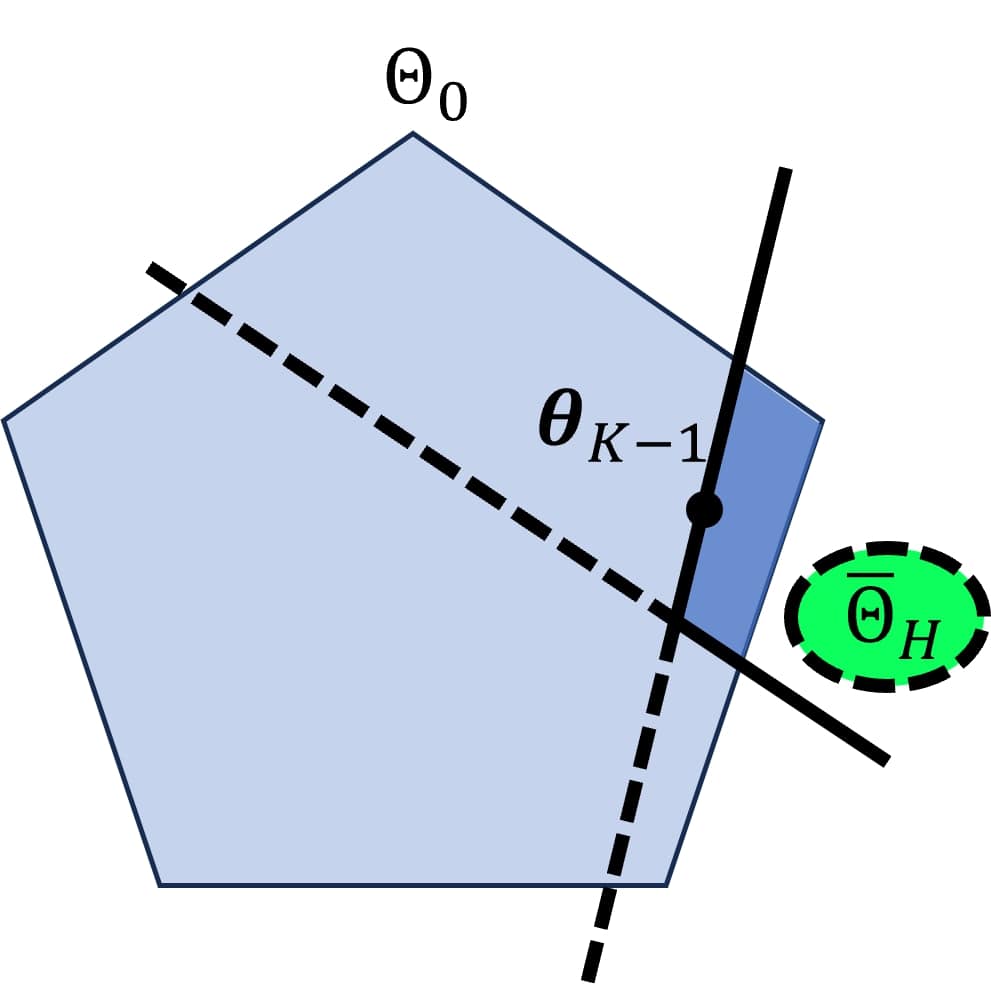}
\caption{ 
When there is misspecification, the {Safe MPC Alignment}  will converge to some point on the boundary of $\Theta_0$. So we can use the distance of $\boldsymbol{\theta}_{K-1}$ to the boundary as an indicator for misspecification.
}
\label{fig:misspec}
\end{figure}

Specifically, if $\Theta_0$ is a box as defined in (\ref{equ.hypothesis}) with bounds $\boldsymbol{c}_l, \boldsymbol{c}_h$, we can pre-define a small threshold $\epsilon$ to examine if  $\boldsymbol{\theta}_{K-1}$ is on the boundary of $\Theta_0$. If $\boldsymbol{\theta}_{K-1}$ satisfies
\begin{equation}
        \label{equ.expand_cond}
\min_{j{=}1,...,r}\ \min\big(\lvert  \boldsymbol{c}_l[j]- \boldsymbol{\theta}_{K-1} [j] \rvert, \lvert \boldsymbol{c}_u[j]- \boldsymbol{\theta}_{K-1} [j] \rvert \big) \leq \epsilon
\end{equation}
it means $\boldsymbol{\theta}_{K-1}$ is very close to the boundary of $\Theta_0$.  We consider there is likely misspecification in this case. 

\subsection{Detailed Algorithm of Safe MPC Alignment}
With the main results on convergence and misspecification, we present the detailed  algorithm in Algorithm \ref{algo.cutting}.  With  $\Theta_0$ and  a termination threshold $\rho_H$ pre-set, three steps in the hypothesis space cutting are executed until the user satisfies robot motion or the iteration count reaches the maximum count $K$ in (\ref{equ.K}). Note that during the execution of the MPC policy $\boldsymbol{\pi}_{\boldsymbol{\theta}_{i}}^\gamma$, the human can perform emergency stopping and reset the robot's state, which corresponds to Section \ref{sec.human_supervise}. 
After $K-1$ iterations, the algorithm checks the misspecification by (\ref{equ.expand_cond}).   

\begin{algorithm2e}[htbp]
	\small 
	\SetKwInput{Parameter}{Hyperparameter}	
	\SetKwInput{Initialization}{Initialization}
	\Initialization{Initial hypothesis space $\Theta_0$,  termination threshold $\rho_H$, $i=1$} 
 \smallskip
 Compute  maximum iteration count $K$ from  $\rho_H$ in \eqref{equ.K}\;
	\smallskip
\While{$i<K$}{
        Choose $\boldsymbol \theta_{i} \in \Theta_{i-1}$ using (\ref{equ.theta_choice})\;

Robot executes $\boldsymbol{\pi}_{\boldsymbol{\theta}_{i}}^\gamma$\;

\uIf{Human provides directional correction $\boldsymbol{a}_{k_i}$}{
 Generate $\bar{\boldsymbol{\mathbf{a}}}_i$ using (\ref{equ.a_extension})\;
Calculate the cutting set $\mathcal{C}_i$ using (\ref{equ.pwl_cut})\;
Update hypothesis space $\Theta_{i}=\Theta_{i-1} \cap \mathcal{C}_i$\;
 $i \gets i+1$\;
    }

        \smallskip
        
    \uElseIf{
     Human triggers emergency stop
    }
    {
    Human restarts robot from safe region\;
    }
    
    \smallskip
    
    \uElseIf{
     Human user is satisfied
    }
    {
    \Return $\boldsymbol{\theta}_i$\;
    }
  }
 \eIf{$\boldsymbol{\theta}_{K-1}$ satisfies \eqref{equ.expand_cond}}{
      \Return misspecification failure\;
    }{
      \Return $\boldsymbol{\theta}_{K-1}$
    }
\caption{Safe MPC Alignment Algorithm} \label{algo.cutting}
\end{algorithm2e}

\section{Numerical Examples}

In this section, we will evaluate the Safe MPC Alignment using two numerical examples: A pendulum task, where we will illustrate all theoretical aspects of the proposed  method, and  a thrust-controlled quadrotor navigation task, where we will demonstrate the capability of the method to learn safety constraints in complex 3D environments. Simulated human correction knowing ground-truth safety constraints will be used. We will compare the proposed method with baselines, and ablate the hyperparameters of the algorithm. All numerical experiments are implemented in Python, running on a desktop 
with Intel i9-13900K CPU. All our experiment codes are open-sourced at \url{https://github.com/asu-iris/Safe-MPC-Alignment}.


\begin{figure*}[!htpb]
    \centering
    \subfigure[Initial trajectory (with $\boldsymbol{\theta}_1$)]
    {
        \includegraphics[width=1.7in]{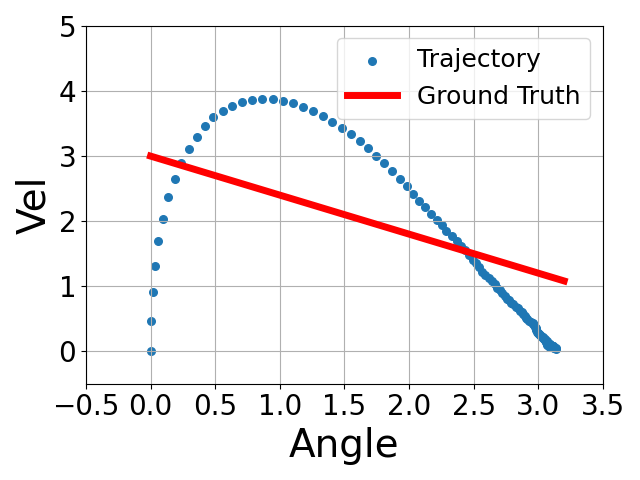}
    }
    \subfigure[After 3 corrections (with $\boldsymbol{\theta}_4$)]
    {
        \includegraphics[width=1.7in]{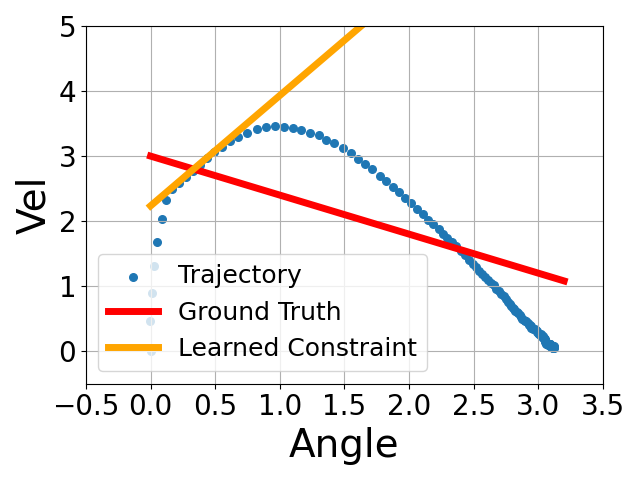}
    }
    \subfigure[After 6 corrections (with $\boldsymbol{\theta}_7$)]
    {
        \includegraphics[width=1.6in]{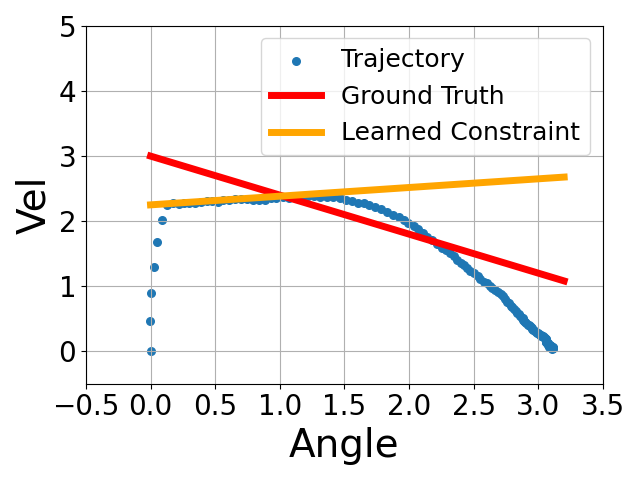}
    }
    \subfigure[After 14 corrections (with $\boldsymbol{\theta}_{15}$)]
    {
        \includegraphics[width=1.6in]{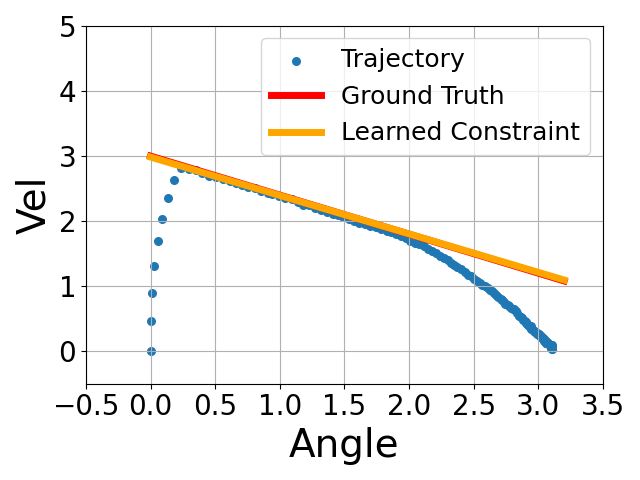}
    }
    \caption{The Safe MPC rollout trajectory after different learning interaction. The learned constraint with $\boldsymbol{\theta}_1$ is omitted since it covers the whole first quadrant. As learning proceeds, the constraint boundary (orange line) converges to the ground-truth  (red line). When the learning process terminates, the learned and ground-truth constraint boundaries almost overlap, suggesting the success of constraint learning from correction.}
    \label{fig.pen_trajs}
\end{figure*}

\subsection{Inverted Pendulum}
\label{sec.pendulum}
\subsubsection{Setup}
The dynamics of an inverted pendulum is
\begin{equation}\label{equ.pendulum_dyn}
    \ddot \alpha = \frac{3}{ml^2}(-\frac{1}{2}m\mathrm{g}l\sin{\alpha} + u - d \dot \alpha),
\end{equation}
with $\alpha$ being the angle between the pendulum and gravity,
 $\mathrm{g}$ the gravity constant, $u$  input torque, $m$, $l$ and $d$ the length,   inertia, and damping ratio. 
The system state is $\boldsymbol{x}=[\alpha,\dot \alpha]\tran$,  (\ref{equ.pendulum_dyn}) discretized by the Euler method with $\triangle t = 0.02 \mathrm{s}$. A  zero-mean Gaussian noise with covariance $\Sigma=\mathbf{diag}(1,4)\times 10^{-5}$ is added to $\boldsymbol{x}$ at each MPC rollout to simulate the state noise. The task is  to control the pendulum to a target $\boldsymbol{x}^*=[\pi,0]\tran$. 
 We  pose an unknown global  
state constraint on pendulum motion:
\begin{equation}
    0.6 \alpha+ \dot{\alpha}=\boldsymbol{\theta}^H\boldsymbol{x} \leq 3,
    \label{equ.pen_gt_constraint_rollout}
\end{equation}
where $\boldsymbol{\theta}^H=[0.6,1]\tran$ is  to be learned. All other parameter values are given in Appendix.

In Safe MPC (\ref{equ.robot_mpc}), we set $T=40$ and the cost functions as
\begin{equation}
\begin{gathered} 
c(\boldsymbol{x}_t,u_t) =  R u_t^2\\ 
h(\boldsymbol{x}_T) =  (\boldsymbol{x}_T - \boldsymbol{x}^*)^T Q (\boldsymbol{x}_T - \boldsymbol{x}^*),
\end{gathered}
\end{equation}
respectively, with $R=0.1$ and $Q = \mathrm{diag}(25,10)$.
To ensure the safe MPC to satisfy a global constraint, it is sufficient to learn a safety constraint defined on the closest controllable state. Thus,  we define
\begin{equation}
    {g}_{\boldsymbol{\theta}}(\boldsymbol{\xi}) = -3 + \boldsymbol{\theta}\tran\boldsymbol{x}_1.
    \label{equ.pen_param_constraint_rollout}
\end{equation}
In the penalty-based  MPC (\ref{equ.robot_mpc_approx}), we set $\gamma=0.1$.

Suppose  the user intent set $\Bar\Theta_H=\{\boldsymbol{\theta}|\norm{\boldsymbol{\theta}-\boldsymbol{\theta}_H}\leq 0.02\}$. The simulated (synthetic)  correction $\mathbf{a}$ are generated as follows: when the system state is near the safety boundary, i.e.,  ${g}_{\boldsymbol{\theta}_H}(\boldsymbol{\xi}_{\boldsymbol{\theta}}^\gamma)>-\epsilon_g$ ($\epsilon_g=0.25$ here), a directional correction $\mathbf{a}$ is be generated in probability $p=0.3$ using $\mathrm{sign}(-\nabla B(\boldsymbol\xi_{\boldsymbol \theta}^\gamma, \boldsymbol \theta_H))$ (from which only the first step is taken as in  {\eqref{equ.a_extension}}). The correction is probabilistic to simulate the randomness of human providing corrections.  It is easy to verify that the simulated $\mathbf{a}$ is valid in term of satisfying the hypotheses in section \ref{sec.idea_hypothesis}. 


\subsubsection{Convergence Analysis}

\begin{figure}[h]
\centering

\subfigure[Logarithm MVE Volume]
    {
        \includegraphics[width=0.22\textwidth]{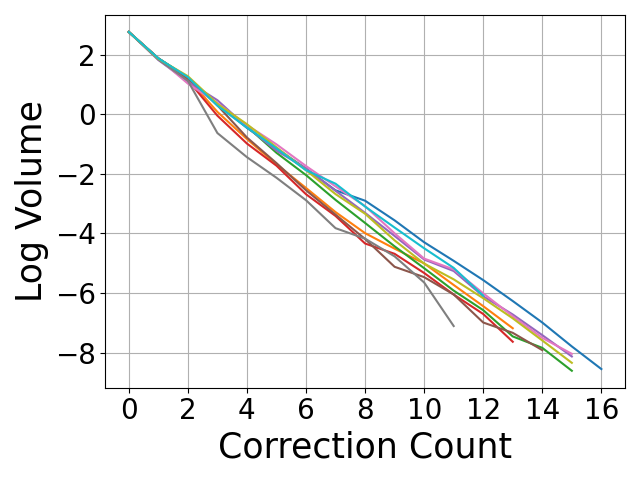}
    } 
\subfigure[Parameter Estimation Error]
    {
        \includegraphics[width=0.22\textwidth]{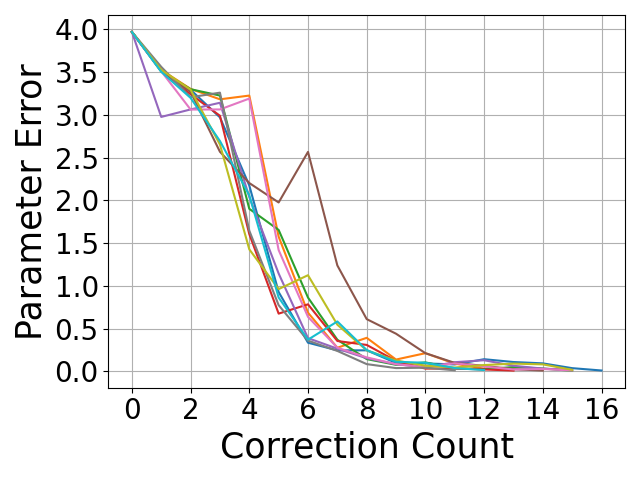}
    }

\caption{(a) The plot of $\log \det \bar H$ with respect to correction numbers. In all algorithm runs (different colors), the plot shows the near-linear decrease of the logarithm volume, suggesting the exponential decrease of the volume of MVE of the hypothesis spaces, as asserted in Theorem \ref{theorem_convergence}. (b) The plot of the parameter estimation error $\norm{\boldsymbol{\theta}_i-\boldsymbol{\theta}_H}_2$ versus correction numbers. In all runs, the parameter estimation error can converge to a value lower than 0.02 within 17 corrections, consistent with the theoretical bound in Theorem \ref{theorem_convergence}.}

\label{fig.pen_aspects}
\end{figure}

We first evaluate the convergence  of our algorithm without hypothesis misspecification. In Algorithm \ref{algo.cutting}, we set the initial hypothesis $\Theta_0$ with box boundaries $\boldsymbol{c}_l = [-6,-6]\tran$ and $\boldsymbol{c}_u = [2,2]\tran$; thus $\Theta_0$  includes  $\Bar\Theta_H$. The initial  $\boldsymbol{\theta}_1=[-2,-2]\tran$ (i.e., the center of  $\Theta_0$). 

With each simulated  correction, we track  $\boldsymbol{\theta}_i$ in each learning update. Each update of ${\Theta}_i$ takes less than 30ms. $\boldsymbol{\theta}_i$ falls in the ground-truth set $\Bar\Theta_H$ after 14 corrections. The MPC rollout trajectory (phase plot) with each $\boldsymbol{\theta}_i$ is shown in Fig. \ref{fig.pen_trajs}. The ground-truth constraint (\ref{equ.pen_gt_constraint_rollout}) and the learned constraints $\boldsymbol{g}_{\boldsymbol{\theta}_i}$ are also plotted.  Fig. \ref{fig.pen_trajs} shows that the true safety constraint can be successfully learned from simulated correction.

To  analyze the convergence, we run the algorithm for 10 times. In each run, we keep track of two metrics over corrections (iterations) and show them in  Fig. \ref{fig.pen_aspects}. The first metric (left panel) is the logarithm volume measure of the solved MVE $\log \det \bar H$ in (\ref{equ.MVE_opt}); this provides a lower bound of the volume of  $\Theta_i$ to show the exponential convergence. The second (right panel) is the parameter error $\norm{\boldsymbol{\theta}_i-\boldsymbol{\theta}_H}_2$.  From Fig. \ref{fig.pen_aspects}, we can conclude the proposed method is



\begin{itemize}
    \item[(I)] Fast-converging. The logarithm volume $\log \det \bar H$ has a near-linear decreasing speed. This is consistent with the theory in Lemma \ref{lemma v}.
    \item[(II)] Data-efficient.  $\norm{\boldsymbol{\theta}_i-\boldsymbol{\theta}_H}_2$ converges to the value lower than 0.02 within 17 corrections. This is consistent with the theoretical upper bound of  correction count in Theorem~\ref{theorem_convergence}:
\begin{equation}
    {\ln \frac{\tau_r \rho_H^r}{\mathbf{Vol} (\Theta_0)}}/{\ln (1-\frac{1}{r})}=17.28.
    \label{equ.pendulum_bound}
\end{equation}
\end{itemize}

\subsubsection{Misspecification Analysis}

\begin{figure}[h]
\centering
\vspace{-10pt}
\includegraphics[width=0.30\textwidth]{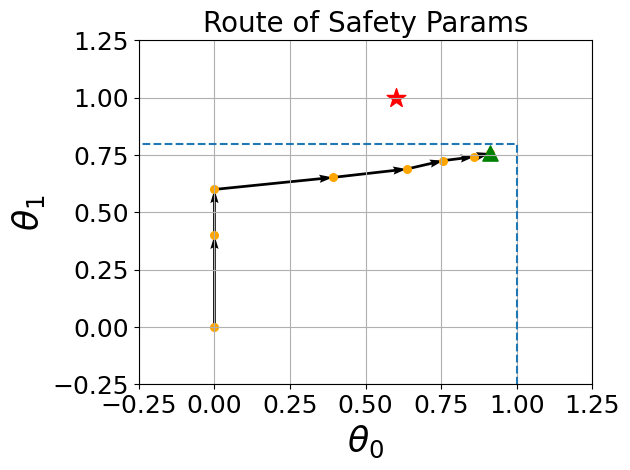}
\vspace{-10pt}
\caption{Experiment of misspecification case. The blue dashed lines are the boundary of $\Theta_0$.  Path of $\theta$ convergence, $\boldsymbol{\theta}_i,\ i=1,\dots,7$,  is shown in  orange dots with arrows showing the update direction. The ground-truth intent set is marked as the red star. Since misspecification happens for this case,  $\boldsymbol{\theta}$ will converge toward the boundary of $\Theta_0$ (the green triangle). This is consistent with the statement in Theorem \ref{theorem_misspec}. }
\label{fig.pen_misspec}
\end{figure}
To analyze the performance of our algorithm to certify misspecification cases, we set the initial hypothesis space $\Theta_0$  as ${\boldsymbol{c}_l} = [-1,-1]\tran$ and ${\boldsymbol{c}_u} = [0.8,0.8]\tran$ which does not contain $\Bar \Theta_H$. In this setting, the algorithm automatically stops after 7th correction. The path of $\boldsymbol{\theta}_i,\ i=1,\dots,8$ is shown in Fig. \ref{fig.pen_misspec}. $\boldsymbol{\theta}_i$ is continuously moving towards the boundary of $\Theta_0$. This fact verifies Theorem \ref{theorem_misspec} and justifies the proposed method is capable of misspecification detection.

\subsection{Quadrotor Navigation in Complex Environment}
\begin{figure}[h]
\centering
\includegraphics[width=0.22\textwidth]{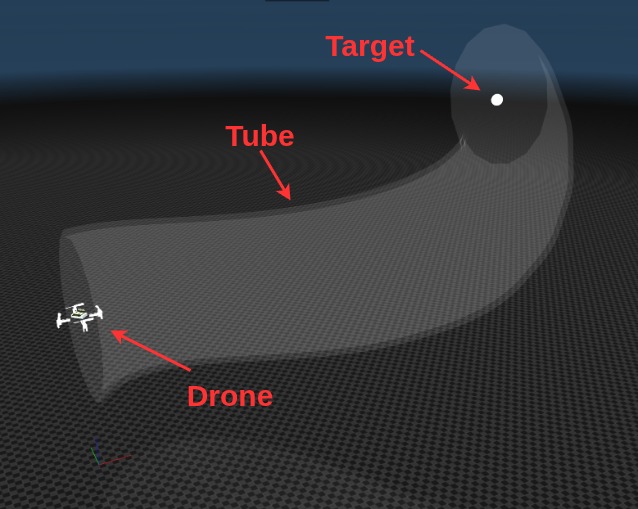}
\caption{Quadrotor navigation in an 3D  zigzag tube.}
\label{fig.drone_sim}
\end{figure}

\subsubsection{Setup}
We consider a thrust-controlled quadrotor navigating in an 3D  zigzag tube,  shown in Fig. \ref{fig.drone_sim}. The simulation environment is based on MuJoCo \cite{todorov2012mujoco}, and the quadrotor  dynamics follows \cite{jin2022learning}. Initially, the quadrotor has no knowledge of the safety constraint. When it nears the tube wall, a simulated (synthetic) correction is generated to push it away from the tube wall. Details of the setting are provided in Appendix~\ref{appendix.uav_sim}. 

To learn  positional safety constraints, we experiment with three  feature vectors $\boldsymbol{\phi}(\boldsymbol{\xi})$ in (\ref{equ.safey_param}):  
\paragraph{Polynomial feature vector} A  vector of polynomial basis functions up to 3rd order to properly capture the zig-zag pattern of the tube)  is defined in the quadrotors' 3D position $[x,y,z]$ at the 5th timestep in  motion plan $\boldsymbol{\xi}$:
\begin{multline}
        \boldsymbol{\phi}(\boldsymbol{\xi}) {=} [x_5^3, y_5^3, z_5^3, x_5^2, y_5^2, \\z_5^2, x_5, y_5, z_5, x_5y_5, y_5z_5, x_5z_5]^\top.
\end{multline}
The features are defined using the quadrotor’s 3D position at the 5th prediction step.
The hypothesis space is initialized as a box space with $\boldsymbol{c}_l =[-80,\dots,-80]\tran$ and $\boldsymbol{c}_u = [200,\dots,200]\tran$. We select $\phi_0 = 1$ and $\gamma=60$.

\paragraph{Pre-trained neural feature vector} We obtain the neural spacial features from a pre-trained Signed Distance Function (SDF) \cite{park2019deepsdf}. Specifically, we pre-train an SDF neural network using the point cloud data samples near the tube surface. This simulates the application where some geometric information of the environment is available. The neural SDF is pre-trained in a regression-like
manner, with distance, normal, and Eikonal losses (see details in  Appendix~\ref{appendix.uav_sim}). With the pretrained SDF function, we take the 16-dimensional output of the second-to-last layer of the SDF network as our 
 feature vector $\boldsymbol \phi(\boldsymbol{\xi})$. The input to $\boldsymbol \phi(\boldsymbol{\xi})$ is the quadrotor's 3D position $[x,y,z]$ at the 5th perdition step. Such a choice of feature vector $\boldsymbol \phi(\boldsymbol{\xi})$ means that we will use the proposed Safe MPC Alignment to fine tune the last layer of the learned SDF, $g(\boldsymbol{\xi})=\boldsymbol{\theta}\tran\boldsymbol{\phi}(\boldsymbol{\xi})$, for its safe navigation control.

To facilitate the real-time Safe MPC with the neural safety constraint $g(\boldsymbol{\xi})=\boldsymbol{\theta}\tran\boldsymbol{\phi}(\boldsymbol{\xi})$, the optimization (\ref{equ.robot_mpc}) uses the linearized $g(\boldsymbol{\xi})$ for calculation, which is linearized at the real robot state $\boldsymbol{x}^{\text{real}}_k$. The initial bounds of the hypothesis space are set to $\boldsymbol{c}_l =[-0.2,\dots,-0.2]\tran$, $\boldsymbol{c}_u = [0.1,\dots,0.1]\tran$. We select $\phi_0 = 0.1$ and $\gamma = 50$.

To ensure the feasibility of the learned constraint, in the above feature vectors, two additional cuts are added in each MVE update to enforce that the start and goal remain within the intended safe region. The task is successful if the drone reaches the goal without triggering this stop.

\paragraph{Polynomial features with velocity features} We consider the safety constraints are additionally defined in the velocity space. After learning the positional constraint with the polynomial feature vector, a new set of velocity-based features over all prediction steps, $\boldsymbol{\phi}(\boldsymbol{\xi}) = [\|\boldsymbol{v}_1\|^2, \dots, \|\boldsymbol{v}_{10}\|^2]^\top$, is added. In this case, we will freeze the weights for polynomial features, and only learn the weights for velocity features. Our objective is to impose an additional constraint that caps the quadrotor’s velocity magnitude below 0.45 [m/s], while ensuring that the previously learned positional constraint remains satisfied for collision avoidance. When the quadrotor's velocity exceeds a correction triggering threshold (0.4), a simulated correction is applied to push the quadrotor in the direction of negative velocity. The hypothesis space for the velocity features is initialized with bounds $\boldsymbol{c}_l = [0, \dots, 0]^\top$ and $\boldsymbol{c}_u = [0.16, \dots, 0.16]^\top$. We set the bias feature $\phi_0 = 1$ and select $\gamma = 100$.

\begin{figure}[!htpb]
\centering

\subfigure
    {
        \includegraphics[width=0.13\textwidth]{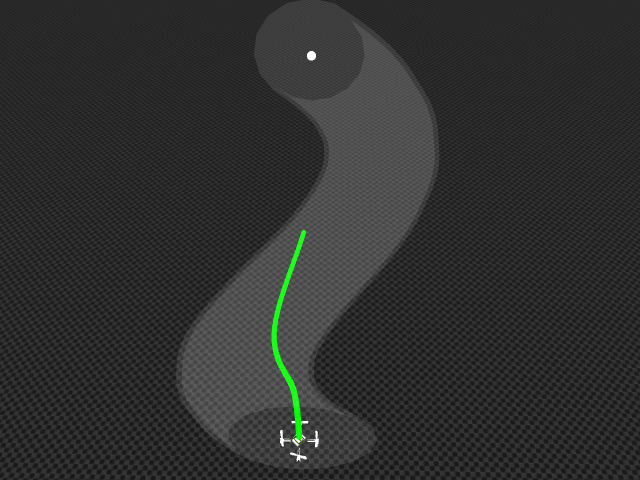}
    } 
\subfigure
    {
        \includegraphics[width=0.13\textwidth]{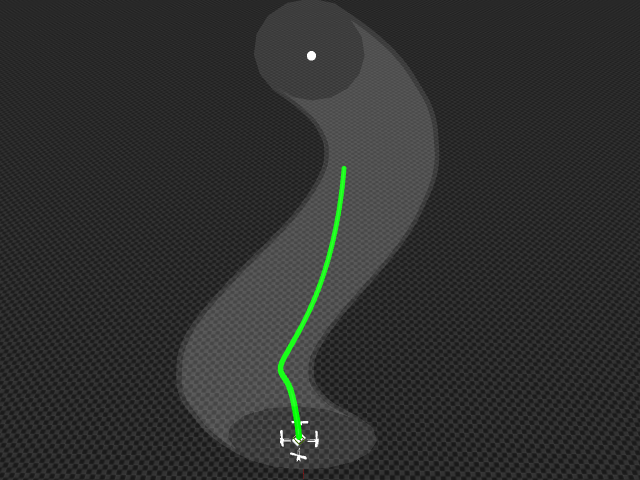}
    }
\subfigure
    {
        \includegraphics[width=0.13\textwidth]{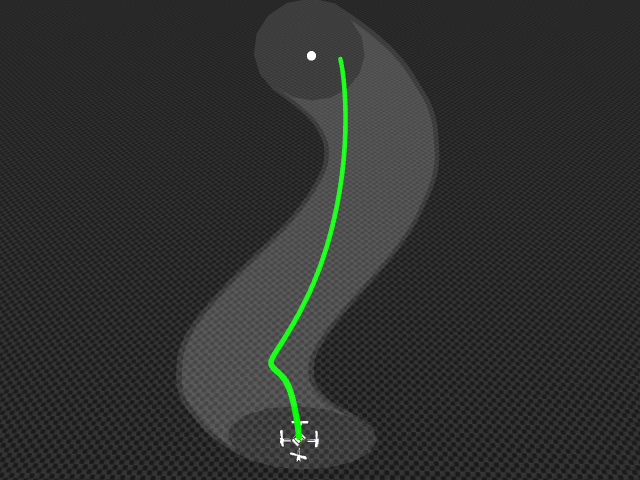}
    }

\subfigure
    {
        \includegraphics[width=0.13\textwidth]{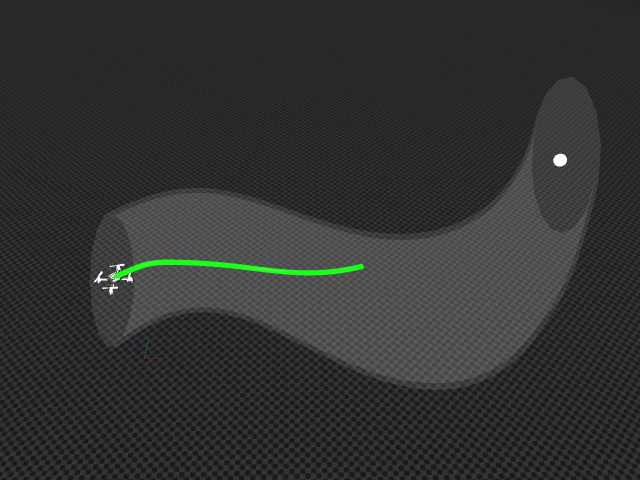}
    } 
\subfigure
    {
        \includegraphics[width=0.13\textwidth]{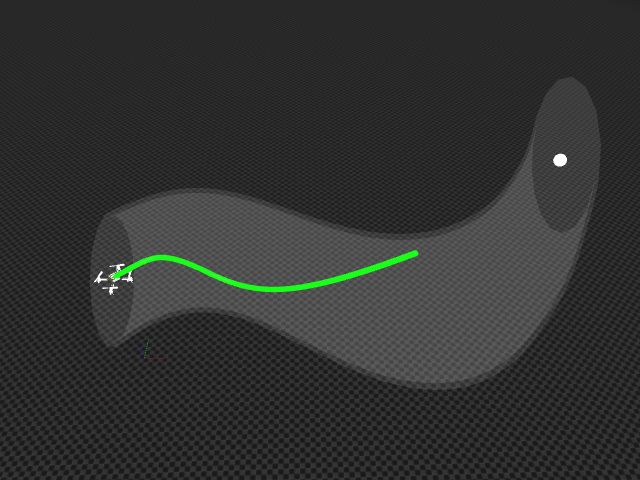}
    }
\subfigure
    {
        \includegraphics[width=0.13\textwidth]{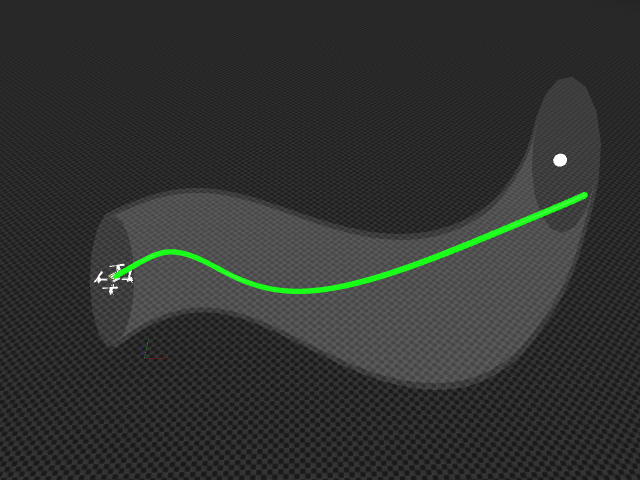}
    }
\caption{The Safe MPC  rollout trajectories (in green) for learning safety constraints  with polynomial features after 26, 38, 45 corrections (different columns). The first row and second row show the top and  the side views of the navigation environment.}
\vspace{0pt}
\label{fig.res_poly}
\end{figure}

\begin{figure}[!htpb]
\centering
\subfigure
    {
        \includegraphics[width=0.13\textwidth]{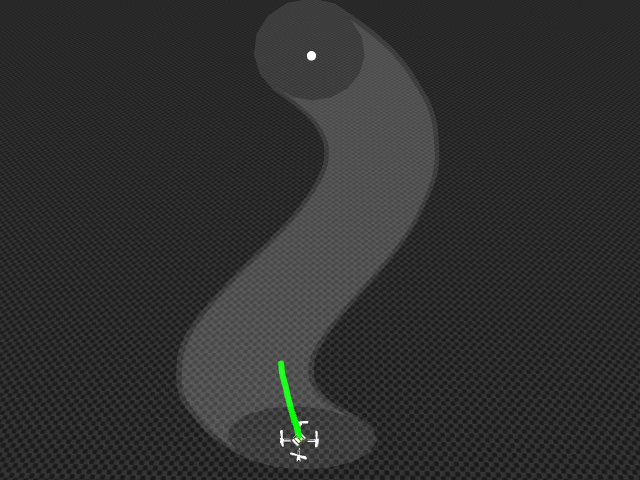}
    } 
\subfigure
    {
        \includegraphics[width=0.13\textwidth]{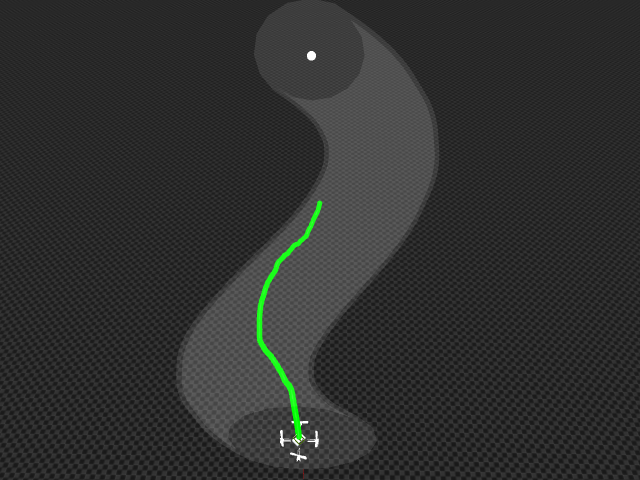}
    }
\subfigure
    {
        \includegraphics[width=0.13\textwidth]{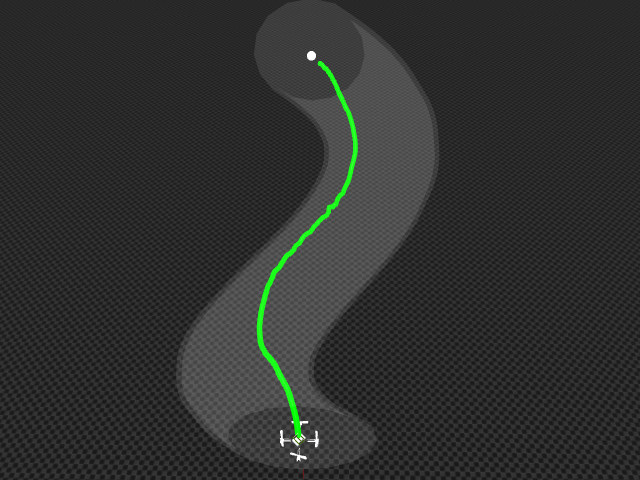}
    }

\subfigure
    {
        \includegraphics[width=0.13\textwidth]{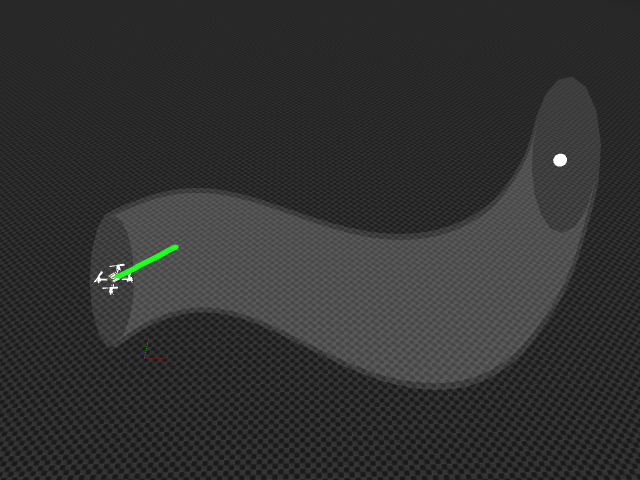}
    } 
\subfigure
    {
        \includegraphics[width=0.13\textwidth]{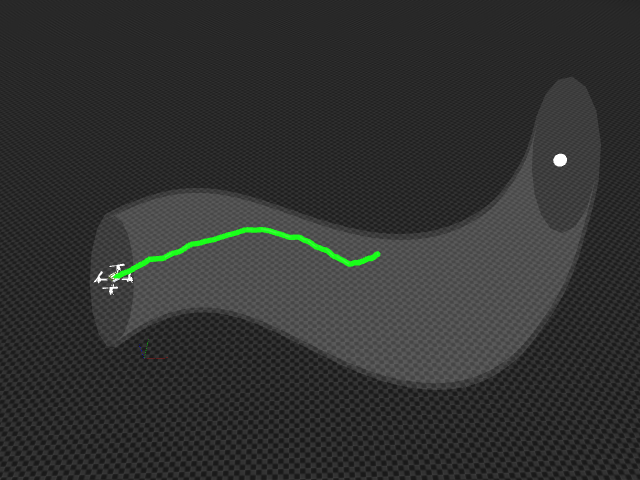}
    }
\subfigure
    {
        \includegraphics[width=0.13\textwidth]{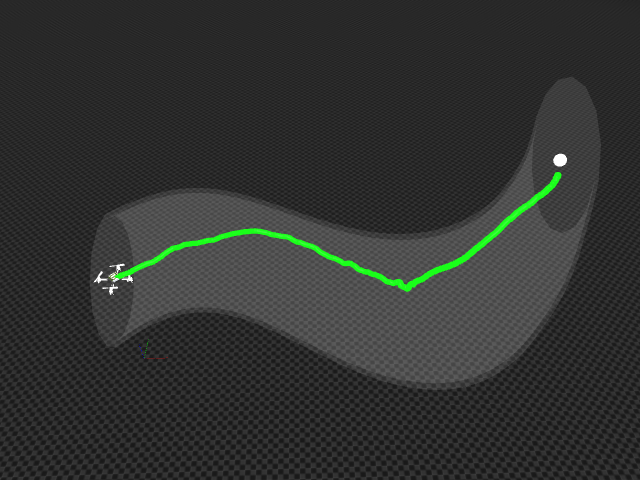}
    }
\caption{The  Safe MPC  rollout trajectories (in green) for learning safety constraints  with  neural features after 6, 9, 11 corrections (different columns). The first row show the top view,  and second row shows the side view of the navigation environment.}
\vspace{-10pt}
\label{fig.res_neural}
\end{figure}

\subsubsection{Result}
Fig. \ref{fig.res_poly} and Fig. \ref{fig.res_neural} show the rollout trajectories of the learned Safe MPC policy using polynomial and neural features, respectively. With polynomial features, the task is completed in 45 corrections, while the neural features require only 11. This is because the neural features, constructed from the second-to-last layer of pre-trained neural SDF, have already encoded the partial knowledge of spatial constraints. 

\begin{figure}[h]
    \centering
    \includegraphics[width=0.75\linewidth]{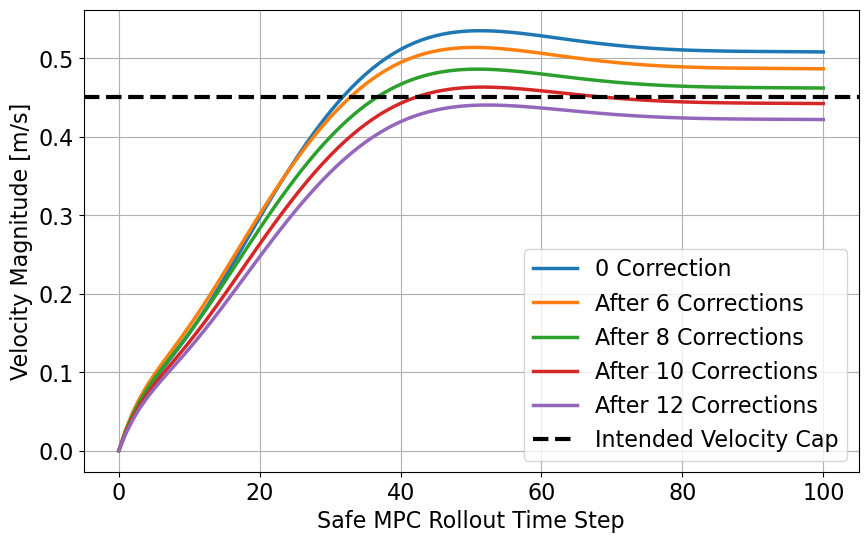}
    \caption{Velocity profile during the first 100 timesteps (acceleration phase) of the drone trajectory. After 12 corrections,
the quadrotor successfully learns to stay below the intended
velocity cap of 0.45 (dashed line). }
    \label{fig.vel_res}
\end{figure}

For the learning of velocity constraint, Fig. \ref{fig.vel_res} shows the velocity profile of the quadrotor during the Safe MPC rollout in the 100 time steps (acceleration phase) after different number of simulated corrections  (different colors). After 12 corrections, the quadrotor successfully learns to stay below the intended velocity cap of 0.45 (dashed line). Throughout this process, the positional safety constraint remains satisfied, with the quadrotor continuing to navigate within the tube to the target. This result demonstrates the flexibility of the proposed method in scenarios involving both positional and velocity constraints.

\subsection{Baseline Comparison}
\label{sec.comp_ablat}
In this session, we compare our method with two baseline methods using magnitude corrections in the pendulum environment. Since the literature on directly learning safety constraints from human feedback is sparse, the chosen baselines are an adaptation from the most relevant methods.  


     \subsubsection{Baseline 1: Gradient Matching} This baseline is inspired by \cite{losey2019learning}, where raw human correction (with magnitude) is directly used as the gradient direction. In a similar spirit, when a human correction $\mathbf{a}$ is made on robot motion $\xi_{\boldsymbol \theta}^\gamma$, we directly use the human correction as the true gradient label to learn the constraint $\boldsymbol{\theta}$. This leads to a learning loss
    \begin{equation}
        \mathcal{L}(\boldsymbol{\theta}) = \norm{\mathbf{a} + \nabla B(\boldsymbol\xi_{\boldsymbol \theta}^\gamma, \boldsymbol{\theta})}^2.
    \end{equation}
    
    \subsubsection{Baseline 2: Maximum Likelihood}. This baseline follows \cite{poletti2023learning}, where it views $e^{{g}_{\boldsymbol{\theta}}(\boldsymbol{\xi})}$  as a probability of human applying the "stop-feedback" to robot motion $\boldsymbol{\xi}$, when the robot reaches in a $\epsilon_g$-proximity of the safety boundary, i.e., when ${g}_{\boldsymbol{\theta}_H}(\boldsymbol{\xi}_{\boldsymbol{\theta}}^\gamma)>-\epsilon_g$ . Thus, a dataset of trajectory-correction paris, i.e.,  $\mathcal{D} = \{\mathbf{a}_i, \boldsymbol{\xi}_i \}_{i=0}^{N}$, can be used as the training data to learn $\boldsymbol{\theta
    }$ using maximum likelihood, while also penalizing positivity of  ${g}_{\boldsymbol{\theta}} (\boldsymbol{\xi}_i)$ as a regulizer term. This leads to the following loss
    \begin{equation}
        \mathcal{L}(\boldsymbol{\theta}) = \sum\nolimits_{i=1}^N \big( -{g}_{\boldsymbol{\theta}}(\boldsymbol{\xi}_i) + \alpha\cdot\mathrm{softplus}( g_{\boldsymbol{\theta}}(\boldsymbol{\xi}_i)) \big).
    \end{equation}
    We choose $N=15$ (similar to the correction number needed for our method to converge) and  $\alpha=5$ for its best performance.

The comparison between our method and the gradient matching baseline focuses on data efficiency. The convergence criteria of $\norm{\boldsymbol{\theta}-\boldsymbol{\theta}_H}\leq 0.02$ is set for both methods. 50 independent runs are performed and the total number of correction at convergence are counted. We use Adam optimizer \cite{diederik2014adam} with a learning rate of 0.02.  We summarize the results in Table \ref{tbl.baseline_comparison}. 
The parameter error $\norm{\boldsymbol{\theta}-\boldsymbol{\theta}_H}$ versus learning iteration for a best run (for the baseline) is plotted in Fig. \ref{fig.baseline_I}. The comparison results demonstrate data efficiency of the proposed method compared to the gradient matching baseline, even though the latter uses both magnitude and direction information of corrections to learn.

\begin{table}[h]
\begin{center}
\caption{Comparison of Safe MPC Alignment and Baseline I}
\begin{tabular}{cc}
     \hline
     Method & Correction count for convergence\\
     \hline
     Safe MPC Alignment & $11.76 \pm 2.14$ \\
     \hline
     Gradient Matching & $240.38 \pm 120.70$\\
     \hline
\end{tabular}
\label{tbl.baseline_comparison}
\end{center}
\end{table}

 \begin{figure}[h]
\centering
\vspace{-20pt}
\subfigure[Comparison: gradient matching]
    {
        \includegraphics[width=0.22\textwidth]{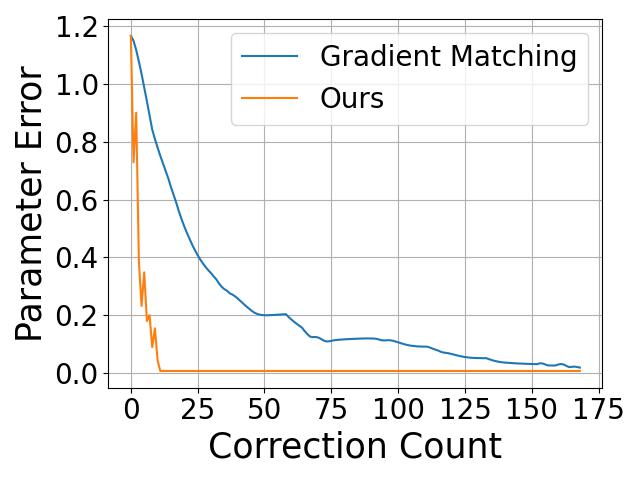}
        \label{fig.baseline_I}
    } 
\subfigure[Comparison: maximum likelihood]
    {
        \includegraphics[width=0.22\textwidth]{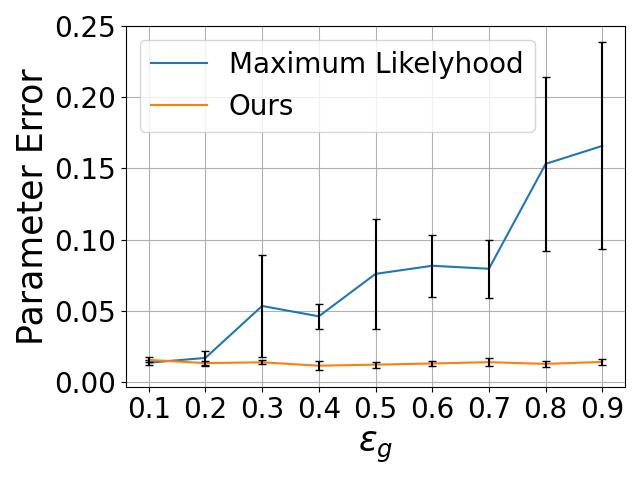}
        \label{fig.baseline_II}
    }
    \vspace{-5pt}
\caption{(a) Comparison with gradient matching baseline: the parameter error $\norm{\boldsymbol{\theta}-\boldsymbol{\theta}_H}$ versus learning iteration. This is best run with the fewest corrections for the baseline. Our method converges after 12 corrections, while the baseline needs more than 160 corrections. (b) Comparison with the maximum likelihood baseline under the different  $\epsilon_g$: the y-axis is $\norm{\boldsymbol{\theta}-\boldsymbol{\theta}_H}$ at convergence with the error bar denoting the standard deviation.  The maximum likelihood baseline fails to learn the correct weights when $\epsilon_g$ is large.}
\label{fig.baseline_converge}
\end{figure}

The comparison with maximum likelihood baseline uses the varying positions of simulated correction, i.e., varying $\epsilon_g$. Recall that the simulated corrections are generated  only when ${g}_{\boldsymbol{\theta}_H}(\boldsymbol{\xi}_{\boldsymbol{\theta}}^\gamma)>-\epsilon_g$. The value of $\epsilon_g$ determines the position of correction. In our comparison, we vary $\epsilon_g$ from 0.1 to 0.9 and track the converged parameter error for both methods. As shown in Fig. \ref{fig.baseline_II}, Safe MPC alignment consistently converges to the true $\boldsymbol{\theta}_H$ regardless of $\epsilon_g$, while the maximum likelihood baseline fails when $|\epsilon_g|$ is large. The results suggest the robustness of the proposed method against different correcting conditions compared to the maximum likelihood baseline.


\subsection{Ablation Study}
We conduct two ablation studies to show the performance of our algorithm given different design aspects. The task is still based on the pendulum task in Section \ref{sec.pendulum}.

\subsubsection{Choosing $\boldsymbol{\theta}_i$: MVE centering versus random sampling}
We compare two strategies in selecting  $\boldsymbol{\theta}_i$ from the hypothesis space $\Theta_{i-1}$: MVE centering (used in our method) and random sampling.  For the latter,  $\boldsymbol{\theta}_i$ is sampled from the uniform distribution supported by $\Theta_{i-1}$.
We compare the total number of corrections at learning convergence under the two strategies.

With 50 independent runs, we observe the MVE centering requires $11.76 \pm 2.14$ corrections, while random sampling  $11.88 \pm 3.50$  corrections. While the mean value of the correction count looks similar, the random sampling shows a slightly larger variance. This is understandable given sampling randomness. We also observed that the sampling process introduced a large computational overhead, which is around 25\% time slower than MVE centering. In contrast, the MVE centering used in our algorithm shows low variance and computational efficiency. Because selecting $\boldsymbol{\theta}_i$ is cast as a convex programming for MVE centering, such an advantage would be more significant for a high-dimensional parameter space. 

\subsubsection{Choosing $\gamma$}
We study how the setting of $\gamma$ affects the learning performance. We vary $\gamma$ from 0.01 to 10. Fig. \ref{fig.ablation_gamma} shows the total number of corrections needed at convergence versus different $\gamma$s.  The results show that the learning performance remains the same given different settings of $\gamma$. Since $\gamma$ is a parameter to weigh the safety conservativeness in the penalty-based MPC (\ref{equ.robot_mpc_approx}), it means 
our method can successfully learn a safety constraint when the robot executes with a Safe MPC policy of different conservativeness levels. 
\begin{figure}[h]
\centering
\includegraphics[height=3cm]{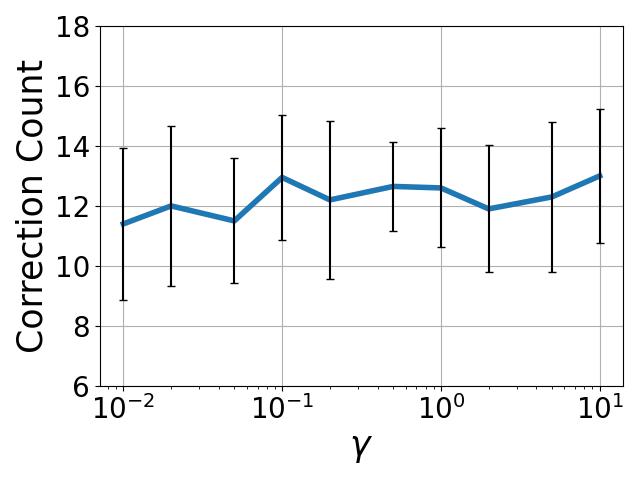}
\vspace{-10pt}
\caption{The total number of corrections needed for convergence under different $\gamma$, the x-axis is of log scale.}
\label{fig.ablation_gamma}
\end{figure}

\section{User Study in Simulation Games}

We conduct two human user studies to evaluate  the Safe MPC Alignment. One is the drone navigation game and the other the  Franka robot arm reaching game. In both games, the robot is controlled by an MPC policy and human participants provide real-time corrections through a keyboard. The objective of each game is to teach a robot to successfully learn Safe MPC to move through environments with obstacles.  Section \ref{sec.simulation_setting} describes the settings. Section \ref{sec.sim_participants} details the user study procedure. The results  are given in Section \ref{sec.sim_result}.

\subsection{Two Simulation Games}
\label{sec.simulation_setting}
In both user studies,  human participants are asked to give correction to the following two physics-based simulation games developed based on  MuJoCo \cite{todorov2012mujoco}. 

(1) Drone Navigation Game.
As shown in Fig. \ref{subfig.drone}, the goal of this game is to let a drone to successfully navigate through a narrow gate (yellow rectangle), reaching different targets (shown in yellow points behind the gate). The safety constraint $\boldsymbol{\theta}\tran\boldsymbol{\phi} $  is parameterized using a linear combination of spacial radial basis functions (RBFs), defined 
on y-z plane at $x=10$. The drone's control inputs are the thrusts of four rotors and human correction is applied via keyboard, with specific keys mapping to the drone's control space. See the details in Appendix \ref{sec.uav_game}.

(2) Franka arm reaching game.
As shown in Fig. \ref{subfig.arm}, the goal of the Franka arm reaching game is to teach a Franka arm such that its gripper hand successfully reaches different target positions (cyan ball) through an narrow slot (two yellow bars). Note that the safety constraint is only applied to the end effector and the collision between the link and the slot is ignored. The safety constraint function $\boldsymbol{\theta}\tran\boldsymbol{\phi} $ is parameterized as a weighted combination of RBFs defined on the $z$ axis and rotations around the $x$ axis. Human corrections are applied via keyboard, with specific keys mapping to the robot control space $\mathbb{R}^6$ (we use operational space control). See the details in Appendix \ref{sec.arm_game}.

In both games, the robots are controlled with  Safe MPC policy \eqref{equ.robot_mpc_approx} but fail to maintain safety with the initial safety constraints. To simulate an emergency stop, the “Enter” key is used as a reset command, allowing users to reset the environment to its initial state whenever the participant deems the robot violating the safety constraint.









\begin{figure}[h]
\centering

\subfigure[Drone Navigation game]
    {
        \includegraphics[width=0.18\textwidth]{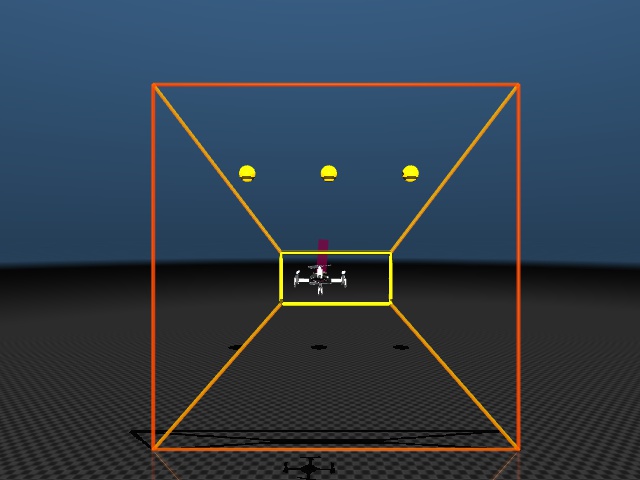}
        \label{subfig.drone}
    } 
\subfigure[Franka arm reaching game]
    {
        \includegraphics[width=0.18\textwidth]{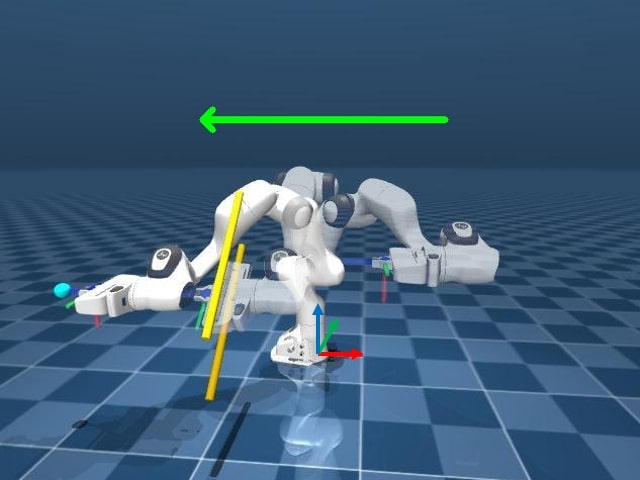}
        \label{subfig.arm}
    }

\caption{Two simulated games in user study. (a) Drone Navigation game. (b) Franka arm reaching game.}

\label{fig.sim_games}
\end{figure}

\begin{figure*}[h]
    \centering
    \subfigure[After 2 corrections]
    {
        \includegraphics[width=2.0in]{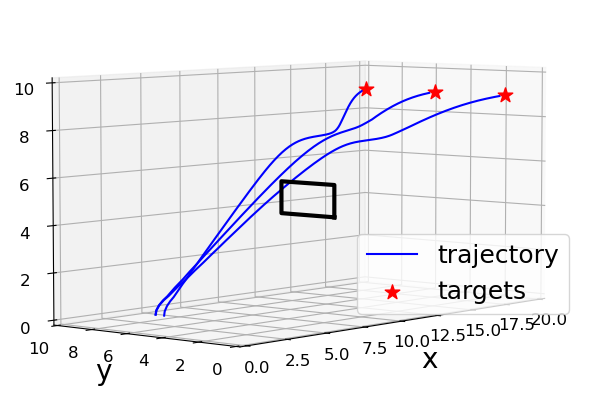}
    }
    \subfigure[After 6 corrections]
    {
        \includegraphics[width=2.0in]{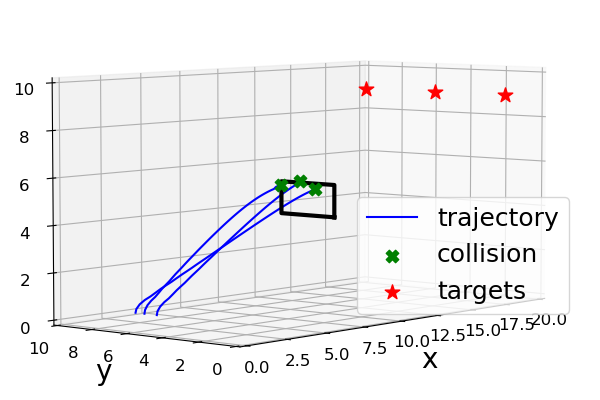}
    }
    \subfigure[After 12 corrections]
    {
        \includegraphics[width=2.0in]{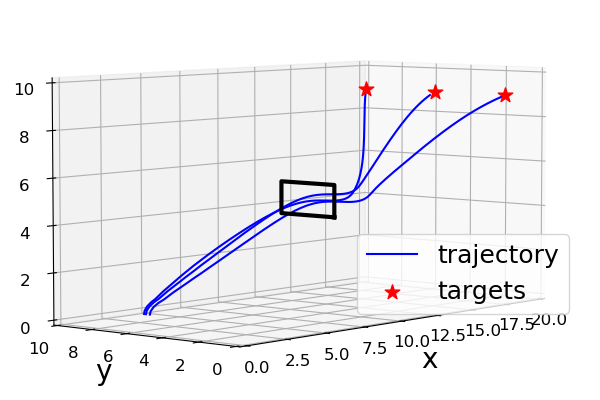}
    }
    \caption{Illustrations of the drone MPC rollout trajectory after $2,6,12$ corrections. The red stars denote the targets and blue curves denote the trajectories. The gate is drawn as black frame and the collision is highlighted in green crosses. User corrections typically lower the drone to make it pass through the gate.}
    \label{fig.uav_trajs}
\end{figure*}

\subsection{Participants, Procedures, and Metrics}
\label{sec.sim_participants}
\label{sec.sim_metrics}
\subsubsection{Participants}
We recruited 15 participants from Arizona State University, including undergraduates, masters, and PhD students with different backgrounds. Among all participants, 13 are male and 2  female. All participants are new to the user study. This user study had been reviewed and approved by the Institutional Review Board (IRB), and all participants had signed the consent forms. 

\subsubsection{Procedure}
Each participant was asked to play the two simulation games. In each game, each participant plays 10 trials.
In the drone navigation game, a trial is successful when the drone navigates through the gate without collision and reaches different targets  6 consecutive times with the learned safe MPC policy. A trial is deemed failed when there are too many corrections (exceeding  25), or the user believes the robot cannot be corrected anymore and gives up.
In the Franka reaching game, a trial is deemed successful when the robot arm reaches different targets with the end-effector having no collision with the slot. The failure of one trial is determined when there are too many corrections (exceeding 25) or the user believes the robot cannot be corrected anymore and gives up.
The result of every trial is recorded. Before formally starting the study, every user is instructed about the settings of the game and is given 2 chances to interact with the environment hands-on to get familiar with the keys.

\subsubsection{Metrics for Evaluation}
Two metrics are used to evaluate the outcomes:
(I) \textit{Success rate}: The overall  success rate suggest the  \textbf{effectiveness} of the proposed method.
(II)  \textit{The number of corrections needed for a successful trial}: The averaged number of user corrections for successful trials is a measure for the \textbf{efficiency} of the proposed method. Fewer corrections in successful trials suggest better human data efficiency.


\subsection{Result and Analysis}
\label{sec.sim_result}


\subsubsection{Results}

\begin{table}[!htpb]
\begin{center}
\caption{Result of user study for simulation games}
\begin{tabular}{ccc}
     \hline
     Environment & Success Rate & Correction count\\
     \hline
     Drone navigation & $89.3\% \pm 7.7\%$ & $15.7 \pm 10.6$\\
     Franka arm reaching &  $89.3\% \pm 11.2\%$ & $16.5 \pm 6.1$\\
     \hline
\end{tabular}
\label{tbl.sim_result}
\end{center}
\end{table}
We report the success rate and correction count in a successful game trial in Table \ref{tbl.sim_result}. Both metrics are averaged across users. The success rate is $89.3\% \pm 7.7\%$ for the drone navigation game and $89.3\% \pm 11.2\%$ for the Franka arm reaching game. The number of corrections for a successful trial is $15.7 \pm 10.6$ for the drone navigation game and   $16.5 \pm 6.1$ for Franka arm reaching game.  
In both games, each learning update of $\boldsymbol{\theta}$ takes approximately 0.15 seconds, during which the simulation briefly pauses. This brief pause is imperceptible to the user, especially since human corrections occur infrequently.

 The above result shows that the proposed method is effective (i.e., with a high success rate) in learning a safety constraint for varying tasks (recall targets are changing across trials).  It also shows the efficiency of the proposed method: successful learning only requires tens of human corrections. Since this method is online and the user can perform multiple corrections in one robot rollout trajectory, the constraint can be learned in a few runs. In Table \ref{tbl.sim_result}, one can note that the drone navigation task has a higher variance in the correction count than the Franka arm reaching task. This could be because the drone game has a larger motion range and longer rollout horizon. As a result, human correction strategies, such as correction position and timing,  vary more.

\subsubsection{Analysis}
To illustrate learning progress, in Fig. \ref{fig.uav_trajs} and Fig. \ref{fig.arm_behavior},  we show the intermediate robot motion after different numbers of human corrections.
Fig. \ref{fig.uav_trajs} shows the safe MPC rollout trajectory of the drone after different numbers of human corrections. Fig. \ref{fig.arm_behavior} shows the keyframes of the robot gripper when passing the slot after a different human corrections.

\begin{figure}[!htpb]
\centering

\subfigure[After 4 Corrections]
    {
        \includegraphics[width=0.45\textwidth]{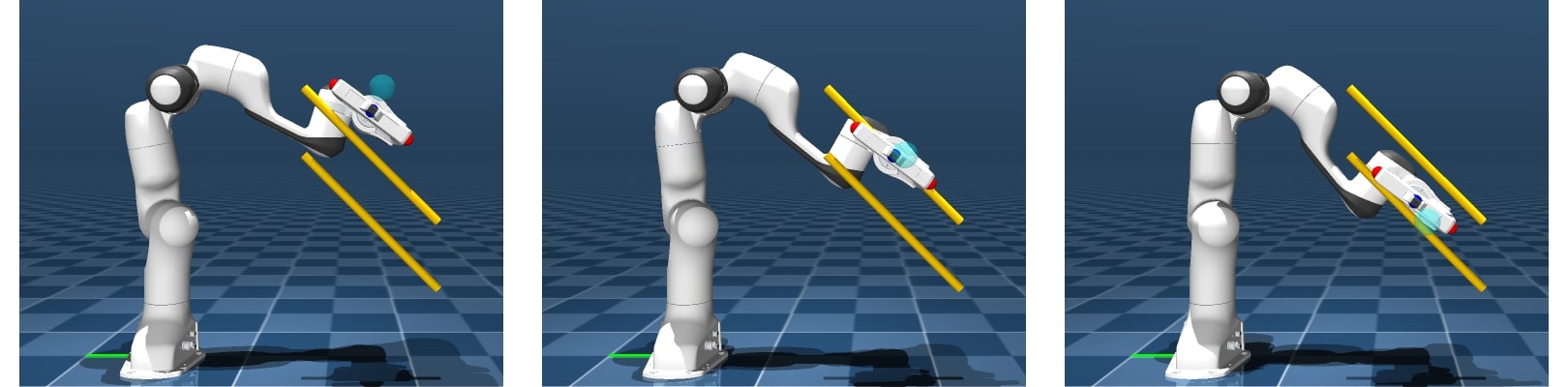}
    } 
\subfigure[After 17 Corrections]
    {
        \includegraphics[width=0.45\textwidth]{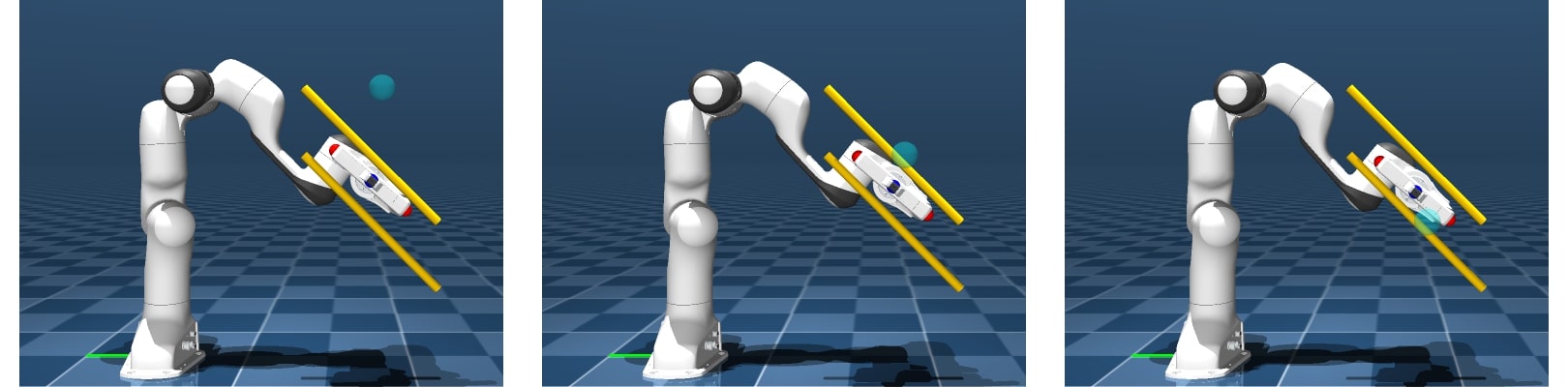}
    }

\caption{Screenshots of the robot gripper when passing through the slot after $k=4,17$ corrections,  given three different targets (cyan ball in different columns). (a): After 4 corrections, the gripper collides with the slot.  (b): After 17 corrections, the gripper succeeds in passing the slot for all targets.}
\label{fig.arm_behavior}
\end{figure}

To  understand the learned safety constraint ${g}_{\boldsymbol{\theta}}(\boldsymbol{\xi})$, in Fig. \ref{fig.uav_heatmap}, we draw the heat map of the learned ${g}_{\boldsymbol{\theta}^*}(\boldsymbol{\xi})$  for the drone navigation game, with $\boldsymbol{\theta}^*$ the averaged of weights learned. Recall in drone navigation game,  ${g}_{\boldsymbol{\theta
}}$ is defined 
on y-z plane at $x=10$. In the heat map, we draw  $0$-level contour of ${g}_{\boldsymbol{\theta
}^*}$ in dashed lines, which is the learned safety boundary. Also, we plot  y-z plane projection of the gate, which is the black box.

Fig. \ref{fig.uav_heatmap} shows that the learned safety boundary (corresponding to ${g}_{\boldsymbol{\theta}^*}(\boldsymbol{\xi})\leq 0$) is only similar to the top edge of the gate. This is explained below. As shown in Fig. \ref{subfig.drone}, the targets are set higher than the gate, and thus during the learning process, the motion of the drone is always going up, as shown in Fig. \ref{fig.uav_trajs}. As a result, the majority of human corrections happen when the drone approaches the top edge of the gate, leaving the learned safety boundary accurate near the top edge of the gate and bottom edge untouched. 
It is expected that to learn a safety constraint that fully represents the gate,  lower target positions should be added so that the drone will receive human corrections near the bottom edge of the gate.

\begin{figure}[!htpb]
\centering

\includegraphics[width=0.3\textwidth]{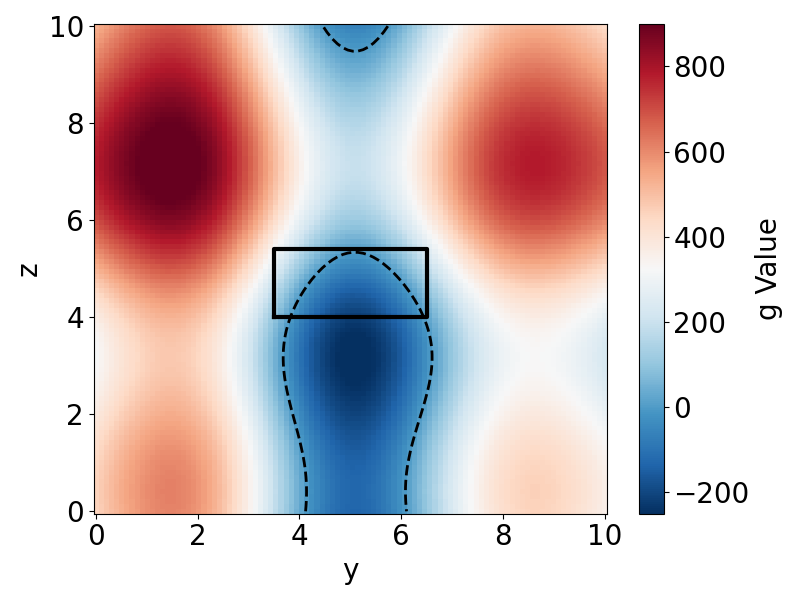}
\caption{Heatmap for the learned constraint in the drone game. The black solid line denotes the actual position of the gate in y-z plane. The black dashed curve denotes the boundary of the learned safety zone. Since the majority of human directional corrections happen when the drone approaches the top edge of the gate, the learned safety boundary is accurate near the top edge of the gate and the bottom boundary is untouched.}
\label{fig.uav_heatmap}
\end{figure}

\section{Real-world Experiment}
We conduct a user study to evaluate the performance of the {Safe MPC Alignment} in a real-world experiment. We focus on the task of \emph{mobile liquid pouring}: the Franka robot arm performs liquid pouring while moving from one location to targets. 
The task is challenging because it is hard to manually define a safety constraint to accomplish liquid pouring without spilling. The robot arm will learn a safe MPC controller from human physical corrections.

\subsection{Hardware Setup}


\begin{figure}[h]
\centering

\subfigure[]
    {
        \includegraphics[height=3.0cm]{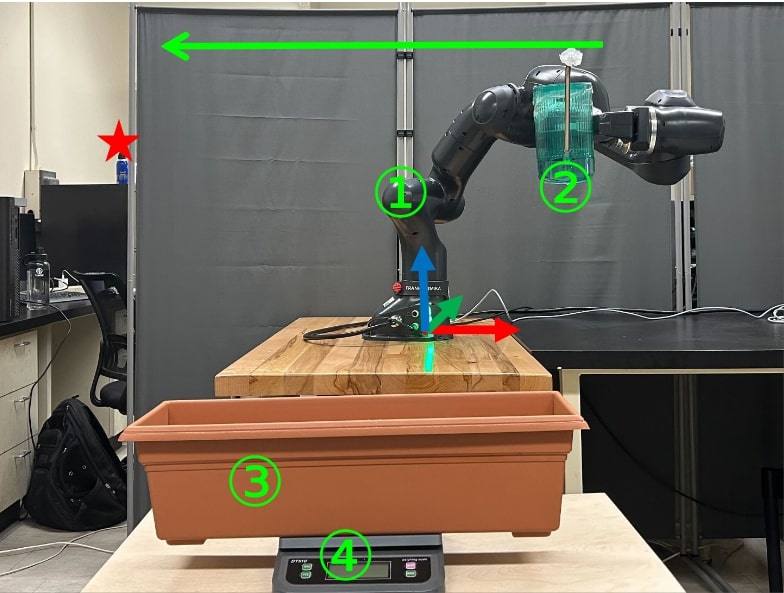}
        \label{fig.real_sys_photo}
    } 
\subfigure[]
    {
        \includegraphics[height=3.0cm]{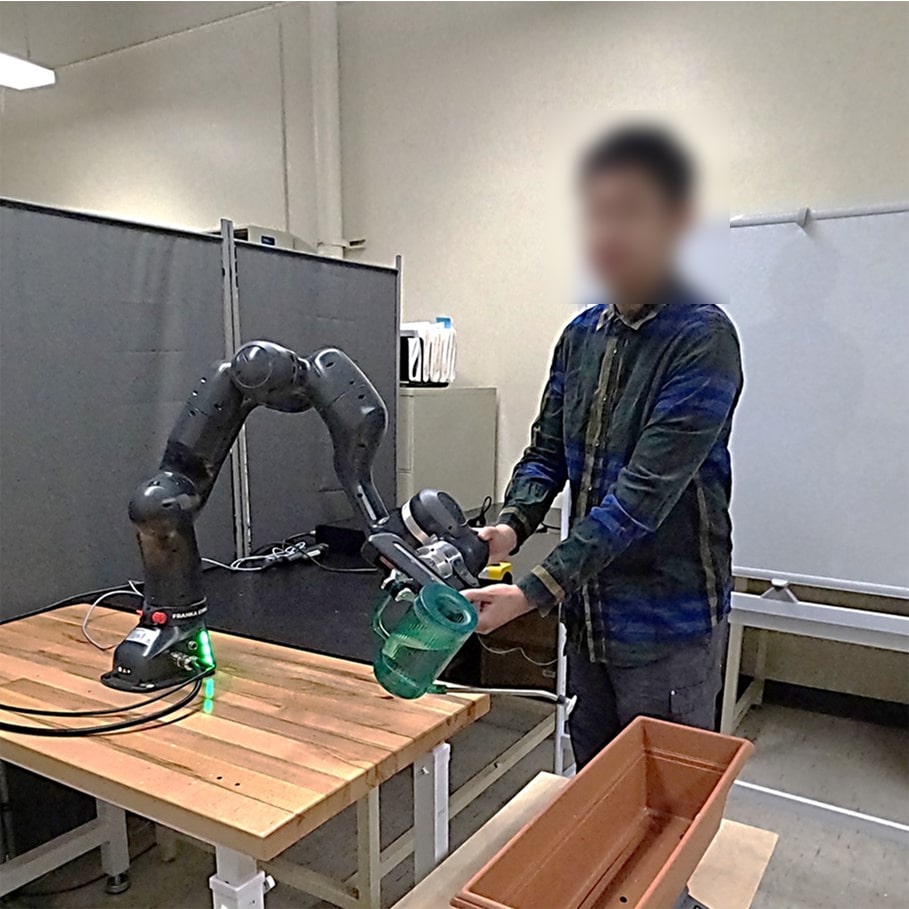}
        \label{fig.real_sys_corr}
    }
\caption{Illustration images of the real-world experiment. (a) Mobile water pouring task using Franka robot arm. \circled{1}: The robot arm. \circled{2}: The watering pot. \circled{3}: The bar sink. \circled{4}: The scale to measure the amount of water poured into the sink. The target is marked as the red star. The long arrow indicates the motion direction, and the RGB  frame is the world frame. (b) Illustration of human correction to the robot in the real-world experiment, by applying a contact force to the robot arm. The delta position of the end effector resulting from human contact interaction is used as human directional correction.}

\label{fig.real_sys_correction}
\end{figure}

The setup of the mobile water pouring task is shown in Fig. \ref{fig.real_sys_photo}.  
 A water pot is gripped by the robot's end-effector. The robot  moves from the starting point $[0.22,-0.5,0.48]\tran$ (with  quaternion $[0,-0.707,0,0.707]\tran$) to target point $[-0.65,-0.5,0.5]\tran$ (with quaternion $[0,-0.707,0,0.707]\tran$). During the motion, the robot needs to pour more than 50 grams of water into a bar sink. The bar sink is placed in the middle of the robot's path, but the precise size and location of the bar sink are unknown.

\begin{figure}[h]
\centering
\includegraphics[width=0.30\textwidth]{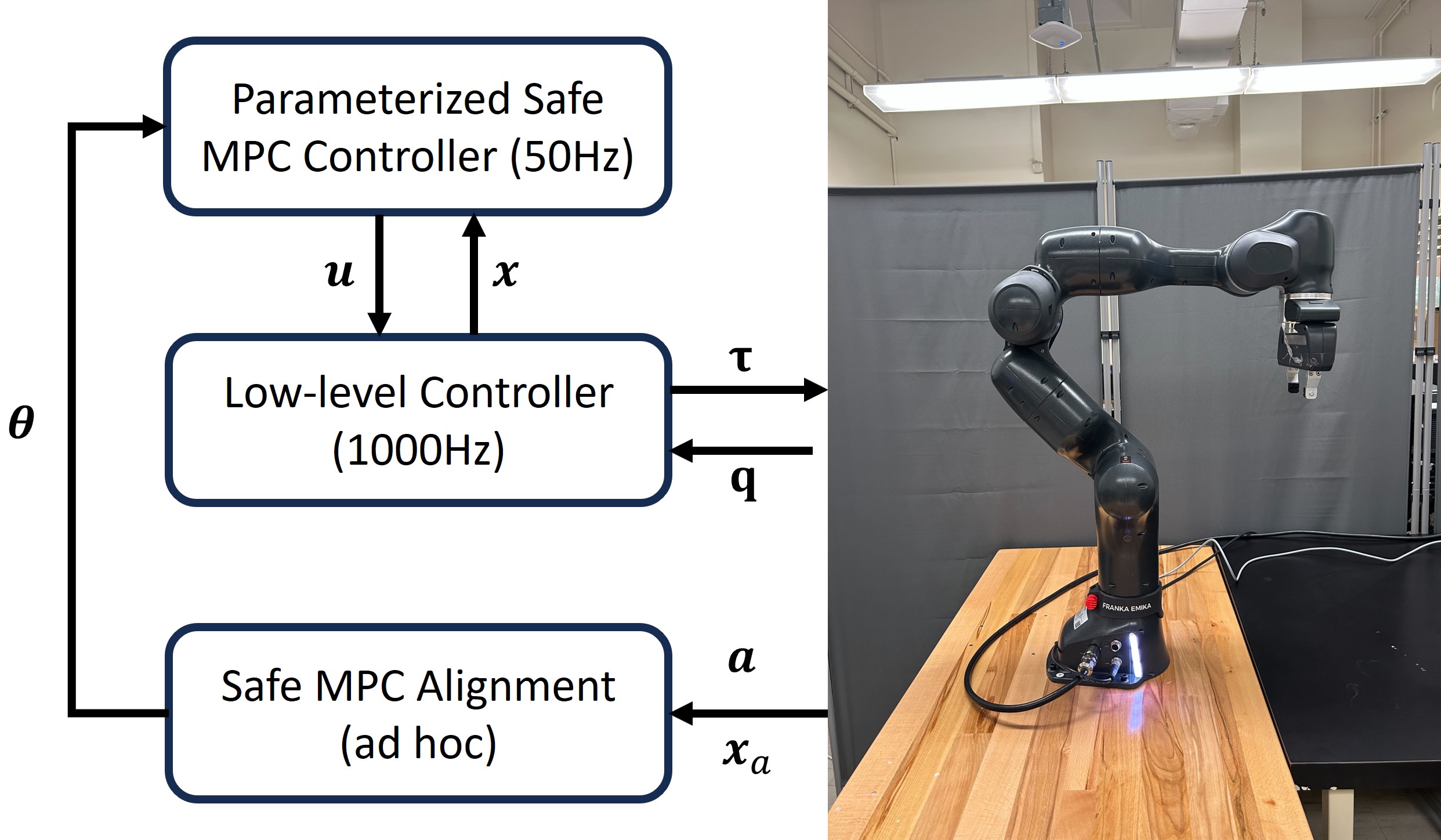}
\caption{System diagram for the hardware implementation. Different blocks indicate separate processes  communicated via ROS.}
\label{fig.real_sys}
\end{figure}

Fig. \ref{fig.real_sys} shows  three main components of the system: the parameterized Safe MPC controller (high-level controller), the low-level controller, and the Safe MPC alignment module. The Safe MPC controller follows the safe MPC formulation of Franka arm reaching game in Appendix \ref{sec.arm_game} (recall that the learnable safety constraint is parameterized by RBFs). The parameters of this Safe MPC controller are listed in Table \ref{tbl.real_params}. The Safe MPC controller receives the end-effector's current pose $\boldsymbol{x}$ and generates the action $\boldsymbol{u}$ (see (\ref{equ.franka_xu}) in Appendix), which is the desired velocity of the end-effector. The low-level controller implements the operational space control to track the $\boldsymbol{u}$ given by the safe MPC controller. 
  
The Safe MPC Alignment module updates the parameter $\boldsymbol{\theta}$ of the Safe MPC controller. The update only happens when the alignment module detects a human correction $\boldsymbol{a}$. The module implements our {Safe MPC Alignment} algorithm. Note that the alignment module and controller modules runs in different processes, thus the learning update does not affect the real-time MPC control of the robot arm. ROS is used for communication between different processes.
\begin{table}[h]
\begin{center}
\caption{parameter list in real-world user study}
\begin{tabular}{cc}
     \hline
     Parameter & Value \\
     \hline
     $ Q_r$ & $\mathrm{diag}(0.8,0.8,0.8)$ \\
     $k_q$ & 0 \\
     $R_v$ & $\mathrm{diag}(30,30,1)$\\
     $R_w$ & $\mathrm{diag}(0.85,0.85,0.85)$ \\
     $S_r$ & $\mathrm{diag}(80,80,80)$  \\
     $\mu_q$ & 80\\
     \hline
\end{tabular}
\label{tbl.real_params}
\end{center}
\end{table}


As shown in Fig. \ref{fig.real_sys_corr}, to give a physical correction to the Franka robot arm, a human user need physically touch/hold any link of the robot arm, and the applied force will immediately activate the zero-torque control mode. Suppose the joint positions of the robot arm before and after human physical correction are $\mathbf{q}_{\text{before}}$ and ${\mathbf{q}_{\text{after}}}$, respectively, we calculate the corresponding raw human correction $\boldsymbol{a}_r$ using
\begin{equation}\label{equ.realworld_a}
    \boldsymbol{a}_r=\mathrm{J}(\mathbf{q}_{\text{before}}) ({\mathbf{q}}_{\text{after}}-\mathbf{q}_{\text{before}}),
\end{equation}
with $\mathrm{J}$ the Jacobian of the end-effector. The calculated $\boldsymbol{a}_r\in\mathbb{R}^6$ is considered as the displacement of the end-effector pose, intended by the human user.

\begin{figure*}[!htpb]
\centering

\subfigure[After 2 Corrections]
    {
        \includegraphics[width=0.25\textwidth]{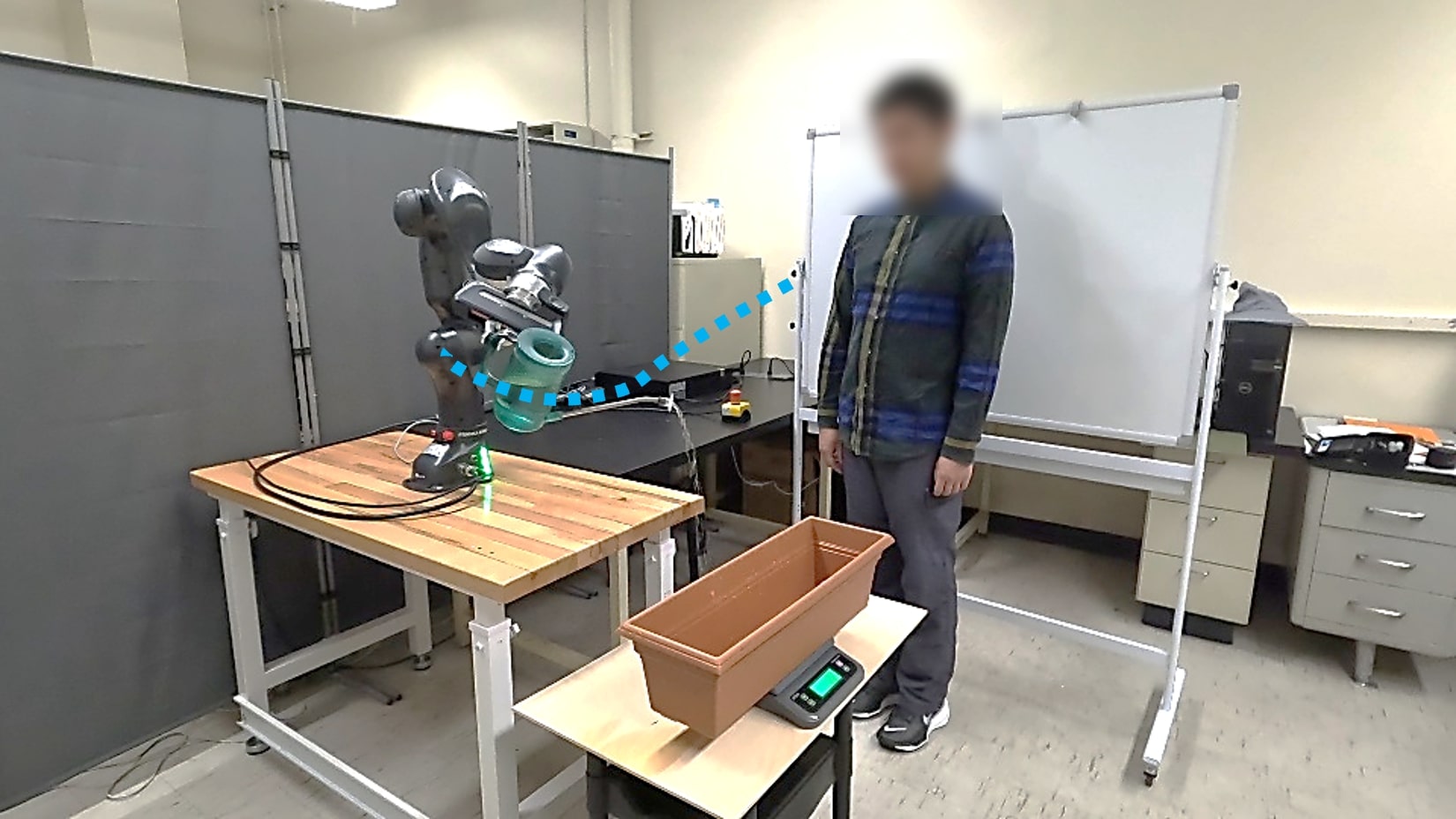}
    } 
\subfigure[After 3 Corrections]
    {
        \includegraphics[width=0.25\textwidth]{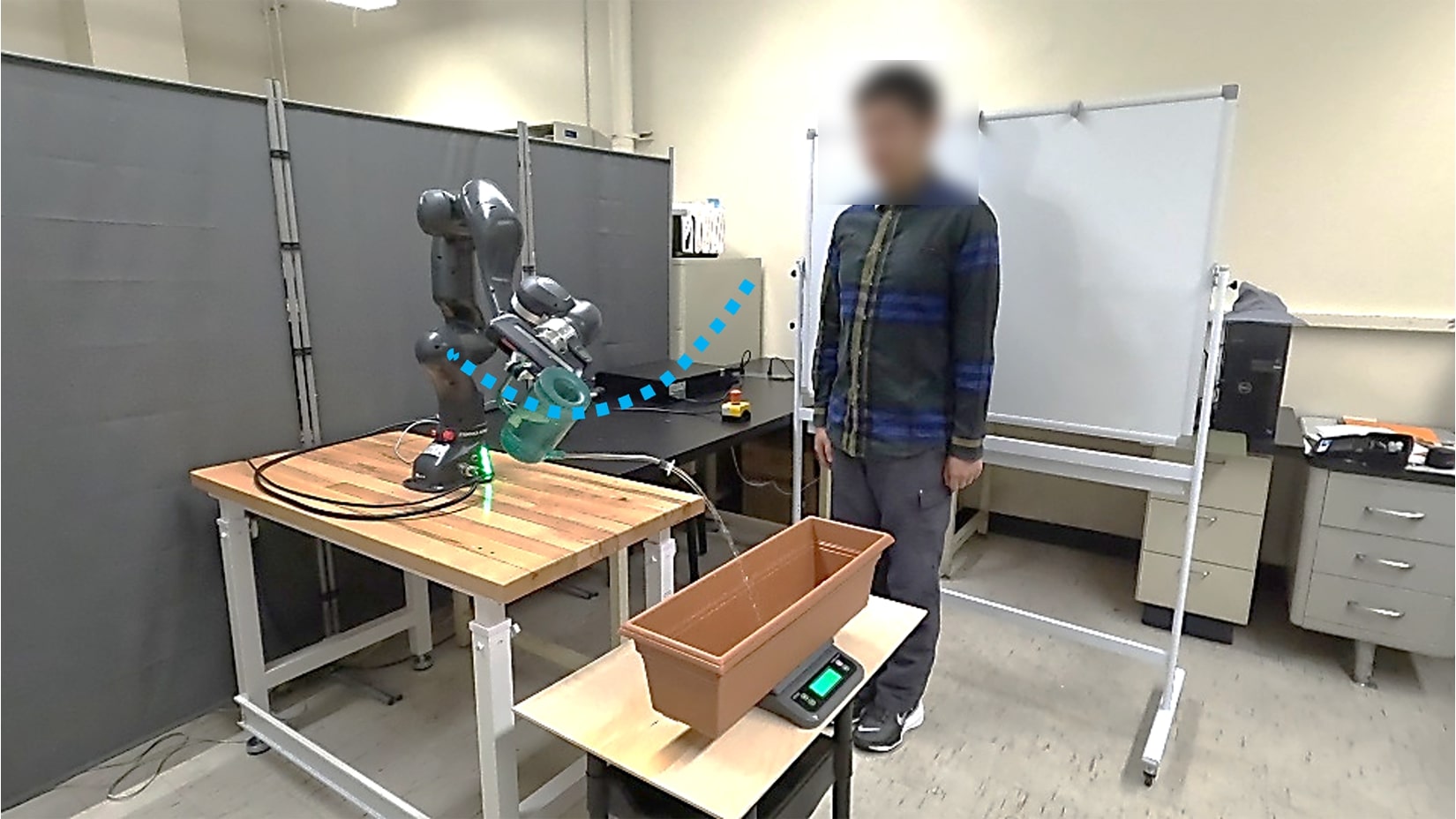}
    }
\subfigure[After 5 Corrections]
    {
        \includegraphics[width=0.25\textwidth]{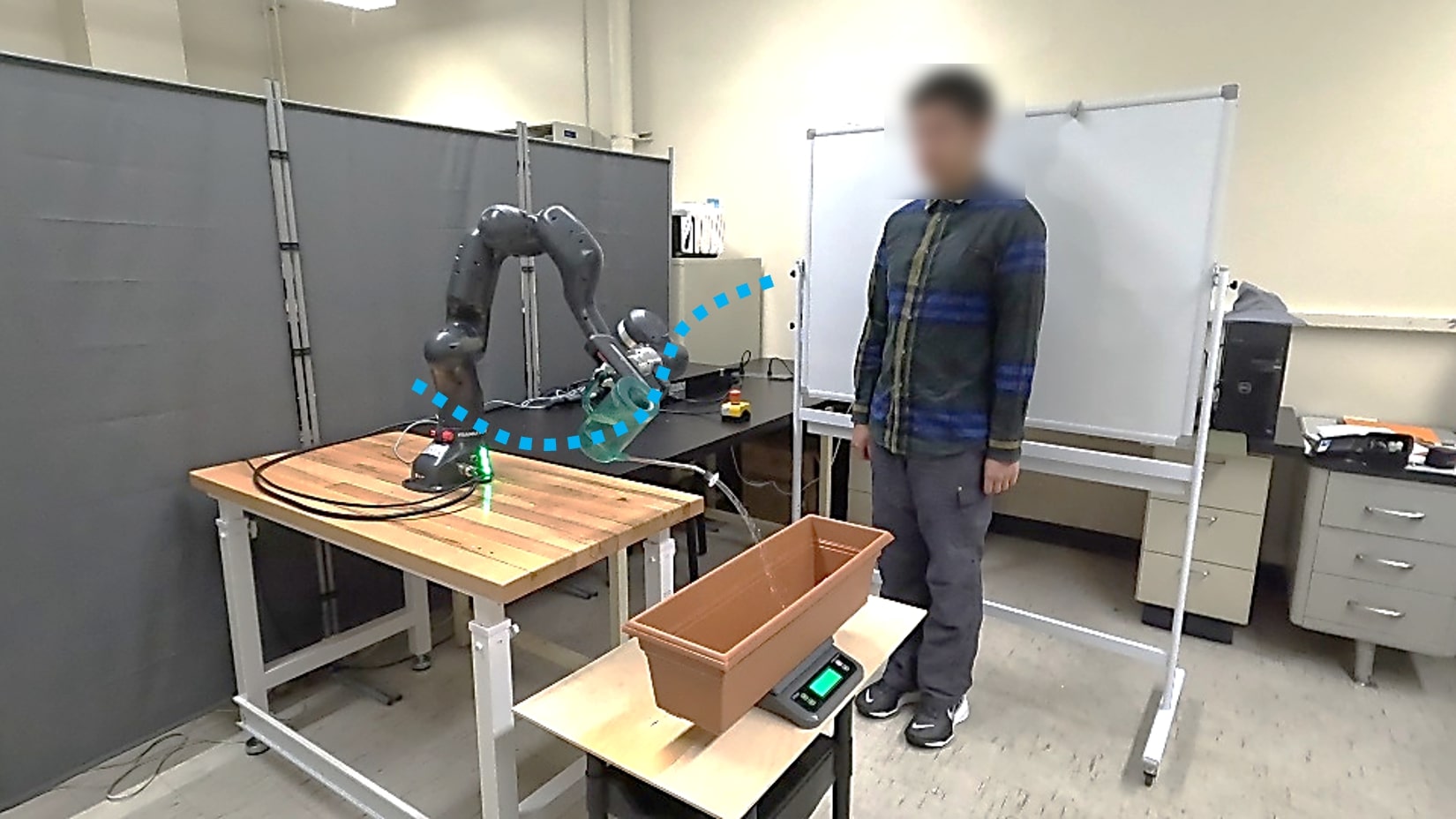}
    }

\caption{Illustrations of one trial of the real-world experiment. The trajectories of the watering pot is denoted by blue dashed lines. (a) After 2 corrections, the arm begins tilting the pot but spills out the water. (b) After 3 corrections, the arm can pour water into the sink but the human is not satisfied with its height and the altitude of pouring. (c) After 5 corrections, the arm learns to tilt only along the direction of the $x-$axis and successfully aligns with human intent.}

\label{fig.real_result}
\end{figure*}

\subsection{Participants, Procedures, and Metrics}
All participants recruited in Section \ref{sec.sim_participants} also participate in the hardware experiment. They are told of the objective before starting trials. One episode is defined as one robot motion from the starting position to the targets, during which the human can apply corrections at any time instances. One learning trial may include multiple episodes  (with a maximum of 15 corrections). During learning, the nozzle of the water pot is sealed, and we switch to the unsealed nozzle and measure the amount of water poured into the flower pot after the user is satisfied with the motion of the robot. The successful trial is defined when the amount of poured water is over 50g. Each user performs 5 trials, and the same metrics in Section \ref{sec.sim_metrics} are recorded.

\subsection{Result and Analysis}
In the real-world experiment, the success rate is $84.0\% \pm 16.7\%$ and the average number of corrections needed for successful trials is $6.88 \pm 3.26$. The results show the efficacy and efficiency of our method in real-world applications. To illustrate the learning progress,  Fig. \ref{fig.real_result} shows the robot behavior after different numbers of corrections in one trial.

Since this real-world Franka arm experiment is similar to the simulated Franka arm reaching game in Appendix \ref{sec.arm_game}, we compare the user study results for both and have the following conclusion and analysis.

\textit{(I) Real-world experiment has slightly lower success rate}: While the difference between simulation and real-world studies is small in success rate, we have identified some factors for such a difference. On human aspects, most of the participants have no experience or knowledge in operating robot hardware, and thus when directly interacting with hardware, their corrections can be 
conservative or ineffective due to safety considerations. On hardware side,  noisy correction signals and the possible timing delay between the recorded correction and the robot state can contribute to failure.


\textit{(II) Real-world experiment needs less human corrections}: This could be attributed to two reasons. First, the constraint needed for accomplishing the water pouring task could be more flexible than that for reaching task in the simulation game. This makes the safety constraints easy to learn for the real-world task. Second, different from simulation games, where human users can only perceive the robot arm from the screen, a human user has a better perception of the robot's motion in the real world. A good perception of the robot's state could help humans give more effective feedback each time, reducing the total number of human interactions for successful robot learning.


\section{ Discussion and Conclusion}
\subsection{Discussion}
Based on the above hardware and simulation user studies, we discuss some design choices and potential limitations of the proposed method.

\subsubsection{Learning Task-Independent Safety Constraints}
In the real-world and simulation experiments,  the learned safety constraint could be  task-specific, instead of task-independent. For instance, in the drone navigation game, since the three task targets are placed above the gate, the learned constraint shown in Fig. \ref{fig.uav_heatmap} can only reflect the upper edge of the unknown gate. One limitation is that the learned constraints have poor generalization to new tasks. For example, in drone navigation, if a new target is placed below the gate, the drone with the learned safety constraint Fig. \ref{fig.uav_heatmap} will fail.

To improve the generalization of the learned safety constraints or learning task-independent safety constraints, one solution is to increase the diversity of task samples. 
By doing so, it is expected that the robot can receive human correction on the entire boundary of the safety region.


\subsubsection{Robustness Against Human Mistakes}
The proposed {Safe MPC Alignment}  is based on the hypothesis cutting. In our formulation,  a hypothesis cut (\ref{equ.core-neq}) is based on an average of multiple human corrections,  such an averaging treatment can mitigate human mistakes for one cut. But still, in practice, human irrationality can lead to the wrong cut of the hypothesis space, which is the main reason for failure in user studies. 
For example, in the drone navigation game, we found that failure game trials usually happen when a human gives too many wrong directional corrections. A  failure case is in Fig. \ref{fig.uav_failure}. 

\begin{figure}[!htpb]
\centering
\includegraphics[width=2.0in]{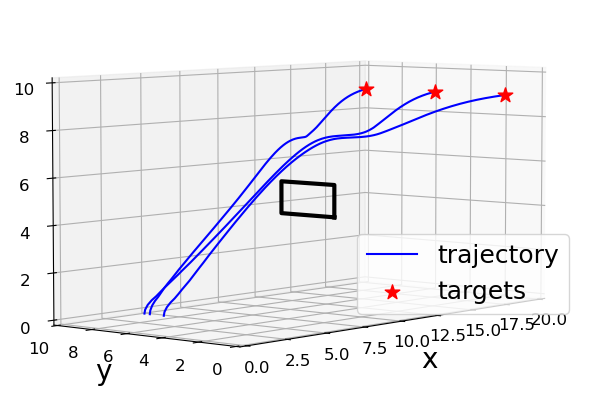}
\caption{A failure case in the drone game user study. A  user makes a wrong "up" correction at the beginning and fails to correct the motion of the drone.}
\label{fig.uav_failure}
\end{figure}

To further improve the robustness of the method against human mistakes, two extensions of the current method can be explored. The first is to add a relaxation factor $\epsilon\geq 0$ in our cutting hypothesis. For example, we can re-write (\ref{equ.core-neq}) as 
\begin{equation}
        \forall \boldsymbol{\theta}_H \in \bar\Theta_H,\ 
    \E_{\mathbf{a}}\left\langle \mathbf{a},-\nabla B(\boldsymbol\xi_{\boldsymbol \theta}^\gamma, \boldsymbol \theta_H) \right\rangle +\epsilon \geq 0.
\end{equation}
Another is to add a selection mechanism for human correction. For example, one can use a voting system to preserve more fraction of the hypothesis space and avoid cutting out the human safety set, as shown in \cite{xie2025robust}.

\subsubsection{Limitation of Weight-Feature Constraints}\label{sec.feature_selection_limitation}
In this work, we assume  the set of task-relevant features in $\boldsymbol{\phi}(\boldsymbol{\xi})$ is selected as a priori. These features define the space in which the safety constraints are represented. However, in practice, identifying appropriate features can be challenging. Poorly chosen features may fail to capture the intended safety constraints, leading to undesired learning outcomes. Furthermore, feature representations are often task-dependent—features that are effective in one task may not generalize to another. This task-specific dependence inherently limits the development of a unified framework for constraint learning across diverse tasks, making the lack of task-generalizable feature selection a key limitation of the proposed method.
At the same time, learning unstructured constraints entirely from scratch would demand extensive human interaction and  involvement, rendering such approaches impractical for most real-world applications. This challenge likely explains why recent works on learning from human feedback \cite{bajcsy2017learning,jin2022learning,losey2022physical} commonly assume task-specific feature structures or prior knowledge of the environment/tasks. 

We  recognize that identifying or learning a universal feature representation that can flexibly describe a wide variety of safety constraints remains an important and open research challenge. We consider this an exciting direction for future research, as it could significantly increase the applicability and effectiveness of safety constraint learning in real world.

\subsection{Conclusion}
This paper proposes a Safe MPC Alignment method to online learn a safety constraint in Safe MPC controllers from human directional feedback. The method, named hypothesis space cutting, has a geometric interpretation: it exponentially cuts the hypothesis space based on each received human directional feedback. To our knowledge, this is the first certifiable method to enable online learning of safety constraints with human feedback.  The method has a provable guarantee for the total number of human feedback needed for learning,  and also can declare hypothesis misspecification.  Numeric examples,  user studies on simulation games, and a user study on the real-world robot arm, covering varying robot tasks, were used to validate the effectiveness of the proposed approach. The results show that the proposed method can learn safety constraints effectively and efficiently.
\bibliographystyle{unsrt}
\bibliography{mybib} 

\appendices

\section{Proof of lemmas and theorems}
\subsection{Proof of Lemma \ref{lemma.bar_theta}}
\label{proof.lemma bar_theta}
\begin{proof}
    We prove this lemma by induction. First, we have $\bar \Theta_H \subset \Theta_0$.
    Suppose $\bar \Theta_H \subset \Theta_{i-1}$, $i=1,2,3,...$.
    From the previous assumptions (\ref{equ.core-neq}) and (\ref{equ.implication_constraint}), $\bar \Theta_H \subset\mathcal{C}_i$. Following the update (\ref{equ.intersection}), we have $\bar \Theta_H \subset \Theta_{i}=\Theta_{i-1} \cap \mathcal{C}_i$, which completes the proof.
\end{proof}

\subsection{Proof of Lemma \ref{lemma.linear_cut}}
\label{proof.lemma_linear_cut}
\begin{proof}
(\ref{equ.pwl_cut}) can be directly obtained from   (\ref{equ.C_steps}), (\ref{equ.barrier}) and (\ref{equ.safey_param}). Based on the definition of the penalty objective function (\ref{equ.barrier}), $\left< \mathbf{\bar{a}}_i,\nabla B(\boldsymbol\xi_{\boldsymbol \theta_i}^\gamma ,\boldsymbol \theta) \right> \leq 0$ is a linear fractional inequality
\begin{equation}
     \left< \mathbf{\bar{a}}_i,\nabla J(\boldsymbol\xi_{\boldsymbol \theta_i}^\gamma ) \right> + \gamma \frac{1}{- g_{\boldsymbol{\theta}}(\boldsymbol\xi_{\boldsymbol \theta_i}^\gamma)} \left< \mathbf{\bar{a}}_i,\nabla g_{\boldsymbol \theta}(\boldsymbol\xi_{\boldsymbol \theta_i}^\gamma ) \right> \leq 0,
     \label{equ.lin_ineq}
\end{equation}
with a positive denominator $- g_{\boldsymbol{\theta}}(\boldsymbol\xi_{\boldsymbol \theta_i}^\gamma)>0$ due to (\ref{equ.C_steps}).

Multiplying the denominator on both side and substituting the weight-feature expression of $g_{\boldsymbol{\theta}}$ in \eqref{equ.safey_param} yields $\boldsymbol{\theta}\tran \boldsymbol h_i \leq b_i$ in Lemma \ref{lemma.linear_cut}. The condition $- g_{\boldsymbol{\theta}}(\boldsymbol\xi_{\boldsymbol \theta_i}^\gamma)>0$ itself yields $\ {\boldsymbol{\theta}}\tran\boldsymbol{\phi}(\boldsymbol\xi_{\boldsymbol \theta_i}^\gamma ) < -\phi_0(\boldsymbol\xi_{\boldsymbol \theta_i}^\gamma )$ in Lemma \ref{lemma.linear_cut}.

This completes the proof.
\end{proof}

\subsection{Proof of Lemma \ref{lemma gh}}
\label{proof.lemma gh}
\begin{proof}
    For any $i=1,2,3,...$, the  plan $\boldsymbol{\xi}_{\boldsymbol \theta_i}^\gamma$ is a solution to the optimization (\ref{equ.robot_mpc_approx}), with $\boldsymbol{\theta}=\boldsymbol{\theta}_i$. Recall Lemma \ref{lemma sol},  $g_{\boldsymbol{\theta}_i}(\boldsymbol \xi_{\boldsymbol{\theta}_i}^\gamma)<0$. Extending the feature-weight expression of $g_{\boldsymbol{\theta}_i}(\boldsymbol \xi_{\boldsymbol{\theta}_i}^\gamma)$, one can get $\boldsymbol{\theta}_i\tran\boldsymbol{\phi}(\boldsymbol\xi_{\boldsymbol \theta_i}^\gamma) < -\phi_0(\boldsymbol\xi_{\boldsymbol \theta_i}^\gamma)$. 
    Since the optimal solution $\boldsymbol{u}^{\boldsymbol{\theta}_i,\gamma}_{0:T-1}$ is a minimizer to (\ref{equ.robot_mpc_approx}), it means
    \begin{equation}
    \label{equ.proof4_eq1}
        \nabla  B(\boldsymbol\xi_{\boldsymbol \theta_i}^\gamma , \boldsymbol \theta_i)=0.
    \end{equation}
    Note that with  $g_{\boldsymbol{\theta}_i}(\boldsymbol \xi_{\boldsymbol{\theta}_i}^\gamma)< 0$, the above $\nabla  B(\boldsymbol\xi_{\boldsymbol \theta_i}^\gamma , \boldsymbol \theta_i)$ is well defined.    
    With (\ref{equ.proof4_eq1}), 
    \begin{equation}\label{equ.proof4_eq2}
        \left< \mathbf a,\nabla B(\boldsymbol\xi_{\boldsymbol \theta_i}^\gamma , \boldsymbol \theta_i) \right> = 0,
    \end{equation}
    always holds. Extending  the above (\ref{equ.proof4_eq2}) leads to
    \begin{equation}
        \left< \mathbf{a},\nabla J(\boldsymbol\xi_{\boldsymbol \theta_i}^\gamma ) \right> - \gamma \frac{1}{g_{\boldsymbol \theta_i}(\boldsymbol\xi_{\boldsymbol \theta_i}^\gamma)}\left< \mathbf{a},\nabla g_{\boldsymbol \theta_i}(\boldsymbol\xi_{\boldsymbol \theta_i}^\gamma) \right> = 0.
    \end{equation}
    Due to  $-g_{\boldsymbol \theta_i}(\boldsymbol\xi_{\boldsymbol \theta_i}^\gamma)>0$, one can multiply  $-g_{\boldsymbol \theta_i}(\boldsymbol\xi_{\boldsymbol \theta_i}^\gamma)$ on both sides of the above equality, which leads to $\boldsymbol{\theta}_i\tran \boldsymbol h_i = b_i$ based on the definition (\ref{equ.h_and_b}). The second term in (\ref{equ.gh}) directly follows $g_{\boldsymbol{\theta}_i}(\boldsymbol \xi_{\boldsymbol{\theta}_i}^\gamma)< 0$. This concludes the proof.
\end{proof}
\subsection{Proof of Lemma \ref{lemma v}}
\label{proof.lemma v}
\begin{proof}
    By Lemma \ref{lemma gh}, the MVE center $\boldsymbol \theta_i=\bar{\boldsymbol d}_{i-1}$ is on the cutting hyperplane $\boldsymbol{\theta}\tran \boldsymbol h_i = b_i$. Directly from \cite{tarasov1988method}, one has
    \begin{equation}
        \frac{\mathbf{Vol} (\Theta_{i-1} \cap \{\boldsymbol \theta|\boldsymbol{\theta}\tran \boldsymbol h_i \leq b_i\})}{\mathbf{Vol} (\Theta_{i-1})} \leq 1-\frac{1}{r}.
    \end{equation}
    Since
    \begin{equation}
        \mathbf{Vol} (\Theta_{i-1} \cap \{\boldsymbol \theta|\boldsymbol{\theta}\tran \boldsymbol h_i \leq  b_i\}) \geq \mathbf{Vol} (\Theta_{i-1} \cap \mathcal{C}_i)=\mathbf{Vol} (\Theta_{i}),
    \end{equation}
    based on (\ref{equ.C_steps}), (\ref{equ.intersection}),
    one can get
    \begin{equation}
       \frac{\mathbf{Vol} (\Theta_{i})}{\mathbf{Vol} (\Theta_{i-1})} \leq 1-\frac{1}{r},
    \end{equation}
   which concludes the proof.
\end{proof}
\subsection{Proof of Theorem \ref{theorem_convergence}}
\label{proof.theorem_convergence}
\begin{proof}
We complete the proof in two parts. First, we prove the number of learning iterations is less than $K$, then we use this to prove there will be a $\boldsymbol{\theta}_k \in \bar\Theta_H$, $k=0,1,2...,K-1.$

We first prove the total number of iterations must be less than $K$ by contradiction. Suppose the learning iteration number $i \geq K$. 
With recursively applying lemma \ref{lemma v}, one can obtain
    \begin{equation}
        \mathbf{Vol} (\Theta_i) \leq (1-\frac{1}{r})^i \mathbf{Vol} (\Theta_0).
    \end{equation}
    When $i \geq K$, 
    \begin{equation}\label{equ.proofthm1_equ1}
        \mathbf{Vol} (\Theta_i) < \frac{\tau_r \rho_H^r}{\mathbf{Vol} (\Theta_0)} \mathbf{Vol} (\Theta_0)=\tau_r \rho_H^r\leq \mathbf{Vol} (\bar\Theta_H)
    \end{equation}
    However,  from Lemma \ref{lemma.bar_theta}, one has $\bar\Theta_H \subset \Theta_i$, meaning $\mathbf{Vol} (\bar\Theta_H) \leq \mathbf{Vol} (\Theta_i)$. 
    This leads to a contradiction with (\ref{equ.proofthm1_equ1}). Therefore, the total number of iterations must be less than $K$. 
    
    Next, we prove $\exists k < K$ such that $\boldsymbol{\theta}_k \in \bar\Theta_H$. In contradiction, suppose $\forall k < K$, $\boldsymbol{\theta}_k \notin \bar\Theta_H$. By Lemma \ref{lemma.bar_theta}, this means that for all $i<K$, $\bar\Theta_H \subset \mathcal{C}_i$. Then, the algorithm will continue for $i=K$. Consequently, the total number of iterations can reach $K$, which contradicts that the algorithm will terminate within $K$ iterations. This concludes the proof.
\end{proof}
\subsection{Proof of Theorem \ref{theorem_misspec}}
\label{proof.theorem_misspec}
\begin{proof}
    We complete the proof in two parts. First we prove the algorithm converges to $\Theta^*$ which has $\mathbf{Vol} (\Theta^*)=0$. Based on this, we next  prove $\Theta^* \cap \mathbf{int}\ \Theta_0= \emptyset$.

    We first prove the algorithm converges to $\Theta^*$ which has $\mathbf{Vol} (\Theta^*)=0$. Given the condition $\Theta_0 \cap \bar\Theta_H=\emptyset$, the algorithm will never terminate because $\forall i=1,2,3,...\infty$, $\mathcal{C}_i \supset \bar\Theta_H$ due to Lemma \ref{lemma.bar_theta}. Since the volume of the initial hypothesis space is finite, by applying Lemma \ref{lemma v} we have 
    \begin{equation}
    \label{equ.zero_vol}
        \mathbf{Vol} (\Theta^*)=\lim_{i\rightarrow \infty}(1-\frac{1}{r})^i \mathbf{Vol} (\Theta_0)=0.
    \end{equation}
    
    Next,  we prove $\Theta^* \cap \mathbf{int}\ \Theta_0= \emptyset$ by contradiction.  Suppose there exists  $\widehat{\boldsymbol{\theta}} \in \Theta^* \cap \mathbf{int}\ \Theta_0$. Define the ball contained in $\bar\Theta_H$ to be  $\mathcal{B}({\boldsymbol{\theta}^c_H}, {\rho_H})=\{\boldsymbol{\theta}\ |\ \norm{\boldsymbol{\theta}-\boldsymbol{\theta}_H^c} \leq \rho_H\} \subseteq \bar\Theta_H$ with $\boldsymbol{\theta}_H^c$ as the center. By the equation (\ref{equ.intersection}), we can note
    \begin{equation}
        \Theta^*=\Theta_0 \cap {\mathcal{C}},
    \end{equation}
    with ${\mathcal{C}}=\bigcap_{i=1}^{\infty} \mathcal{C}_i$ a convex set. So we have 
    \begin{equation}
        \Theta^* \subseteq \mathcal{C}.
    \end{equation}
    By equation (\ref{equ.C_steps}),
    \begin{equation}
        \mathcal{B}_H \subseteq \bar\Theta_H \subseteq \mathcal{C}.
    \end{equation}
    From the relationship between sets above, $\boldsymbol{\theta}_H^c\in \mathcal{C},\widehat{\boldsymbol{\theta}} \in \mathcal{C}$.

    Since $\boldsymbol{\theta}_H^c \notin \Theta_0$, there exists $\mu>0 $ such that
    \begin{equation}
        \widetilde{\boldsymbol{\theta}}=\widehat{\boldsymbol{\theta}} + \mu (\boldsymbol{\theta}_H^c-\widehat{\boldsymbol{\theta}}) \in \mathbf{int}\ \Theta_0.
    \end{equation}
Then, one can define a ball $\mathcal{B}(\widetilde{\boldsymbol{\theta}}, d)$ centered at $\widetilde{\boldsymbol{\theta}}$ with radius
    \begin{equation}
                d = \min_{\boldsymbol{\theta} \in \mathbf{bd}\ \Theta_0} \norm{\widetilde{\boldsymbol{\theta}}-\boldsymbol{\theta}}.
    \end{equation}
Thus, $\mathcal{B}(\widetilde{\boldsymbol{\theta}}, d)\subset \mathbf{int}\ \Theta_0$.

Next, we will show $\mathcal{B}(\widetilde{\boldsymbol{\theta}}, \mu\rho_H)\subset\mathcal{C}$. In fact, for any $\boldsymbol{\theta}$ satisfying $\norm{\boldsymbol{\theta}-\widetilde{\boldsymbol{\theta}}} \leq \mu\rho_H$, one has
    \begin{equation}
    \begin{split}
        \boldsymbol{\theta}&=\mu(\frac{1}{\mu}(\boldsymbol{\theta}-\widetilde{\boldsymbol{\theta}})+\boldsymbol{\theta}_H^c) + \widetilde{\boldsymbol{\theta}}-\mu \boldsymbol{\theta}_H^c\\
        &=\mu\underbrace{(\frac{1}{\mu}(\boldsymbol{\theta}-\widetilde{\boldsymbol{\theta}})+\boldsymbol{\theta}_H^c)}_{\in \mathcal{B}_H \subseteq \mathcal{C}} + (1-\mu) \underbrace{\widehat{\boldsymbol{\theta}}}_{\in \Theta^* \subseteq \mathcal{C}}
    \end{split} .
    \end{equation}
    The convexity of $\mathcal{C}$ leads to $\boldsymbol{\theta} \in \mathcal{C}$, which means the ball $\mathcal{B}_\mu \triangleq \{\boldsymbol{\theta}\ |\ \norm{\widetilde{\boldsymbol{\theta}}-\boldsymbol{\theta}} \leq \mu\rho_H\} \subset \mathcal{C}$.

Let a new ball be $\mathcal{B}(\widetilde{\boldsymbol{\theta}}, \min{(\mu\rho_H, d)})$. Based on the previous two results of  $\mathcal{B}(\widetilde{\boldsymbol{\theta}}, d)\subset \mathbf{int} \Theta_0$ and  $\mathcal{B}(\widetilde{\boldsymbol{\theta}}, \mu\rho_H)\subset\mathcal{C}$, one has $\mathcal{B}(\widetilde{\boldsymbol{\theta}}, \min{(\mu\rho_H, d)})\subset  (\Theta_0 \cap  \widetilde{\mathcal{C}}) = \Theta^*$. As a result,
\begin{equation}
    \mathbf{Vol}(\Theta^*)\geq \mathbf{Vol}\,\,\mathcal{B}(\widetilde{\boldsymbol{\theta}}, \min{(\mu\rho_H, d)})>0
\end{equation}
which contradicts $\mathbf{Vol}(\Theta^*)=0$ from (\ref{equ.zero_vol}). Thus, $\Theta^* \cap \mathbf{int}\ \Theta_0= \emptyset$ must hold. This completes the proof. 
\end{proof}


\section{Settings of Experiments and User Studies}
\subsection{Inverted Pendulum}
In the dynamics of an inverted pendulum (\ref{equ.pendulum_dyn}), the physical parameters is set in Table \ref{tbl.pendulum_params}.
\begin{table}[h]
\begin{center}
\caption{Physics Parameters in Pendulum Numerical Study}
\begin{tabular}{ccc}
     \hline
     Parameter & Value & Interpretation \\
     \hline
     $m$ & 1kg & Weight of the pendulum \\
     \hline
     $l$ & 1m & Length of the pendulum\\
     \hline
     $\mathrm{g}$ & $10\mathrm{m/s}^2$ & Gravitational Acceleration\\
     \hline
     $d$ & 0.1 & Dampening Coefficient\\
     \hline

\end{tabular}
\label{tbl.pendulum_params}
\end{center}
\end{table}
When the system state violates ${g}_{\boldsymbol{\theta}_H}(\boldsymbol{\xi}_{\boldsymbol{\theta}}^\gamma) \leq 0$ or it is near the target, the pendulum state is uniformly randomly reset between $[0,0]\tran$ and $[\frac{2\pi}{3},3]\tran$. Simulated human correction will continue to be applied until the learned parameters $\boldsymbol{\theta}_i$ fall in $\Bar\Theta_H$, which terminates one trial.

\subsection{Simulated Quadrotor Navigation}
\label{appendix.uav_sim}
\subsubsection{Environment}
The quadrotor starts from $[0.0, 0.0, 5.0]\tran$ [m]. The starting quaternion $\boldsymbol{q}_{\text{init}}=[1,0,0,0]\tran$. The target is to navigate to the target position $[15.0, 0.0, 10.0]\tran$[m] within a zigzag tube. The average radius of the tube section is set to be 2.5m.

\subsubsection{Learning Neural Features}
We construct the neural feature $\boldsymbol{\phi}$ using a neural Signed Distance Function (SDF) model trained on synthetic 3D point samples around the tube surface. Each input coordinate (scaled for stability) is first encoded using sinusoidal positional embeddings, then passed through a 5-layer MLP (width 128, ReLU activations). The positional embeddings are skip-connected to the input of the 5th hidden layer. The penultimate layer outputs a 16D feature vector used as $\boldsymbol{\phi}$; the final layer predicts a scalar signed distance supervised during training.

Training data is generated by randomly perturbing surface points on the tube. Signed distance labels are computed as the offset from the nearest point on the tube surface to the sample point, and normals are estimated from the same displacement vector pointing from the nearest surface point to the sample point. The model is trained with three loss terms: $L_2$ distance loss, cosine similarity loss on normals (via auto-diff), and Eikonal loss to regularize gradient norm, which is similar to the work \cite{ortiz2022isdf}. Optimization is done using Adam for 250 epochs with a batch size of 1024.

\subsubsection{Simulated Correction and Stop Behavior}
Correction signals $\boldsymbol{a}_x = [-1,0,1,0]\tran, \boldsymbol{a}_y = [0,1,0,-1]\tran, \boldsymbol{a}_z = [1,1,1,1]\tran$ in drone control space is used throughout the experiments, corresponding to different motion directions in cartesian space. During the process of learning positional constraints with  polynomial features or neural features, if the distance between the quadrotor and the nearest point on the surface of the tube is less than 0.6, a simulated correction $\boldsymbol{a}$ is applied in the action space of the drone, based on the vector $\boldsymbol{d} = (d_x,d_y,d_z)$ pointing from the closest tube surface point to the drone position:
\begin{equation}
    \boldsymbol{a} = d_y \boldsymbol{a}_y + d_z \boldsymbol{a}_z. 
\end{equation}
Intuitively, the simulated correction "pulls" the drone to the center of the tube (designated safe region). After each correction, the navigation restarts, emulating an emergency stop. During the process of learning velocity constraint, Simulated human corrections against the velocity direction of form:
\begin{equation}
    \boldsymbol{a} =-v_x \boldsymbol{a}_x -v_y \boldsymbol{a}_y - v_z \boldsymbol{a}_z
\end{equation}
are applied whenever the velocity $\boldsymbol{v} = (v_x,v_y,v_z)$ exceeds 0.4,  and an emergency stop is triggered if it surpasses 0.45.

\subsection{Quadrotor Navigation Game with Real Human Feedback}
\label{sec.uav_game}
\subsubsection{Environment}
The drone starts from $[0.1, y_{\text{init}}, 0.5]\tran$ with  $y_{\text{init}}$ uniformly randomly in an  interval $[4,6]$. The starting quaternion $\boldsymbol{q}_{\text{init}}=[1,0,0,0]\tran$. We set three different targets at positions $[19, 1, 9]\tran$, $[19, 5, 9]\tran$ and $[19, 9, 9]\tran$, with  quaternion $[1,0,0,0]\tran$. For the narrow gate (yellow rectangle) in Fig. \ref{subfig.drone}, both its size and position are unknown, and one only roughly knows its x position $x_{\text{gate}}\approx 9.8$. 

 Since the location of the gate is unknown, the drone needs to learn a safety constraint in its safe MPC policy with  humans guidance in order to successfully fly through the gate. In the drone MPC policy (\ref{equ.robot_mpc}), the dynamics and its parameters are set following \cite{jin2022learning}; and we generically set the constraint function ${g}_{\boldsymbol{\theta}}(\boldsymbol{\xi})$ using a set of $N$ Gaussian radial basis functions (RBFs)
\begin{equation}\label{equ.uav_safety}
    {g}_{\boldsymbol{\theta}}(\boldsymbol{\xi}) = \phi_0 + \sum\nolimits_{i=1}^{N} \theta_i \phi_i(\boldsymbol{\xi}).
\end{equation}
As discussed in Section \ref{sec.safety_param}, we fix $\phi_0=-1$ to avoid learning ambiguity, and use $ N=20$ temporally accumulated RBFs defined as 
\begin{equation}\label{equ.discounted_rbf}
    \phi_i(\boldsymbol{\xi}) = \sum\nolimits_{t=0}^T \beta_t \mathrm{RBF}_i(\boldsymbol{x}_t),
\end{equation}
where  $\beta_t$ is a weighting factor to balance the immediate and future constraint satisfaction. If one emphasizes instant safety, more weights will be given to the initial time step. In our implementation, we choose an exponentially decaying $\beta_t$ starting at $t=5$, i.e. $\beta_t = 0,\ t < 5$ and $\beta_t = 0.9^{t-5},\ t \geq 5$ to achieve better predictive safety performance. In (\ref{equ.discounted_rbf}), each  RBF is defined as 
\begin{equation}
        \mathrm{RBF}_i(\boldsymbol{x}_t)=
        \begin{cases}
            \exp(-\epsilon^2\norm{r_{xy,t}  - c_{xy,i}}^2),\,\, i =1,...,10 
         \\
            \exp(-\epsilon^2\norm{r_{xz,t}  - c_{xz,i}}^2),\,\, i =11,...,20 
        \end{cases}.      
\end{equation}
Here, the variables $r_{xy,t}=[r_{x,t}, r_{y,t}]\tran$ and $r_{xz,t}=[r_{x,t}, r_{z,t}]\tran$ with  $r_x, r_y, r_z$ from  $\boldsymbol{x}_t$. The $i$th center $c_{xy,i}=[c_{x,i}, c_{y,i}]\tran$ or $c_{xz,i}=[c_{x,i}, c_{z,i}]\tran$ is defined with $c_{x,i}={x_{\text{gate}}}=9.8$, $c_{y,i}$ and $c_{z,i}$ are evenly chosen on interval $[0,10]$. We set $\epsilon=0.45$. The initial hypothesis space $\Theta_0$ for possible $\boldsymbol{\theta}$ in (\ref{equ.uav_safety})  is set as $\boldsymbol{c}_l=[-80,\dots,-80]\tran$ and $\boldsymbol{c}_u=[100,\dots,100]\tran$. $\gamma$ is 50. 
This initialization blocks the drone from advancing to the gate at the beginning of the game, leading to conservative behavior in the learning process. The drone will proceed toward the gate after few corrections.

\smallskip
\subsubsection{Human Correction Interface}
The drone’s motion is visualized in MuJoCo using two complementary camera views: a first-person tracking view and a fixed third-person view. The gate, the current MPC plan, and the target position are always shown to the user. Human users provide corrections through keyboard inputs during flight. Key–command mappings are given in Table \ref{tbl.uav_cmds}. Whenever a correction is detected, the safety constraint (\ref{equ.uav_safety}) in the MPC policy is updated instantly. Pressing ENTER resets the drone to its initial state and reset the target.

\vspace{-10pt}
\begin{table}[h]
\begin{center}
\caption{keyboard commands in uav user study}
\begin{tabular}{m{0.6cm}  m{1.4cm} m{5.0cm}}
     \hline
     Key & Correction & Interpretation \\
     \hline
     "up" & $[-1,0,1,0]\tran$ & Go forward (adding positive y-axis torque) \\
     \hline
     "down" & $[1,0,-1,0]\tran$ & Go  backward  (adding negative y-axis torque) \\
     \hline
     "left" & $[0,1,0,-1]\tran$ & Go  left  (adding negative x-axis torque)   \\
     \hline
     "right" & $[0,-1,0,1]\tran$ & Go right  (adding positive x-axis torque)  \\
     \hline
     "shift" & $[1,1,1,1]\tran$ & Go up (increasing all thrusts)\\
     \hline
     "ctrl" & $-[1,1,1,1]\tran$ & Go down (decreasing all thrusts)\\
     \hline
     "enter" & N/A & Reset robot position and switch target.\\
     \hline
\end{tabular}
\label{tbl.uav_cmds}
\end{center}
\vspace{-10pt}
\end{table}

\subsection{Franka Constrained Reaching Game}
\label{sec.arm_game}
\subsubsection{Environment}
The arm starts at a configuration in which the end-effector position $\boldsymbol{r}_{\text{init}}=[0.22,-0.5,0.48]\tran$ and altitude $\boldsymbol{q}_{\text{init}}=[0,-0.707,0,0.707]\tran$. The reaching target position $\boldsymbol{r}^*$ will  be randomly chosen from three options $[-0.6,-0.5,0.4]\tran$, $[-0.6,-0.5,0.5]\tran$, $[-0.6,-0.5,0.6]\tran$. The reaching target attitude is $\boldsymbol{q}^*=[0,-0.707,0,0.707]\tran$. The slot size and position are unknown except its rough x-position which is $x_{\text{slot}}\approx-0.15$.

We adopt a two-level control scheme, where a safe MPC policy (high-level) is defined for the end effector, which generates the desired velocity for a given low-level operational space controller to track. In the safe MPC policy, we model the end-effector dynamics as
\begin{equation}
    \begin{gathered}
        \dot{\boldsymbol{r}}=\boldsymbol{v}^d, \\ 
        \dot{\boldsymbol{q}} = \frac{1}{2} \Omega(R\tran(\boldsymbol{q}) \boldsymbol{w}^d)\boldsymbol{q},
    \end{gathered}
\end{equation}
where $\boldsymbol{r}$ and $\boldsymbol{q}$ is the position and the quaternion of the arm end-effector in the world frame. $R\tran (\boldsymbol{q})$ is the rotation matrix from the end-effector frame to the body frame. $\Omega(\cdot)$ is the matrix operator of the angular velocity for quaternion multiplication. Define the robot state and control input
\begin{equation}\label{equ.franka_xu}
        \boldsymbol{x}_t=[\boldsymbol{r}_{t},\ \boldsymbol{q}_{t}] \in \mathbb{R}^{7},\quad
        \boldsymbol{u}_t = [\boldsymbol{v}^d_{t},\ \boldsymbol{w}^d_{t}] \in \mathbb{R}^6,
\end{equation}
respectively.

The above end-effector dynamics is discretized with $\triangle t = 0.1 \mathrm{s}$. In (\ref{equ.robot_mpc}), the MPC horizon is  $T=20$.  Let $\boldsymbol{x}^*=[\boldsymbol{r}^*\ \boldsymbol{q}^*]$ 
denote the target end-effector pose. The step cost $c(\boldsymbol{x}_t,\boldsymbol{u}_t)$ and terminal cost $h(\boldsymbol{x}_T)$ in (\ref{equ.robot_mpc}) is set to
\begin{equation}
\begin{aligned}
    c(\boldsymbol{x}_t,\boldsymbol{u}_t) =&(\boldsymbol{r}_{t}-\boldsymbol{r}^*)\tran Q_r (\boldsymbol{r}_{t}-\boldsymbol{r}^*) + k_q \mathbf{diff}(\boldsymbol{q}_t, \boldsymbol{q}^*) 
        \\ &\quad \quad + \boldsymbol{v}_{I,t}\tran R_v \boldsymbol{v}_{I,t} + \boldsymbol{w}_{I,t}\tran R_w \boldsymbol{w}_{I,t},\\
        h(\boldsymbol{x}_T)=&(\boldsymbol{r}_T-\boldsymbol{r}^*)\tran S_r (\boldsymbol{r}_T-\boldsymbol{r}^*) + \mu_q \mathbf{diff}(\boldsymbol{q}_T, \boldsymbol{q}^*).
\end{aligned}
\end{equation}
In this game, the  cost parameters are set as $Q=\mathrm{diag}(2,2,2)$, $k_q=1$,  $R_v=\mathrm{diag}(30,30,1)$, $R_w=\mathrm{diag}(0.85,0.85,0.85)$, $S_r=\mathrm{diag}(80,80,80)$, and $\mu_q=60$. The output of the MPC controller is sent to the low-level controller to track. 

\begin{table}[h]
\begin{center}
\caption{keyboard commands in robot arm user study}
\begin{tabular}{m{0.6cm}  m{2.0cm} m{4.6cm}}
     \hline
     Key & Correction & Interpretation \\
     \hline
     "left" & $[0,0,0,0,0,1]\tran$ & Clock-wise rotation of the gripper \\
     \hline
     "right" & $[0,0,0,0,0,-1]\tran$ & Anti-clock-wise rotation of the gripper\\
     \hline
     "shift" & $[0,0,0,-1,0,0]\tran$ & Go up in world frame\\
     \hline
     "ctrl" & $[0,0,0,1,0,0]\tran$ & Go down in world frame\\
     \hline
     "Enter" & N/A & Reset robot position and switch target.\\
     \hline
\end{tabular}
\label{tbl.arm_cmds}
\end{center}
\end{table}

Similar to Appendix \ref{sec.uav_game}, to constrain $r_z$ and $\boldsymbol{q}$ for proper height and tilt in order to pass through the slot. We use a set of $N=20$ Gaussian radial basis function (RBFs) to parameter the safety constraint: 
\begin{equation}\label{equ.safety_arm}
    {g}_{\boldsymbol{\theta}}(\boldsymbol{\xi}) = \phi_0 + \sum\nolimits_{i=1}^{N} \theta_i \phi_i(\boldsymbol{\xi}),
\end{equation}
with 
\begin{equation}
    \phi_i(\boldsymbol{\xi}) = \sum\nolimits_{t=0}^T \beta_t \mathrm{RBF}_i(\boldsymbol{x}_t), \quad \text{and}
\end{equation}
\begin{equation}
        \mathrm{RBF}_i(\boldsymbol{x}_t){=}
        \begin{cases}
            \sigma(r_{x,t})  k_z \exp{(-\epsilon_z^2(r_{z,t}-c_{z,i})^2)}, \,\,i {=}1,...,10 \\
            \sigma(r_{x,t})  \exp{(-\epsilon_q^2\mathbf{diff}(\boldsymbol{q}_{t}, \boldsymbol{q}_{c,i}))}, \,\,i {=}11,...,20 
        \end{cases}.
        \label{equ.arm_rbf}
\end{equation}
Here, $k_z$ is used to balance the scaling between position and orientation $\mathrm{RBF}$s,
$\sigma(r_{x,t})$ is defined as
\begin{equation}
    \sigma(r_{x,t}) = \mathrm{sigmoid}(50\big{(}(r_{x,t} - x_{\text{slot}})^2 - (\frac{w}{2})^2\big{)}),
\end{equation} which is used to constrain the $\mathrm{RBF}_i$ only taking effect around $x$ range $[x_{\text{slot}}-\frac{w}{2}, x_{\text{slot}}+\frac{w}{2}]$ (recall slot's $x$ location, $\text{slot}$, is a approximate  value). In the above definition, $w=0.3$, $k_z=0.05$, $\epsilon_z=12$ and $\epsilon_q=1.8$. The center of positional $\mathrm{RBF}$, ${c_{z,i}}$, is evenly chosen on interval $[0,1]$, and the center of orientation $\mathrm{RBF}$, ${\boldsymbol{q}_{c,i}}$, is   generated by $(\cos{\frac{\alpha}{2}},\sin{\frac{\alpha}{2}}, 0, 0)*\boldsymbol{q}_{\text{init}}$ with $\alpha$ evenly choosen from $[-\pi/4, 3\pi/4]$ to make sure ${\boldsymbol{q}_{c,i}}$ covers all possible orientations. $\beta_t=0.9^t$ if $t<5$ and $0$ otherwise.
The bounds of the initial hypothesis space  $\Theta_0$   is set to $\boldsymbol{c}_l=[-3,\dots,-3]\tran$ and $\boldsymbol{c}_u=[5,\dots,5]\tran$.  $\gamma$ is set to 1.5. This initialization leads to a conservative MPC policy, preventing the arm from approaching the target at the beginning.

\subsubsection{Human Correction Interface}
The robot arm’s motion is visualized in MuJoCo using a free-view mode (Fig. \ref{subfig.arm}), allowing users to adjust the camera to a preferred angle. The slot and the randomized reaching target remain visible throughout the task. Users provide corrections to the robot arm via keyboard input, following the specified key–command mappings. Whenever a correction is detected, the safety constraint (\ref{equ.safety_arm}) in the Franka arm’s safe MPC policy is updated immediately. Pressing ENTER resets the arm to its initial position and randomizes the reaching target.

\vspace*{-5\baselineskip}
\begin{IEEEbiography}[{\includegraphics[width=1in,height=1.25in,clip,keepaspectratio]{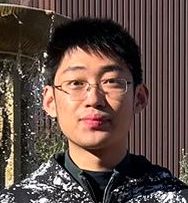}}]{Zhixian Xie}
Zhixian Xie is currently a Ph.D. student in Arizona State University, Tempe, AZ, USA. In 2022, he received B. Eng degree in automation at Shanghai Jiaotong University, Shanghai, China. His research focuses mainly on controls, robotic learning from human and robot dexterous manipulation.
\end{IEEEbiography}
\vspace*{-3\baselineskip}
\begin{IEEEbiography}[{\includegraphics[width=1in,height=1.25in,clip,keepaspectratio]{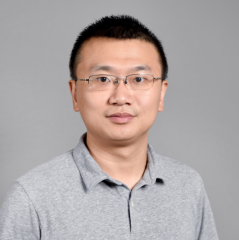}}]{Wenlong Zhang}
Wenlong Zhang is an associate professor in the School of Manufacturing Systems and Networks at Arizona State University, where he serves as the Research Director. He is a core faculty member in the Robotics and Autonomous Systems and Systems Engineering PhD programs. He joined ASU faculty in 2015 after earning his doctoral degree in mechanical engineering from the University of California, Berkeley. At ASU, he leads the Robotics and Intelligent Systems Laboratory (RISE Lab), and teaches undergraduate- and graduate-level classes on dynamics, control systems, optimization, and robotics.
\end{IEEEbiography}
\vspace*{-3\baselineskip}
\begin{IEEEbiography}[{\includegraphics[width=1in,height=1.25in,clip,keepaspectratio]{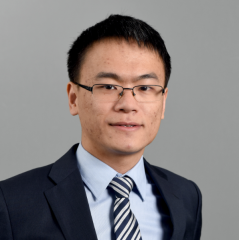}}]{Yi Ren}
Yi (Max) Ren is an associate professor of aerospace and mechanical engineering in the School for Engineering of Matter, Transport and Energy at Arizona State University. he directs the Design Informatics Laboratory at ASU. His group studies computational and data-driven methods that will augment or transform the activities of engineering and industrial design. Ren’s current research interests include optimization, product/configuration design, human-computer interaction and machine learning.
\end{IEEEbiography}
\vspace*{-3\baselineskip}
\begin{IEEEbiography}[{\includegraphics[width=1in,height=1.25in,clip,keepaspectratio]{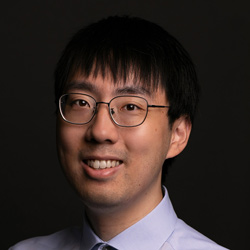}}]{Zhaoran Wang}
Zhaoran Wang is an associate professor with Northwestern University, working at the interface of machine learning, statistics, and optimization. He is the recipient of the AISTATS (Artificial Intelligence and Statistics Conference) notable paper award, ASA
(American Statistical Association) best student paper in statistical learning and data mining, INFORMS (Institute for Operations Research and the Management Sciences) best student paper finalist in data mining, Microsoft PhD Fellowship, SimonsBerkeley/J.P. Morgan AI Research Fellowship, Amazon Machine Learning Research Award, and NSF CAREER Award.
\end{IEEEbiography}
\vspace*{-1\baselineskip}
\begin{IEEEbiography}[{\includegraphics[width=1in,height=1.25in,clip,keepaspectratio]{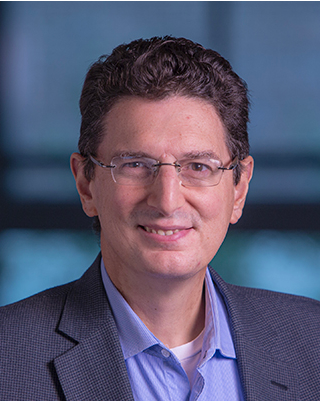}}]{George J. Pappas}
George J. Pappas is the UPS Foundation Professor at the Department of Electrical and Systems Engineering at the University of Pennsylvania. He also holds a secondary appointment in the Departments of Computer and Information Sciences, and Mechanical Engineering and Applied Mechanics. Previously, he served as the Deputy Dean for Research in the School of Engineering and Applied Science. Pappas’s research interest focus on control systems, robotics and autonomous systems, formal methods, machine learning for safe and secure cyber-physical systems. He has received numerous awards including the NSF PECASE, the Antonio Ruberti Young Researcher Prize, the George S. Axelby Award, the O. Hugo Schuck Best Paper Award, and the George H. Heilmeier Faculty Excellence Award. Pappas has mentored more than fifty students and postdocs, now faculty in leading universities worldwide. He is a Fellow of IEEE, IFAC, and was elected to the National Academy of Engineering in 2024.
\end{IEEEbiography}
\vspace*{-1\baselineskip}
\begin{IEEEbiography}[{\includegraphics[width=1in,height=1.25in,clip,keepaspectratio]{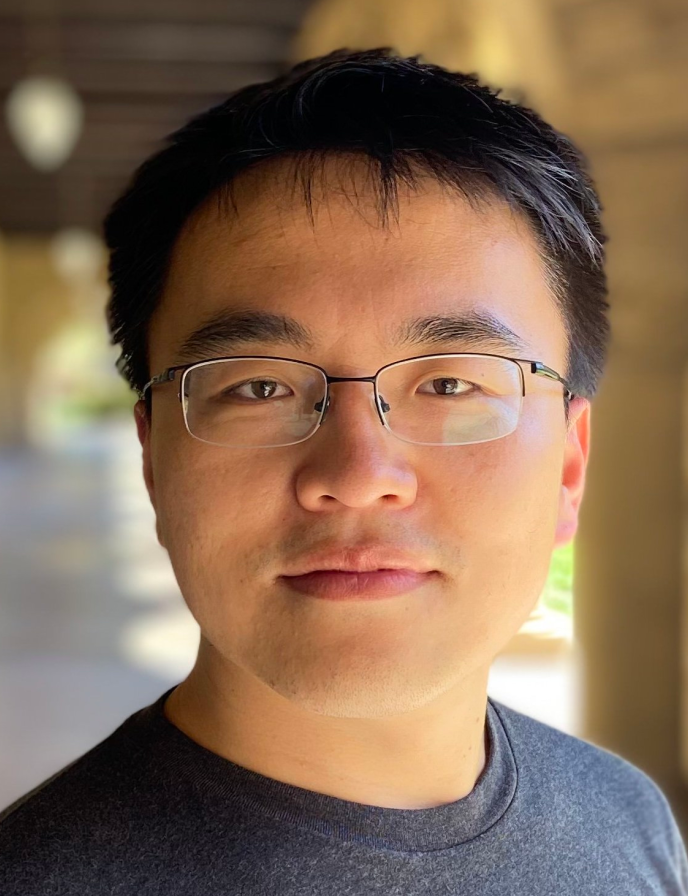}}]{Wanxin Jin}
Wanxin Jin is an assistant professor the School for Engineering of Matter, Transport and Energy at Arizona State University (ASU). Previously, he was a postdoctoral fellow at the GRASP Laboratory at the University of Pennsylvania. He obtained his doctorate from Purdue University in 2021. His research includes robotics, control and machine learning, with a focus on the autonomy of robots interacting with humans and objects.
\end{IEEEbiography}
\vfill

\end{document}